\documentclass[10pt]{article}

\usepackage[utf8]{inputenc}
\usepackage[T1]{fontenc}

\usepackage{epsf}
\usepackage{amsmath}

\allowdisplaybreaks

\usepackage[showframe=false]{geometry}
\usepackage{changepage}

\usepackage{epsfig}
\usepackage{amssymb}

\usepackage{amsthm}
\usepackage{setspace}
\usepackage{cite}
\usepackage{mcite}

\usepackage{algorithmic}  
\usepackage{algorithm}

\usepackage{shadow}
\usepackage{fancybox}
\usepackage{fancyhdr}

\usepackage{color}
\usepackage[usenames,dvipsnames,svgnames,table]{xcolor}
\newcommand{\bl}[1]{\textcolor{blue}{#1}}
\newcommand{\red}[1]{\textcolor{red}{#1}}

\definecolor{mypurple}{rgb}{.4,.0,.5}

\usepackage[hyphens]{url}

\usepackage[colorlinks=true,
            linkcolor=black,
            urlcolor=blue,
            citecolor=purple]{hyperref}

\usepackage{breakurl}

\def\y{{\bf y}}

\def\x{{\bf x}}

\def\x{{\mathbf x}}

\def\x{{\bf x}}
\def\y{{\bf y}}
\def\z{{\bf z}}
\def\q{{\bf q}}
\def\k{{\bf k}}

\def\a{{\bf a}}
\def\b{{\bf b}}
\def\c{{\bf c}}

\def\cB{{\cal B}}

\def\be{\begin{equation}}
\def\ee{\end{equation}}
\def\ba{\left[\begin{array}}
\def\ea{\end{array}\right]}

\def\x{{\bf x}}
\def\y{{\bf y}}
\def\z{{\bf z}}
\def\q{{\bf q}}
\def\a{{\bf a}}
\def\b{{\bf b}}
\def\c{{\bf c}}

\def\p{{\bf p}}

\def\1{{\bf 1}}

\def\G{{\bf G}}

\def\g{{\bf g}}
\def\0{{\bf 0}}

\def\erfc{\mbox{erfc}}

\def\mR{{\mathbb R}}
\def\mZ{{\mathbb Z}}

\def\mS{{\mathbb S}}
\def\mB{{\mathbb B}}
\def\mP{{\mathbb P}}

\def\lp{\left (}
\def\rp{\right )}

\def\y{{\bf y}}

\def\x{{\bf x}}

\def\x{{\mathbf x}}

\def\x{{\bf x}}
\def\y{{\bf y}}
\def\z{{\bf z}}
\def\q{{\bf q}}
\def\a{{\bf a}}
\def\b{{\bf b}}
\def\c{{\bf c}}

\def\cB{{\cal B}}

\def\be{\begin{equation}}
\def\ee{\end{equation}}
\def\ba{\left[\begin{array}}
\def\ea{\end{array}\right]}

\def\x{{\bf x}}
\def\y{{\bf y}}
\def\z{{\bf z}}
\def\q{{\bf q}}
\def\a{{\bf a}}
\def\b{{\bf b}}
\def\c{{\bf c}}

\def\p{{\bf p}}

\def\({\left (}
\def\){\right )}

\def\1{{\bf 1}}

\def\q{{\bf q}}

\def\G{{\bf G}}

\def\g{{\bf g}}
\def\0{{\bf 0}}

\def\cX{{\mathcal X}}

\def\cS{{\mathcal S}}
\def\cI{{\mathcal I}}
\def\cZ{{\mathcal Z}}

\def\cX{{\mathcal X}}
\def\cQ{{\mathcal Q}}

\usepackage{xcolor}
\usepackage{color}

\definecolor{darkgreen}{rgb}{0, 0.4,0}

\definecolor{purplebrown}{rgb}{0.5,0.1,0.6}

\definecolor{ultclupcol}{rgb}{0.1,0.5,0.5}

\definecolor{mytrycolor}{rgb}{0.5,0.7,0.2}

\definecolor{ultclupcola}{rgb}{.5,0,.5}

\definecolor{shadebrown}{rgb}{0.1,0.1,0.9}
\definecolor{lightblue}{rgb}{0.2,0,1}

\usepackage{fancybox}
\usepackage{graphicx}
\usepackage{epstopdf}
\usepackage{epsfig}
\usepackage{wrapfig}
\usepackage{subfigure}

\usepackage{xcolor}
\usepackage{tcolorbox}

\newtcbox{\xmybox}{on line,
arc=7pt,
before upper={\rule[-3pt]{0pt}{10pt}},boxrule=0pt,
boxsep=0pt,left=6pt,right=6pt,top=0pt,bottom=0pt,enhanced, coltext=blue, colback=white!10!yellow}

\newtcbox{\xmyboxa}{on line,
arc=7pt,
before upper={\rule[-3pt]{0pt}{10pt}},boxrule=0pt,
boxsep=0pt,left=6pt,right=6pt,top=0pt,bottom=0pt,enhanced, colback=white!10!yellow}

\newtcbox{\xmyboxb}{on line,
arc=7pt,
before upper={\rule[-3pt]{0pt}{10pt}},boxrule=1pt,colframe=darkgreen!100!blue,
boxsep=0pt,left=6pt,right=6pt,top=0pt,bottom=0pt,enhanced, colback=white!10!yellow}

\newtcbox{\xmyboxc}{on line,
arc=7pt,
before upper={\rule[-3pt]{0pt}{10pt}},boxrule=.7pt,colframe=blue!100!blue,
boxsep=0pt,left=6pt,right=6pt,top=0pt,bottom=0pt,enhanced, coltext=blue, colback=white!10!yellow}

\newtcbox{\xmytboxa}{on line,
arc=7pt,
before upper={\rule[-3pt]{0pt}{10pt}},boxrule=.0pt,colframe=pink!50!yellow,
boxsep=0pt,left=6pt,right=6pt,top=0pt,bottom=0pt,enhanced, coltext=white, colback=blue!40!red}

\newtcbox{\xmytboxb}{on line,
arc=7pt,
before upper={\rule[-3pt]{0pt}{10pt}},boxrule=.0pt,colframe=pink!50!yellow,
boxsep=0pt,left=6pt,right=6pt,top=0pt,bottom=0pt,enhanced, coltext=white, colback=white!40!green}

\makeatletter
\newcommand\subsubsubsection{\@startsection{paragraph}{4}{\z@}{-2.5ex\@plus -1ex \@minus -.25ex}{1.25ex \@plus .25ex}{\normalfont\normalsize\bfseries}}
\newcommand\subsubsubsubsection{\@startsection{subparagraph}{5}{\z@}{-2.5ex\@plus -1ex \@minus -.25ex}{1.25ex \@plus .25ex}{\normalfont\normalsize\bfseries}}
\makeatother

\newtheorem{theorem}{Theorem}

\newtheorem{conjecture}{Conjecture}

\begin{document}

\begin{singlespace}

\title {Ultrametric OGP - parametric RDT  \emph{symmetric} binary perceptron connection  
}
\author{
\textsc{Mihailo Stojnic
\footnote{e-mail: {\tt flatoyer@gmail.com}}
}}
\date{}
\maketitle

\centerline{{\bf Abstract}} \vspace*{0.1in}

In \cite{Stojnicalgbp25,Stojnicalgsbp26,Stojniccluphop25}, a \emph{parametric fully lifted random duality theory} (fl-RDT) framework is introduced as a viable analytical method for characterizing \emph{statistical computational gaps} (SCGs). For example, a study of the \emph{symmetric binary perceptron} (SBP) \cite{Stojnicalgsbp26} obtained an \emph{algorithmic} constraint density threshold estimate of $\alpha_a\approx \alpha_c^{(7)}\approx 1.6093$ at the 7th level of lifting (for the canonical margin/threshold $\kappa=1$). With a predicted convergence tendency toward the $1.59-1.60$ range, these results closely approach the corresponding $\alpha_{LE}\approx 1.58$ replica method local entropy (LE) prediction from \cite{Bald20}.

In this paper, we further connect parametric RDT to overlap gap properties (OGPs), another key geometric feature of the solution space. Specifically, for any positive integer $s$, we consider $s$-level ultrametric OGPs ($ult_s$-OGPs) and rigorously upper-bound the associated constraint densities $\alpha_{ult_s}$. To achieve this, we develop an analytical union-bounding program consisting of combinatorial and probabilistic components. By casting the combinatorial part as a convex problem and the probabilistic part as a nested integration, we conduct numerical evaluations and obtain that the tightest bounds at the first two levels, $\bar{\alpha}_{ult_1} \approx 1.6578$ and $\bar{\alpha}_{ult_2} \approx 1.6219$, closely approach the 3rd and 4th lifting level parametric RDT estimates, $\alpha_c^{(3)} \approx 1.6576$ and $\alpha_c^{(4)} \approx 1.6218$. We also observe excellent agreement across other key parameters, including overlap values and the relative sizes of ultrametric clusters.

Based on these observations, we propose several conjectures linking $ult$-OGP and parametric RDT. Specifically, we conjecture that $\alpha_a=\lim_{s\rightarrow\infty} \alpha_{ult_s} = \lim_{s\rightarrow\infty} \bar{\alpha}_{ult_s} = \lim_{r\rightarrow\infty} \alpha_{c}^{(r)}$, and $\alpha_{ult_s} \leq \alpha_{c}^{(s+2)}$ (with possible equality for some (maybe even all) $s$). Finally, we discuss the potential existence of a full isomorphism connecting all key parameters of $ult$-OGP and parametric RDT.

\vspace*{0.25in} \noindent {\bf Index Terms: Symmetric binary perceptrons; ultrametric OGP; fl-RDT; statistical-computational gaps}.

\end{singlespace}

\section{Introduction}
\label{sec:back}

The significant progress in AI over the last two decades is largely driven by theoretical and algorithmic developments in machine learning and neural networks from previous decades. Classical perceptrons, both binary (BP) and spherical (SP), have been essential as both architectural components and prototype models for complex structures. Given their analytical tractability, ability to emulate artificial reasoning, and remarkable simplicity, they remain a primary focus of AI research.

Among the various features of perceptrons, storage or classifying capacity is perhaps the most significant. This critical data density, $\alpha_c$, determines a perceptron's success as a storing memory. The relevance of capacity and its analytical characterizations, recognized already in early pattern recognition studies \cite{Wendel62,Winder,Cover65}, continues to be vital today. Research into these concepts has historically interconnected fields such as cognitive science, logic, psychology, information theory, optimization, and statistical physics, providing the robust theoretical support necessary for modern AI.

\subsection{Perceptrons key milestones}
\label{sec:analdiff}

A significant body of influential work \cite{Gar88,GarDer88,SchTir02,SchTir03,Tal05,Talbook11a,Talbook11b,StojnicGardGen13,StojnicGardSphNeg13} has built upon initial SP considerations \cite{Wendel62,Winder,Cover65}. As overall understanding has deepened, focus has shifted toward different architectures. For example, the positive spherical perceptrons (PSPs) are now very well understood; analytical studies \cite{SchTir02,SchTir03,StojnicGardGen13} have utilized convexity/strong deterministic duality, provided a substantial upgrade on \cite{Wendel62,Winder,Cover65}, and rigorously proved statistical physics replica predictions \cite{Gar88,GarDer88}.  

In contrast, the negative spherical perceptrons (NSPs) lack these features, presenting an analytical challenge \cite{StojnicGardSphNeg13,FPSUZ17,FraHwaUrb19,FraPar16,FraSclUrb19,FraSclUrb20,AlaSel20,BMPZ23}. Simple  shift to a negative threshold removes the convexity advantage, ultimately requiring more sophisticated approaches \cite{Stojnicsflgscompyx23,Stojnicnflgscompyx23,Stojnicflrdt23,Stojnicnegsphflrdt23}.

Similarly, convexity and strong deterministic duality are absent in BPs. Simple replica symmetric (RS) predictions \cite{Gar88,GarDer88,StojnicDiscPercp13} do not apply to asymmetric BPs (ABP), which instead require replica symmetry breaking (RSB) frameworks \cite{KraMez89,DingSun19,NakSun23,BoltNakSunXu22,Huang24,Stojnicbinperflrdt23}. However, symmetric BPs (SBP) offer a unique case: while RS predictions still do not hold, favorable underlying combinatorics allow for elegant analytical characterizations \cite{AbbLiSly21b,PerkXu21,AbbLiSly21a,AubPerZde19,GamKizPerXu22}.

\subsubsection{Satisfiability and algorithmic thresholds}
\label{sec:satalg}

The capacity $\alpha_c$ discussed in a majority of the above papers relates to achievability limits.
It represents the \emph{satisfiability} threshold for maximal data density given unlimited computational resources. On the other hand, the introduction of an alternative \emph{algorithmic} threshold, $\alpha_a$, allows one to define the density achievable through computationally efficient methods. While $\alpha_a \leq \alpha_c$ always holds, the existence of a (nonzero) statistical-computational gap (SCG), defined as $SCG = \alpha_c - \alpha_a$, measures the actual utilization of the perceptron's predicted power.

Determining the size of this gap remains a significant challenge. While worst-case complexity theory suggests that ABP and SBP are NP-complete \cite{Ama91}, such a complexity assessment may often fail to adequately reflect typical algorithmic solvability. For instance, in ABPs, while $\alpha_c \approx 0.8331$, current algorithms suggest $\alpha_a \approx 0.75 - 0.77$ \cite{BaldassiBBZ07,Bald15,BMPZ23} (for existence of efficient algorithms in a wide range of $\alpha<\alpha_c$, see also  \cite{BrZech06,BaldassiBBZ07,Hubara16,KimRoc98}). This indicates a gap that is notably more favorable than what NP theory might predict (for algorithmic implications of similar type in other problems (including planted ones) see, e.g., \cite{MMZ05,GamarSud14,GamarSud17,GamarSud17a,AchlioptasR06,AchlioptasCR11,GamMZ22,BarbierKMMZ18,KMSSZ12a}). Conversely, for SBPs in the $\alpha \rightarrow 0$ regime, the best known algorithms \cite{BanSpen20} are in $\kappa\sim\sqrt{\alpha}$ range (where $\kappa$ is the SBPs margin/threshold) and  remain significantly below the corresponding $\alpha_c$ scaling.

\subsection{Relevant prior work}
\label{sec:examples}

Despite extensive studies across various scientific fields over the last several decades, computational gaps continue to be a mystery. While significant progress has been made on specific problems, a generic resolution remains elusive.

In recent years, two concepts have emerged as key geometric properties of the solution space for hard optimization problems: the Overlap Gap Property (OGP) \cite{Gamar21,GamarSud14,GamarSud17,GamarSud17a,AchlioptasCR11,HMMZ08,MMZ05} and Local Entropy (LE) \cite{Bald15,Bald16,Bald20,BarbAKZ23,Stojnicabple25}. These properties are presumably correlated with the mismatch between satisfiability (information-theoretic) and algorithmic thresholds, as well as the existence of statistical computational gaps (SCG). While our primary interest is their relevance to developing efficient algorithms, we also emphasize that their focus on the clustering of typical and atypical solutions has yielded results of independent value. Below, we briefly review the aspects of these two concepts that have been of main interest over the last few years.

\subsubsection{Overlap gap properties (OGPs) }
\label{sec:prwrkogp}

The OGP based approaches toward resolving SCG mysteries \cite{Gamar21,GamarSud14,GamarSud17,GamarSud17a,AchlioptasCR11,HMMZ08,MMZ05} propose connecting algorithmic efficiency to gaps in the spectrum of attainable near-solutions overlaps. They suggests that the absence of such gaps implies the existence of efficient algorithms, viewing $\alpha_a$ as the maximal $\alpha$ that ensures the absence of OGP. For instance, in the case of SBP (which is of our interest here), the presence of OGP extends well below $\alpha_c$ \cite{GamKizPerXu22,Bald20} (see also \cite{GamKizPerXu23} for analogous discrepancy minimization results). If OGP is indeed algorithmically relevant, this would strongly support the existence of a SCG in SBPs. However, there are two key points to keep in mind: 1) The generic hardness implications of OGP have been disproved by the shortest path counterexample \cite{LiSch24} (earlier disproving examples were viewed as potential anomalies/exceptions due to their simple algebraic structure); and  2) While \cite{LiSch24} provides a counterexample, it does not disprove the relevance of OGP for specific algorithms and/or other problems.

The $2$-spin Ising Sherrington-Kirkpatrick (SK) model \cite{SheKir72} is perhaps the most prominent hard example where OGP concepts are rigorously confirmed as highly relevant. As demonstrated in \cite{Montanari19}, the absence of OGP directly implies the polynomial solvability of this model (more on analogous $p$-spin and  NSP considerations can be found in  \cite{AlaouiMS22,AlaouiMS21} and  \cite{AlaSel20,AMZ24}, respectively; closely related spherical SK models  results can be found in  \cite{Subag17,Subag17a,Subag21,Subag24} whereas the martingale based related reversals of Parisi functional are discussed in \cite{JCM25}; for more sophisticated OGPs and their relevance see \cite{Kiz23,HuangS22,HuangS22a,HuangSell24,HuangSell23} as well as a more thorough discussion presented later on in Section \ref{sec:ogpother}). While the role of OGP in generic algorithmic hardness is still being determined, its presence is known to prevent efficient implementations across several algorithmic classes \cite{GamKizPerXu22, HuangS22, HuangSell23, HuangSell24} (for example, OGP type of reasoning has been successfully utilized for the hardness analysis of a specific class of online algorithms \cite{DuGH25,BhaGG26,GhaKM25,GamKW25}). Furthermore, for many problems, practical algorithms exist within $\alpha$ ranges where OGP is absent \cite{RahVir17, GamarSud14, GamarSud17, GamAW24, Wein22}.

Regarding the OGP in the SBP context, excellent sets of results are obtained in \cite{Bald20,GamKizPerXu22}. In \cite{GamKizPerXu22}, the authors analyzed 2-OGP and 3-OGP, obtaining upper bounds on constraint densities $\alpha_{2ogp} \leq 1.71$ and $\alpha_{3ogp} \leq 1.667$, respectively. Higher OGPs were not pursued, as $\infty$-OGP yielded looser bounds than $1.667$. \cite{Bald20} investigated this by applying rigorous methods to 4-OGP (as well as 2- and 3-OGPs) and replica methods to higher OGPs. However, the conclusions remain consistent with \cite{GamKizPerXu22}: while these methodologies produce tight upper bounds within certain parameter ranges (specifically replica numbers and pairwise distances), they eventually become loose. This leaves a significant gap between OGP results and the local entropy algorithmic threshold prediction of approximately $1.58$.

\subsubsection{Local entropy (LE) }
\label{sec:prwrkle}

Unlike the OGP, \cite{Huang13, Huang14} connects algorithmic hardness to the entropy of \emph{typical} solutions. While a frozen isolation of typical solutions has been predicted and proven for the SBP \cite{PerkXu21, AbbLiSly21a, AbbLiSly21b}, an alternative approach using local entropy (LE) has been proposed to study \emph{atypical}, well-connected clusters \cite{Bald15, Bald16, Bald20}.

Even when predominant solutions are disconnected and presumably unreachable via local search \cite{Huang13, Huang14, PerkXu21, AbbLiSly21b}, rare atypical clusters may still exist. These rare clusters are predicted to be well connected and  precisely what efficient algorithms identify (recent justifications for this include sampling-type arguments for SBP \cite{ElAlGam24} and algorithmic considerations for the ABP \cite{GongHLS26}, which shows that no stable algorithms can find (with probability close to 1) isolated solutions and thereby suggests that reaching some type of cluster may be a prerequisite for algorithmic efficiency). If this portrayal is correct, it suggests a direct correlation between the properties of these rare clusters and the existence of SCGs.

\cite{Bald15,Bald16,Bald20} also speculate that LE features—specifically negativity, monotonicity, and breakdown—may reflect the impact of rare cluster structures on algorithmic hardness. This phenomenology is further supported by the findings of \cite{AbbLiSly21a}, which demonstrate that SBP maximal diameter clusters exist when $\alpha$ is sufficiently small. Additionally, \cite{AbbLiSly21a} showed that similar clusters of linear diameter exist for any $\alpha < \alpha_c$. It is worth noting that, subject to certain technical assumptions, these SBP results also translate to ABP.

Reconnecting LE to OGP, it should be noted that findings in \cite{BarbAKZ23} demonstrate that small $\alpha$ SBP LE results—including both rigorous contiguous and 1RSB versions—align with the OGP predictions outlined in \cite{GamKizPerXu22}. Aside from a logarithmic term, these also match the best-known algorithmic results achieved in \cite{BanSpen20}. Finally, a parametric RDT approach introduced in \cite{Stojnicalgsbp26} suggests that allowing for non-physical arbitrary ordering of the $\c$-sequence (a key RDT parameter) at the $r\rightarrow\infty$ lifting level leads to the algorithmic threshold. Numerical evaluations show a strong agreement with LE predictions from \cite{BarbAKZ23,Bald20}. Furthermore, results of \cite{Stojnicalgbp25,Stojniccluphop25} indicate that these parametric RDT concepts may be applicable to a broader range of problems.

The existence of an OGP — LE correspondence is a reassuring development that may help demystify the connection between these phenomena and SCGs. Below, we present a set of rigorous results that further substantiate this connection.

\subsection{Contextualization of our contributions}
\label{sec:context}

\subsubsection{Parametric RDT}
\label{sec:prwrkparrdt}

\cite{Stojnicalgbp25,Stojnicalgsbp26,Stojniccluphop25} developed a \emph{parametric random duality theory} (RDT) framework as an alternative method for characterizing SCGs. This framework relies on the paradigm that the $\c$ sequence—a key RDT parameter—should be of an arbitrary ordering type rather than a naturally decreasing one. While removing the decreasing restriction creates a counterintuitive and presumably unphysical scenario, no mathematical reasoning appears to directly prevent it. This remains a possible option, provided that an adequate interpretation can be established.

Studying several well known problems including both feasibility  (ABPs \cite{Stojnicalgbp25} and  SBPs \cite{Stojnicalgsbp26}) and  standard optimal objective seeking ones (negative Hopfield models \cite{Stojniccluphop25}),   parametric RDT links arbitrary $\c$ sequence ordering to algorithmic thresholds and SCGs.  For example, as stated above, studying SBP  and allowing for the arbitrary ordering of the  $\c$ sequence,  \cite{Stojnicalgsbp26} obtained that the critical constraint density estimate associated with the 7th RDT lifting level—in the canonical scenario with a margin/threshold of $\kappa=1$—is $\alpha_c^{(7)}\approx 1.6093$. Furthermore, a strong convergence tendency is observed with the predicated converging value, $\lim_{r\rightarrow \infty }\alpha_c^{(r)}$, expected to fall within the 1.59–1.60 range and match the algorithmic threshold $\alpha_a = \lim_{r\rightarrow \infty }\alpha_c^{(r)}$. These results closely approach the local entropy replica-based predictions of $\alpha_{LE}\approx 1.58$ from \cite{Bald20}.

An almost identical set of observations was the primary result of \cite{Stojnicalgbp25}, which studied ABPs. Specifically, in the ABP's canonical scenario (where the margin is 0, unlike the SBP), \cite{Stojnicalgbp25} obtained a critical constraint density estimate of $\alpha_c^{(5)}\approx 0.7764$ at the 5th RDT lifting level. A strong convergence tendency was observed, and the presumably converging value $\lim_{r\rightarrow \infty }\alpha_c^{(r)}$ is predicted to be in the $0.775-0.776$ range. This value is also expected to match the algorithmic threshold $\alpha_a = \lim_{r\rightarrow \infty }\alpha_c^{(r)}$. Furthermore, these results were observed to align well with $\alpha_{LE}\approx 0.77-0.78$, obtained via local entropy considerations in \cite{Stojnicabple25,Bald16}.
 
Similar conclusions are reached in \cite{Stojniccluphop25}, with the distinction that the underlying negative Hopfield models are classical optimal objective-seeking problems. In that context, the algorithmic thresholds differ slightly as they are associated with the algorithmically achievable value of the objective. 

Finally, utilizing the \emph{controlled loosening-up} (CLuP) optimizing formalism \cite{Stojnicclupsk25}, algorithmic CLuP variants have been designed for all problems considered in \cite{Stojnicalgbp25,Stojnicalgsbp26,Stojniccluphop25}. Their performance is shown to closely match the analytical algorithmic predictions.

\subsubsection{Our contributions}
\label{sec:contrib}

The results of \cite{Stojnicalgbp25,Stojnicalgsbp26} suggest that parametric RDT relates not only to computational gaps but also to local entropy. In this paper, we explore potential connections to Overlap Gap Properties (OGPs). Using the SBP as a convenient model, we consider its ultrametric OGPs ($ult$-OGP). Below is a summary of our $ult$-OGP contributions as presented throughout the paper.

\begin{itemize}
  \item For any positive integer $s$, we are considering $s$-level ultrametric OGPs ($ult_s$-OGPs). We associate these with $\alpha_{ult_s}$, defined as the smallest constraint density for which the solution space of the SBP exhibits $ult_s$-OGP. Within a standard statistical context and utilizing a union-bounding methodology, we provide rigorous (probabilistic) bounds, $\bar{\alpha}_{ult_s}$, on $\alpha_{ult_s}$.

  \item We develop an analytical program consisting of a combinatorial and a probabilistic component. By casting the former as a convex program and formulating the latter as a nested integration, we are able to conduct the underlying numerical evaluations.

  \item  Following our evaluations, we find $\bar{\alpha}_{ult_1}\approx 1.6578$ and $\bar{\alpha}_{ult_2}\approx 1.6219$ as the tightest bounds on the first and second levels of ultrametricity. These estimates align remarkably well with the critical constraint density estimates $\alpha_c^{(3)} \approx 1.6576$ and $\alpha_c^{(4)} \approx 1.6218$ obtained on the 3rd and 4th lifting levels of parametric RDT in \cite{Stojnicalgsbp26}.

   \item Furthermore, we observed excellent overall agreement across all key parameters, including overlap values and the relative sizes of the ultrametric clusters.

   \item  Several conjectures that directly link $ult$-OGP and parametric RDT are formulated. In particular we conjecture for any margin $\kappa$ (i.e., not only for the canonical $\kappa=1$)
      \begin{equation}\label{eq:inteq1}
        \lim_{s\rightarrow \infty} \alpha_{uls_s}(\kappa) 
= \lim_{s\rightarrow \infty}   \bar{\alpha}_{uls_s} (\kappa) 
= \lim_{r\rightarrow \infty} \alpha^{(r)}_c(\kappa),
      \end{equation} 
and even stronger
      \begin{equation}\label{eq:inteq2}
         \alpha_{uls_s}(\kappa) 
 \leq   \alpha^{(s+2)}_c(\kappa), \forall s\geq 1,
      \end{equation} 
with the last inequality possibly being equality for some (maybe even all) $s$.

\item We discuss the potential existence of a full isomorphism between $ult$-OGP and parametric RDT, where not only the $\alpha$ values but also all other critical parameters of both concepts align.

\item Since \cite{Stojnicalgsbp26} has already suggested a potential connection between parametric RDT and local entropy (LE), the $ult$-OGP – parametric RDT connection discussed in this paper further links $ult$-OGP to LE. This ultimately suggests that it may be possible for all three concepts to achieve accurate $\alpha_a$ characterizations.
     
\end{itemize}

\section{SBP -- basics}
 \label{sec:bprfps}

\subsection{Alternative feasibility/optimization SBP formulations}
 \label{sec:bprfps}

Let $m$ and $n$ be two positive integers. For $G\in\mR^{n\times n}$, $\b\in\mR^{m\times 1}$, and $\cX\in\mR^n$, the following linearly constrained class of \emph{feasibility} problems is of our interest
\begin{eqnarray}
\hspace{-1.5in}\mbox{$\mathbf{\mathcal F}(G,\b,\cX,\alpha)$:} \hspace{1in}\mbox{find} & & \x\nonumber \\
\mbox{subject to}
& & G\x\geq \b \nonumber \\
& & \x\in\cX. \label{eq:ex1}
\end{eqnarray}
We consider \emph{linear/proportional}  large dimensional regime with constraint density $\alpha\triangleq \lim_{n\rightarrow\infty} \frac{m}{n}$ remaining constant as $m$ and $n$ grow. Many perceptron types discussed earlier are special cases of (\ref{eq:ex1}). For example, taking $\cX=\{\x | \| \x \|_2=1\} \triangleq \mS^m$ gives PSP when $\b\geq 0$  \cite{StojnicGardGen13,GarDer88,Gar88,Schlafli,Cover65,Winder,Winder61,Wendel62,Cameron60,Joseph60,BalVen87,Ven86,SchTir02,SchTir03} and NSP when $\b< 0$ \cite{AMZ24,BMPZ23,FPSUZ17,Talbook11a,FraHwaUrb19,FraPar16,FraSclUrb19,FraSclUrb20,AlaSel20,StojnicGardSphNeg13,Stojnicnegsphflrdt23}; taking $\cX=\left \{-\frac{1}{\sqrt{n}},\frac{1}{\sqrt{n}} \right \}^n \triangleq \mB^n$ gives ABP \cite{Talbook11a,StojnicGardGen13,GarDer88,Gar88,StojnicDiscPercp13,KraMez89,GutSte90,KimRoc98,NakSun23,BoltNakSunXu22,PerkXu21,CXu21,DingSun19,Huang24,Stojnicbinperflrdt23,LiSZ24} and so on.

Our foucs is on SBPs \cite{AubPerZde19,AbbLiSly21a,AbbLiSly21b,Bald20,GamKizPerXu22,PerkXu21,ElAlGam24,SahSaw23,Barb24,djalt22,BarbAKZ23}. They are another specialization of (\ref{eq:ex1}) where $\cX=\mB^n$ and linear constraints are remodeled to become $|G\x |\leq \b$. Formally, SBP is the following variant of (\ref{eq:ex1})  
\begin{eqnarray}
\hspace{-1.5in}\mbox{$\mathbf{\mathcal S}(G,\b,\alpha)$:} \hspace{1in}\mbox{find} & & \x\nonumber \\
\mbox{subject to}
& & |G\x|\leq \b \nonumber \\
& & \x\in \left \{-\frac{1}{\sqrt{n}},\frac{1}{\sqrt{n}} \right \}^n \triangleq \mB^n. \label{eq:ex1a0a0}
\end{eqnarray}
The above is SBP with generic thresholds $\b$. Further specialization to $\b=\kappa \1$ (throughout the paper $\1$ stands for the column vector of all ones; the dimension of the vector will be clear form the context; here, for example, it is an $m$-dimensional vector of all ones) gives the traditional SBP formulation
\begin{eqnarray}
\hspace{-1.5in}\mbox{$\mathbf{\mathcal S}(G,\kappa,\alpha)$:} \hspace{1in}\mbox{find} & & \x\nonumber \\
\mbox{subject to}
& & |G\x|\leq \kappa \1 \nonumber \\
& & \x\in \left \{-\frac{1}{\sqrt{n}},\frac{1}{\sqrt{n}} \right \}^n \triangleq \mB^n. \label{eq:ex1a0}
\end{eqnarray}

SBPs are also directly related to the so-called discrepancy minimization problems typically seen in computer science literature \cite{KaKLO86,Spen85,LovMek15,GamKizPerXu23,Roth17,AlwLiuSaw21}
\begin{eqnarray}
\hspace{-1.5in}\mbox{$\mathbf{\mathcal D}(G,\alpha)$:} \hspace{1in}\min_{\x} & & |\G\x| \nonumber \\
\mbox{subject to}
 & & \x\in \left \{-\frac{1}{\sqrt{n}},\frac{1}{\sqrt{n}} \right \}^n \triangleq \mB^n. \label{eq:ex1a1}
\end{eqnarray}
It is useful to note that (\ref{eq:ex1a0}) and (\ref{eq:ex1a1})  are fundamentally different optimization problems. (\ref{eq:ex1a0}) is a feasibility problem and (\ref{eq:ex1a1})  is an objective minimization. Nonetheless, $\mathbf{\mathcal D}(G,\alpha)$ can be used to solve $\mathbf{\mathcal S}(G,\kappa,\alpha)$. The optimal objective in  (\ref{eq:ex1a1})  is practically the minimal $\kappa$ that ensures feasibility of $\mathbf{\mathcal S}(G,\kappa,\alpha)$.  In other words, solving  (\ref{eq:ex1a1}) and checking whether $\kappa$ in  (\ref{eq:ex1a0}) upper bounds the obtained objective optimum suffices to determine feasibility of (\ref{eq:ex1a0}). 

For what follows we find it useful to note  the following alternative optimization reformulation of (\ref{eq:ex1a0a0}) \cite{Stojnicalgsbp26}
\begin{eqnarray}
\xi_{SBP}
& \triangleq  &
 \min_{\x\in \mB^n, \| \z \|_{\infty} \leq \kappa} \max_{\y\in\mS_+^m}  \lp \y^TG\x - \kappa \y^T\z \rp,
 \label{eq:ex3}
\end{eqnarray}
where $\mS_+^m$ is the  positive orthant part of the $m$-dimensional unit sphere  (i.e., $\mS_+^m=\{\y|\|\y\|_2=1,\y\geq 0\}$). One 
then has 
\begin{eqnarray}
  \mathbf{\mathcal S}(G,\kappa,\alpha)  \mbox{ is infeasible }  \quad \Longleftrightarrow   \quad\xi_{SBP}>0 . \label{eq:ex1a3a0}
\end{eqnarray}
This further implies that for all practical purposes  the optimal objective seeking (\ref{eq:ex3}) is basically equivalent to  and can replace  $\mathbf{\mathcal S}(G,\kappa,\alpha)$.

\subsection{Information-theoretic and algorithmic capacities -- computational gap}
 \label{sec:caprole}

Depending on whether $G$ is random or not, one has statistical or deterministic perceptrons. Our focus will be on the classical Gaussian perceptrons, where the components of $G$ are independent standard normals.

We now mathematically formalize the storage/classifying capacity notion mentioned in previous sections. As a key measure of a perceptron's efficacy, storage capacity is defined as the minimal constraints density for which $\mathbf{\mathcal S}(G,\kappa,\alpha)$ is infeasible. The capacity can be a statistical or deterministic quantity depending on the choice of $G$. Our focus is on random $G$ and, following the practice established in \cite{StojnicGardGen13,StojnicDiscPercp13,Stojnicbinperflrdt23,Stojnicalgbp25,Stojnicalgsbp26}, we define the SBP's statistical storage capacity as
 \begin{eqnarray}
\alpha & = &    \lim_{n\rightarrow \infty} \frac{m}{n}  \nonumber \\
\alpha_c(\kappa) 
& \triangleq & \min \{\alpha |\hspace{.08in}  \lim_{n\rightarrow\infty}\mP_G\lp {\mathcal S}(G,\kappa,\alpha) \hspace{.07in}\mbox{is infeasible} \rp\longrightarrow 1\}
 \nonumber \\
& = &  \min \{\alpha |\hspace{.08in}  \lim_{n\rightarrow\infty}\mP_G\lp  \xi_{SBP}>0\rp\longrightarrow 1\}.
  \label{eq:ex4}
\end{eqnarray}
Throughout this presentation, we adopt the convention that subscripts next to $\mathbb{P}$ and/or $\mathbb{E}$ (if present) denote the source of randomness. Since our analytical considerations are statistical in nature, we may omit emphasizing that they hold with high probability when it is clear from the context.

Practically speaking, the capacity notion characterizes the critical constraint density below which SBP operates properly. It also hints at the phase-transitioning component of the underlying randomness. For example, when $\alpha > \alpha_c$, $\cS(G, \kappa, \alpha)$ is in the UNSAT phase with no feasible solutions. Conversely, when $\alpha < \alpha_c$, $\cS(G, \kappa, \alpha)$ is feasible and in the SAT phase. A transition between these two phases occurs precisely at $\alpha_c$, where an exponentially large set of solutions in the SAT phase shrinks to an empty set in the UNSAT phase \cite{KraMez89, NakSun23, BoltNakSunXu22, DingSun19, Huang24, Stojnicbinperflrdt23}. This \emph{satisfiability threshold} represents the theoretical limit of SBP's storage power.

The degree to which SBP's storage power is utilizable depends critically on the ability to efficiently determine weights $\mathbf{x}$ in (\ref{eq:ex1a0a0}). The discrete nature of the problem ensures that finding a feasible $\mathbf{x}$ is possible throughout the SAT phase ($\alpha < \alpha_c$). However, identifying the specific portions of the SAT regime where this process is also computationally efficient remains an extraordinary challenge.

To formalize this, we distinguish between two $\alpha$ SAT phase regimes, beginning with the \emph{algorithmic threshold}, $\alpha_a(\kappa)$
  \begin{equation}
 \alpha_a(\kappa) 
= \alpha_{\underline{a}}(\kappa) 
 \triangleq  \max \{\alpha |\hspace{.08in}  \lim_{n\rightarrow\infty}\mP_G\lp \mbox{a feasible $\x$ in $ {\mathcal S}(G,\kappa,\alpha)$ can be found in polynomial time} \rp\longrightarrow 1\}.
  \label{eq:ex4a0}
\end{equation}
We also set
 \begin{equation}
 \alpha_{\bar{a}}(\kappa) 
 \triangleq  \min \{\alpha |\hspace{.08in}  \lim_{n\rightarrow\infty}\mP_G\lp \mbox{no feasible $\x$ in $ {\mathcal S}(G,\kappa,\alpha)$ can be found in polynomial time} \rp\longrightarrow 1\},
  \label{eq:ex4a0a0}
\end{equation}
and observe
\begin{eqnarray}
 \alpha_a(\kappa)  =  \alpha_{\underline{a}}(\kappa)  \leq  \alpha_{\bar{a}}(\kappa)  \leq \alpha_c(\kappa).
  \label{eq:ex4a1}
\end{eqnarray}
Under concentrations one may expect $\alpha_{\underline{a}}(\kappa)  =  \alpha_{\bar{a}}(\kappa) $ which allows to switch focus on
\begin{eqnarray}
 \alpha_a(\kappa)    \leq \alpha_c(\kappa).
  \label{eq:ex4a1a0}
\end{eqnarray}
The primary challenge involves determining whether the remaining inequality in (\ref{eq:ex4a1a0}) is strict. If so, the satisfiability threshold, $\alpha_c(\kappa)$, and the algorithmic threshold, $\alpha_a(\kappa)$, are not equal. This would indicate the presence of a nonzero statistical-computational gap (SCG)
 \begin{eqnarray}
SCG \triangleq  \alpha_c(\kappa) -\alpha_a(\kappa).
  \label{eq:ex4a2}
\end{eqnarray}
The above question and its potential resolution have direct consequences for the classical P vs. NP problem and its stronger statistical variants. Utilizing the fully lifted random duality theory (fl-RDT) machinery from \cite{Stojnicflrdt23}, a parametric RDT approach was proposed in \cite{Stojnicalgsbp26} to tackle this challenge.

The results provided in Table \ref{tab:tab1} are obtained in \cite{Stojnicalgsbp26} and represent the estimates of $\alpha_c(1)$ at different lifting levels. It was observed that $\alpha_c^{(2)}(1)$ matches $\alpha_c(1)$, achieving the storage capacity or satisfiability threshold. Furthermore, \cite{Stojnicalgsbp26} notes that the estimates on the first two lifting levels are obtained by restricting the c-parameter—a key parametric RDT component—to a decreasingly ordered sequence. By removing this restriction on higher lifting levels ($r > 2$) and allowing for arbitrary $\c$-sequence ordering, \cite{Stojnicalgsbp26} obtained the values listed in Table \ref{tab:tab1}. These results suggest that the $\alpha_c^{(r)}(1)$ sequence likely converges at a fast rate. The potentially converging value, $\lim_{r\rightarrow\infty} \alpha_c^{(r)}(1)$, appears to be in the 1.59–1.60 range, which aligns remarkably well with earlier algorithmic threshold predictions obtained via replica analysis in \cite{Bald20}. Finally, in the small constraint density regime ($\alpha \rightarrow 0$), \cite{Stojnicalgsbp26} obtains an explicit $\alpha_a, \kappa$ connection on the third lifting level: $\kappa \approx 1.2385\sqrt{\frac{\alpha_a}{-\log(\alpha_a)}}$. This qualitatively matches the OGP-based predictions of \cite{GamKizPerXu22} and identically matches the local entropy-based predictions of \cite{BarbAKZ23}, including the value of the scaling constant.
 
\begin{table}[h]
\caption{fl-RDT estimates of  $\alpha_c(1)$ as functions of lifting level $r$}\vspace{.1in}
\centering
\def\arraystretch{1.2}
 \begin{tabular}{||c||c||c||c||c||c||c||c||}\hline\hline
 \hspace{-0in}$r$                                              & $1$    &  $2$   &  $3$   &  $4$   &  $5$   &  $6$   &  $7$    \\ \hline\hline
 $\alpha_c^{(r)}(1)$  & \bl{$\mathbf{4.2250}$}  & \bl{$\mathbf{1.8159}$}  & \red{$\mathbf{1.6576}$}  & \red{$\mathbf{1.6218}$}  & \red{$\mathbf{1.6093}$}  & \red{$\mathbf{1.6041}$}  & \red{$\mathbf{1.6021}$}  \\ \hline\hline
  \end{tabular}
\label{tab:tab1}
\end{table}

The proposed idea and parametric RDT approach are not restricted to SBPs. In \cite{Stojnicalgbp25}, a similar approach was applied to ABP, yielding results that closely match the local entropy predictions from \cite{Bald16} and \cite{Stojnicabple25}. While both SBP and ABP are feasibility problems, this methodology applies successfully to a wider range of contexts. For instance, in \cite{Stojniccluphop25}, it is used to characterize the computational gap of the negative Hopfield model—a classical optimization problem. Furthermore, by utilizing the controlled loosening-up (CLuP) formalism, practical algorithmic variants have been designed for these cases that closely approach the predicted performance.

\section{Ultrametric OGP}
\label{sec:ultogp}

Building on the \cite{Stojnicalgsbp26}'s established connection between parametric RDT and the local entropy of atypical clusters \cite{BarbAKZ23, Bald20}, we are investigating whether a similar connection exists with the Overlap Gap Property (OGP). To explore this, we consider a specific OGP type called \emph{ultrametric} OGP ($ult$-OGP). Our findings demonstrate that its associated constraint densities align closely with parametric RDT predictions \cite{Stojnicalgsbp26}.

\subsection{Ult-OGP preliminaries}
\label{sec:ultogpprel}

We first recall the practical prototype of the OGP definition and the associated constraint density. We focus on the essential underlying mathematical concepts here; for full definitions and an introductory survey, see, e.g., \cite{Gamar21}. For an extensive list of algorithmic hardness/solvability implications, see, e.g.,     \cite{BreHuang21,Kiz23,GamJag21,Gametal23,GamAW24,GamarnikJW20,GamarSud14,GamarSud17,GamarSud17a,RahVir17,Wein22,CGPR19,HuangSell24,HuangSell23,HuangS22}).  

We start with the simplest possible 2-OGP and consider $\x^{(1)}$ and $\x^{(2)}$ as two solutions of (\ref{eq:ex1a0}) that have overlap $q$. For a fixed $q\in(0,1)$ let 
\begin{equation}\label{eq:ultogp1}
  \cX_0(q) = \left\{ \x^{(1)},\x^{(2)} \hspace{.03in} | \x^{(1)},\x^{(2)}\in\cB, ,  \lp\x^{(1)}\rp^T\x^{(2)} =q  \right \}.  
\end{equation}
Then 
\begin{equation}\label{eq:ultogp2}
 \min_{q\in(0,1)} \lim_{n\rightarrow \infty }\mP_G(\exists \x^{(1)},\x^{(2)}  \in\cX_0(q) \hspace{.03in} |  \hspace{.03in} |G\x^{(1)}|\leq \kappa\1, |G\x^{(2)}|\leq \kappa\1)\rightarrow 0 
\quad  \implies \quad 
  \cS(G,\kappa,\alpha) \mbox{ exhibits $2$-OGP}.   
\end{equation}
The above practically means that if $2$-OGP is present then there will be an overlap that no two solutions likely have.  One then associates with (\ref{eq:ultogp2}) the following critical constraint density 
\begin{equation}\label{eq:ultogp2a0}
 \alpha_{2ogp}(\kappa) \triangleq \min\left \{\alpha \hspace{.03in}  | \hspace{.03in} 
  \cS(G,\kappa,\alpha) \mbox{ exhibits $2$-OGP} \right \}.   
\end{equation}

Generalizations to $k$-OGP where $k$ is an integer larger than $2$ are also possible. Define set $\cQ$ as
\begin{equation}\label{eq:ultogp3}
\cQ(k) = \left \{ Q\hspace{.03in} | \hspace{.03in} Q\in \mR^{k\times k}, Q = (1-q)I + q\1\1^T , q\in(0,1) \right \}, 
\end{equation}
where $I$ is an identity matrix (throughout the paper $I$ may have different size which will be clear from the context). For a $Q$ from $\cQ(k)$ define set $\cX(k;Q)$ as
\begin{equation}\label{eq:ultogp4}
  \cX(k;Q) = \left\{ X \hspace{.03in} |  \hspace{.03in}  X\in \mR^{n\times k}, X^TX =Q, \mbox{ and } \forall i, X_{:,i}\in\cB \right \},  
\end{equation}
where $X_{:,i}$ is the $i$-th column of $X$.
Then 
\begin{equation}\label{eq:ultogp5}
 \min_{Q\in\cQ(k)} \lim_{n\rightarrow \infty }\mP_G(\exists X \in \cX(k;Q) \mbox{ such that } \forall i, |GX_{:,i}|\leq \kappa\1)\rightarrow 0 
\quad  \implies \quad 
  \cS(G,\kappa,\alpha) \mbox{ exhibits $k$-OGP}.   
\end{equation}
Analogously to (\ref{eq:ultogp2a0}),  one then associates with (\ref{eq:ultogp5}) the following critical constraint density 
\begin{equation}\label{eq:ultogp5a0}
 \alpha_{kogp}(\kappa) \triangleq \min\left \{\alpha \hspace{.03in}  | \hspace{.03in} 
  \cS(G,\kappa,\alpha) \mbox{ exhibits $k$-OGP} \right \}.   
\end{equation}

In this paper we are interested in a particular upgrade of $k$-OGP where matrix $Q$ is ultrametric. For a positive integer $s$ (that denotes the level of ultrametricity) we consider a decreasing real $\q$-sequence ($\q=[q_0,q_1,\dots,q_s]$)
\begin{equation}\label{eq:ultogp6}
1 =q_0 > q_1> q_2 >\dots > q_s > q_{s+1}=0, 
\end{equation}
and an increasing integer $\k$-sequence  ($\k=[k_0,k_1,\dots,k_s]$)
\begin{equation}\label{eq:ultogp7}
1 =k_0 <  k_1 < k_2 < \dots < k_s = k, \mbox{ where } \frac{k_{i+1}}{k_i}\in\mZ.
\end{equation}
With $\q$ and $\k$ as in (\ref{eq:ultogp6}) and (\ref{eq:ultogp7}), respectively, we define the following set of $s$-level ultrametric matrices
 \begin{equation}\label{eq:ultogp8}
\cQ_{ult_s}(\k) = \left \{ Q\hspace{.03in} | \hspace{.03in} Q\in \mR^{k\times k}, Q =  \sum_{i=0}^{s} (q_{i}-q_{i+1}) I_{\frac{k}{k_{i}}\times\frac{k}{k_{i}}} \1_{k_{i}}\1_{k_{i}}^T \right \}.
\end{equation}
For a $Q$ from $\cQ_{ult_s}(\k)$ define set  
\begin{equation}\label{eq:ultogp9}
  \cX_{ult_s}(\k;Q) = \left\{ X \hspace{.03in} |  \hspace{.03in}  X\in \mR^{n\times k}, X^TX =Q, \mbox{ and } \forall i, X_{:,i}\in\cB\right \}.  
\end{equation}
Then we say
\begin{equation}\label{eq:ultogp10}
\min_{\k} \min_{Q\in\cQ_{ult_s}(\k)} \lim_{n\rightarrow \infty }\mP_G(\exists X\in \cX_{ult_s}(\k;Q) 
\hspace{.03in} | \hspace{.03in}
 \forall i, |GX_{:,i}|\leq \kappa\1)\rightarrow 0 
\quad  \implies \quad 
  \cS(G,\kappa,\alpha) \mbox{ exhibits $ult_s$-OGP}.   
\end{equation}
Analogously to (\ref{eq:ultogp2a0}),  one then associates with (\ref{eq:ultogp5}) the following critical constraint density 
\begin{equation}\label{eq:ultogp10a0}
 \alpha_{ult_s}(\kappa) = \min\left \{\alpha \hspace{.03in}  | \hspace{.03in} 
  \cS(G,\kappa,\alpha) \mbox{ exhibits $ult_s$-OGP} \right \}.   
\end{equation}

It is not that difficult to see that the above sets $\cX_0(q)$, $\cX(k;Q)$, and $\cX_{ult_s}(\k;Q)$ are closely connected. In particular, $\cX_0(q) = \cX(2;Q)$ and $\cX(k;Q) = \cX_{ult_1}(\k;Q)$.   Given that $2$-OGP is less general concept than $k$-OGP, which itself is less general than $ult_s$-OGP, one expects
\begin{equation}\label{eq:ultogp11}
 \alpha_{ult_s}(\kappa) \leq \alpha_{ult_1}(\kappa) =  \lim_{k\rightarrow\infty}   
 \alpha_{kogp}(\kappa) \leq  \alpha_{2ogp}(\kappa).   
\end{equation}
In the following sections we obtain a series of results that support such an expectation.

\subsection{First level of ultrametricity -- $ult_1$-OGP}
\label{sec:ult1}

To provide $\alpha_{ult_s}$ characterizations we will rely on union-bounding strategy. Pursuing this avenue may come as surprise as it rarely works in precise analyses. However, the SBP is a notable exception. Completely remarkably, the SBP's  underlying combinatorial considerations happen to be so favorable that a simple union-bounding is already sufficiently powerful to provide \emph{exact} characterization of $\alpha_c(\kappa)$. In particular, \cite{AubPerZde19} obtained
\begin{eqnarray}\label{eq:ult1eq1}
\alpha_c(\kappa) = -\frac{\log(2)}{\log\lp   \frac{1}{2}\erfc\lp -\frac{\kappa}{\sqrt{2}} \rp 
 -
 \frac{1}{2}\erfc\lp \frac{\kappa}{\sqrt{2}} \rp \rp} \quad \mbox{ and }\quad 
\alpha_c(1)   \approx 1.8159.
    \end{eqnarray}
As shown in Table \ref{tab:tab1}, this was matched on the second partial level of lifting via RDT in \cite{Stojnicalgsbp26}. To see how the basic combinatorial argument indeed produces a tight upper-bound, one observes that for an $\x\in\cB$ 
\begin{eqnarray}\label{eq:ult1eq2}
\mP(|G\x|\leq \kappa\1 ) =    \lp  \frac{1}{2}\erfc\lp -\frac{\kappa}{\sqrt{2}} \rp 
 -
 \frac{1}{2}\erfc\lp \frac{\kappa}{\sqrt{2}} \rp \rp^m.
   \end{eqnarray}
   Then 
\begin{eqnarray}\label{eq:ult1eq3}
\mP(\exists \x\in\cB \mbox{ such that } |G\x|\leq \kappa\1 ) 
& \leq &
 \sum_{\x\in\cB}   \mP(|G\x|\leq \kappa\1 ) 
=
  \sum_{i=1}^{2^n}   \mP(|G\x^{(i)}|\leq \kappa\1 ) 
\nonumber \\
& = &  2^n \lp 
 \frac{1}{2}\erfc\lp -\frac{\kappa}{\sqrt{2}} \rp 
 -
 \frac{1}{2}\erfc\lp \frac{\kappa}{\sqrt{2}} \rp \rp^m
\nonumber \\
& = &  \lp2\lp 
 \frac{1}{2}\erfc\lp -\frac{\kappa}{\sqrt{2}} \rp 
 -
 \frac{1}{2}\erfc\lp \frac{\kappa}{\sqrt{2}} \rp \rp^{\alpha} \rp^n.
   \end{eqnarray}
Moreover,
\begin{eqnarray}\label{eq:ult1eq4}
2\lp 
 \frac{1}{2}\erfc\lp -\frac{\kappa}{\sqrt{2}} \rp 
 -
 \frac{1}{2}\erfc\lp \frac{\kappa}{\sqrt{2}} \rp \rp^{\alpha} \leq 1
 \quad \implies \quad 
 \lim_{n\rightarrow \infty}\mP(\exists \x\in\cB \mbox{ such that } |G\x|\leq \kappa\1 ) \rightarrow 0.
   \end{eqnarray}
One then also has
\begin{eqnarray}\label{eq:ult1eq5}
\alpha> -\frac{\log(2)}
{ 
 \frac{1}{2}\erfc\lp -\frac{\kappa}{\sqrt{2}} \rp 
 -
 \frac{1}{2}\erfc\lp \frac{\kappa}{\sqrt{2}} \rp }
 \quad \implies \quad 
 \lim_{n\rightarrow \infty}\mP(\exists \x\in\cB \mbox{ such that } |G\x|\leq \kappa\1 ) \rightarrow 0,
   \end{eqnarray}
which based on the definition (\ref{eq:ex4}) gives
\begin{eqnarray}\label{eq:ult1eq6}
\alpha_c(\kappa)\leq  -\frac{\log(2)}
{ 
 \frac{1}{2}\erfc\lp -\frac{\kappa}{\sqrt{2}} \rp 
 -
 \frac{1}{2}\erfc\lp \frac{\kappa}{\sqrt{2}} \rp }.
\end{eqnarray}
\cite{AubPerZde19}  then  proceeds with the second moment analysis and shows a matching lower bound. It is also worth noting that the above argument is spelled out in details in \cite{StojnicGardGen13} for ABP as well. However, only in certain ranges of $\kappa$ it is competitive with RDT and more advanced methodologies. 

The above is simple but instructive and we will try to emulate it in what follows to as large extent as possible. However, the underlying sets of favorable $\x$ will get increasingly more complex and the analysis will become less and less trivial. While one can immediately start with very generic results, we choose a systematic approach that first goes through simpler steps and then further builds towards the more complex ones. There are three main reasons that motivate such a choice: 1) it allows us to gradually, level-by-level, establish connection with parametric RDT (and conjecture that the connection might not be restricted only to the limiting scenarios);  2) it also allows to point at possible connections between RDT and results already available in literature; and 3) it makes more involved technical parts of the presentation appear smoother, easier, and/or more natural to follow.

\subsubsection{$2$-OGP}
\label{sec:ult12ogp}

We now look at $2$-OGP as a warmup application of the above principle. To that end we observe the following analogue of (\ref{eq:ult1eq3})

\begin{eqnarray}\label{eq:ult1eq7}
& &  \hspace{-.4in}\mP(\exists \hspace{.03in}\x^{(1)},\x^{(2)}\in\cX_0(q) \mbox{ such that } |G\x^{(1)}|\leq \kappa\1,|G\x^{(2)}|\leq \kappa\1  ) 
\nonumber \\
& \leq &
 \sum_{\x^{(1)},\x^{(2)}\in\cX_0(q)}  
 \mP\lp |G\x^{(1)}|\leq \kappa\1,|G\x^{(2)}|\leq \kappa\1, \lp\x^{(1)}\rp^T\x^{(2)}=q  \rp \nonumber \\
& = &
|\cX_0(q)|  
 \mP\lp |G\x^{(1)}|\leq \kappa\1,|G\x^{(2)}|\leq \kappa\1, \lp\x^{(1)}\rp^T\x^{(2)}=q  \rp
  \nonumber \\
&  =  & \lp2^{h_q(2)} p_q(2)^{\alpha}\rp^n,
    \end{eqnarray}
    where $h_q(2)$ and  $p_q(2)$ are the combinatorial and probabilistic factors, respectively, given as
\begin{eqnarray}\label{eq:ult1eq8}
h_q(2) &  = & \frac{\log_2\lp  |\cX_0(q)|  \rp}{n} \nonumber \\
p_q(2) & = &   \lp  \mP\lp |G\x^{(1)}|\leq \kappa\1,|G\x^{(2)}|\leq \kappa\1, \lp\x^{(1)}\rp^T\x^{(2)}=q  \rp
 \rp^{\frac{1}{m}} ,
    \end{eqnarray}    
 and $|\cX_0(q)| $ is the cardinality of set $\cX_0(q)$.
     
Compared to the upper-bounding of the capacity itself presented above, one already here for $2$-OGP recognizes appearance of two main analytical sources of difficulty that will present themselves in more complex forms as we progress below. One is counting the number of elements in $\cX_0(q)$ and the other is the evaluation of a slightly more complicated correlated Gaussian likelihood. However, both of them can be handled. 

For any $a\in(0,1)$ we define scalar entropy function
\begin{eqnarray}\label{eq:ult1eq8a0}
h(a) & = &  -(1-a)\log_2(1-a)-a\log_2(a).
\end{eqnarray}
Also, for any vector $\a$ with elements $\a_i\in(0,1)$ such that $\sum_{i}\a_i=1$, we define corresponding vector entropy function
\begin{eqnarray}\label{eq:ult1eq8a1}
h_v(\a) & = &  -\sum_{i} \a_i\log_2(\a_i).
\end{eqnarray}
We then further set
\begin{eqnarray}\label{eq:ult1eq9}
q_{sx1} & = & (1+q)/2 \nonumber \\
q_{s1} & = &  q_{sx1} 
\nonumber \\
h_q^{(0)}(2)  & = & 1
\nonumber \\
h_q^{(1)}(2)  & = & h(q_{s1}). 
    \end{eqnarray}    
and through entropic combinatorial considerations obtain
\begin{eqnarray}\label{eq:ult1eq10}
h_q(2) &  = &   h_q^{(0)}(2) + h_q^{(1)}(2).
    \end{eqnarray}    
To see where (\ref{eq:ult1eq9}) comes from, one first observes 
\begin{eqnarray}\label{eq:ult1eq11}
2^{h_q(2) n} &  = &  2^{ h_q^{(0)}(2) n} 2^{ h_q^{(1)}(2) n}.
    \end{eqnarray}    
One then notes that $ 2^{ h_q^{(0)}(2) n}=2^n$ is the number of possible choices for $\x^{(1)}$. For any fixed such choice $\x^{(1)}$ one has precisely $2^{ h_q^{(1)}(2) n}$ choices for $\x^{(2)}$ where the number of overlapping signs between the two is $q_{s1} n$. On the other hand, keeping in mind that the elements of $G$ are independent standard normals, one has for $p_q(2)$ 
\begin{eqnarray}\label{eq:ult1eq12}
 p_q(2) & = &    \mP\lp |G_{1,:}\x^{(1)}|\leq \kappa,|G_{1,:}\x^{(2)}|\leq \kappa, \lp\x^{(1)}\rp^T\x^{(2)}=q  \rp,
    \end{eqnarray}    
where $G_{i,:}$ stands for the $i$-th row of $G$. Setting
\begin{eqnarray}\label{eq:ult1eq12a0}
g_1 & = & G_{1,:}\x^{(1)} \nonumber \\
g_2 & = & G_{1,:}\x^{(2)},
\end{eqnarray}
and recognizing that $g_1$ and $g_2$ are $q$-correlated centered Gaussians, one finds
\begin{eqnarray}\label{eq:ult1eq13}
 p_q(2) & = &   \frac{1}{\lp 2\pi\rp^{\frac{2}{2}} \det(A)^{\frac{1}{2}} }\int_{|g_1|\leq \kappa,|g_2|\leq \kappa} e^{-\frac{1}{2}\begin{bmatrix}
                  g_1 \\ g_2        \end{bmatrix}^T\begin{bmatrix}
                                                     1 & q  \\
                                                     q & 1   
                                                   \end{bmatrix}^{-1}\begin{bmatrix}
                  g_1 \\ g_2        \end{bmatrix}}dg_1dg_2.
    \end{eqnarray}    
Setting  further
\begin{eqnarray}\label{eq:ult1eq14}
\g  =  \begin{bmatrix}
                  g_1 \\ g_2        \end{bmatrix} 
\quad                  \mbox{ and } \quad
              Q  =     \begin{bmatrix}
                                                     1 & q  \\
                                                     q & 1   
                                                   \end{bmatrix},
\end{eqnarray}
one can rewrite (\ref{eq:ult1eq13}) in a more compact form
\begin{eqnarray}\label{eq:ult1eq15}
 p_q(2) & = &   \frac{1}{\lp 2\pi\rp^{\frac{2}{2}} \det(Q)^{\frac{1}{2}} }\int_{|\g|\leq \kappa} e^{-\frac{1}{2}\g^TQ^{-1}\g}d\g.
    \end{eqnarray}    
A combination of (\ref{eq:ult1eq9}), (\ref{eq:ult1eq10}), and  (\ref{eq:ult1eq15})  is then sufficient to utilize (\ref{eq:ult1eq7}) to obtain the following upper bound
\begin{eqnarray}\label{eq:ult1eq16}
 \alpha_{2ogp}(\kappa) &  \leq  & \min_{q} \lp - \frac{h_q(2) \log(2) }{\log(p_q(2))} \rp \triangleq  \bar{\alpha}_{ult_1}(\kappa;[1,2]).
    \end{eqnarray}
To ensure concreteness we consider canonical $\kappa=1$ scenario. After numerical evaluations one obtains results given in Table \ref{tab:tab2}. These should be identical to what was obtained in \cite{Bald20} and similar to $1.71$ estimate from \cite{GamKizPerXu22}.
\begin{table}[h]
\caption{Union upper bound on  $\alpha_{2ogp}(1)$  }\vspace{.1in}
\centering
\def\arraystretch{1.2}
 \begin{tabular}{||c||c||}\hline\hline
 \hspace{-0in}                                               $k$     &  $2$   \\ \hline\hline
    $q$  & $ 0.9689$ \\ \hline\hline 
  $\bar{\alpha}_{ult_1}(1;[1,k])$ &   \bl{$\mathbf{1.7001}$}    \\ \hline\hline
  \end{tabular}
\label{tab:tab2}
\end{table}

\subsubsection{$3$-OGP}
\label{sec:ult13ogp}

The compactness of the above forms allows a quick extension to $3$-OGP. One first writes the following analogue to (\ref{eq:ult1eq7})
\begin{eqnarray}\label{eq:ult13ogpeq7}
& &  \hspace{-.4in}\mP(\exists \hspace{.03in}\x^{(i)}\in\cX(3;Q) \mbox{ such that } |G\x^{(i)}|\leq, i=1,2,3 ) 
\nonumber \\
 & 
 \leq &
|\cX(3;Q)|  
 \mP\lp \forall i, |G\x^{(i)}|\leq \kappa\1, \lp\x^{(i)}\rp^T\x^{(j)}=q, i\neq j  \rp
  \nonumber \\
&  =  & \lp2^{h_q(3)} p_q(3)^{\alpha}\rp^n,
    \end{eqnarray}
    where $h_q(3)$ and  $p_q(3)$ are again the combinatorial and probabilistic factors, respectively, given as
\begin{eqnarray}\label{eq:ult13ogpeq8}
h_q(3) &  = & \frac{\log_2\lp  |\cX(3;Q)|  \rp}{n} \nonumber \\
p_q(3) & = &   \lp  \mP\lp \forall i, |G\x^{(i)}|\leq \kappa\1, \lp\x^{(i)}\rp^T\x^{(j)}=q, i\neq j  \rp
 \rp^{\frac{1}{m}} ,
    \end{eqnarray}    
 and $|\cX(3;Q)| $ is the cardinality of set $\cX(3;Q)$.

Evaluation of the combinatorial factor is now a bit more involved but still doable. One sets 
\begin{eqnarray}\label{eq:ult13ogpeq9}
q_{sx1} & = & (1+q)/2 \nonumber \\
q_{sx2} & = & (1+q)/2 \nonumber \\
q_{s1} & = &  q_{sx1}
 \nonumber \\ 
    q_{s2} &  = &  q_{sx2} - (1-q_{s1})/2 
\nonumber \\
h_q^{(0)}(3)  & = & 1
\nonumber \\
h_q^{(1)}(3)  & = & h(q_{s1})
\nonumber 
\\
q_{s21} & = & q_{s2} /q_{s1} 
\nonumber 
\\
 q_{s22} & = & ( q_{sx2} - q_{s2} )/(1-q_{s1}) 
 \nonumber \\
h_q^{(2)}(3) & = &    q_{s1}h(q_{s21}) +  (1-q_{s1})h(q_{s22})  .
    \end{eqnarray}    
    and through entropic combinatorial considerations obtains
\begin{eqnarray}\label{eq:ult13ogpeq10}
h_q(3) &  = &   h_q^{(0)}(3) + h_q^{(1)}(3) + h_q^{(2)}(3) .
    \end{eqnarray}    
The reasoning is as for $2$-OGP; $ 2^{ h_q^{(0)}(3) n}=2^n$ is the number of possible choices for $\x^{(1)}$ and for any fixed choice $\x^{(1)}$, $2^{ h_q^{(1)}(3) n}$ is the number of possible choices for $\x^{(2)}$ where the number of overlapping signs between $\x^{(1)}$  and $\x^{(2)}$   is $q_{s1} n$. For fixed $\x^{(1)}$ and $\x^{(2)}$ (with the number of overlapping signs $q_{s1} n$), $2^{ h_q^{(2)}(3) n}$ is the number of possible choices for $\x^{(3)}$ where the number of overlapping signs between $\x^{(1)}$  and $\x^{(3)}$  and $\x^{(2)}$  and $\x^{(3)}$  is also $q_{s1} n$. However, one has to account for possible options where these overlaps can happen. In particular, $q_{s1}n$ is the number of places where signs between   $\x^{(1)}$  and $\x^{(2)}$ match. Out of these $q_{s1}n$ places, there are $q_{s2}n$ places where they also match the sign of $\x^{(3)}$. On the other hand, out of the $(1-q_{s1})n$ places where $\x^{(1)}$  and $\x^{(2)}$ don't match, $\x^{(3)}$ matches both at exactly half, i.e., at $\frac{1-q_{s1}}{2}n$ places. $h_q^{(21)}(3) $ accounts for placement of $q_{s2} n$ signs within  $q_{s1} n$ locations, whereas $h_q^{(22)}(3) $ accounts for placement of $\frac{1-q_{s1}}{2}n$ signs  within  $(1-q_{s1})n$ locations. Together they give $h_q^{(2)}(3) $. (The above is written in a way that is a bit more complicated than necessary as one effectively has $\frac{1-q_{s1}}{2} =q_{sx2}-q_{s2}$; however, the reasons for the above generic writing will become clearer later on as we progress with the analyses of more complex structures.)

Analogously to (\ref{eq:ult1eq15}) we have
\begin{eqnarray}\label{eq:ult13ogpeq15}
 p_q(3) & = &   \frac{1}{\lp 2\pi\rp^{\frac{3}{2}} \det(Q)^{\frac{1}{2}} }\int_{|\g|\leq \kappa} e^{-\frac{1}{2}\g^TQ^{-1}\g}d\g,
    \end{eqnarray}    
where analogously to (\ref{eq:ult1eq14})
\begin{eqnarray}\label{eq:ult13ogpeq14}
\g  =  \begin{bmatrix}
                  g_1 \\ g_2 \\g_3        \end{bmatrix} 
\quad                  \mbox{ and } \quad
              Q  =     \begin{bmatrix}
                                                     1 & q & q \\
                                                     q & 1 & q \\ 
                                                     q & q & 1
                                                   \end{bmatrix}.
\end{eqnarray}
We now observe 
\begin{eqnarray}\label{eq:ult13ogpeq15a0}
Q^{-1}= \lp (1-q)I +q\1\1^T \rp^{-1}
= \frac{1}{1-q}I -\frac{1}{(1-q)^2}\1\1^T\lp \frac{1}{q} +\frac{1}{1-q}\1^T\1 \rp^{-1} 
= c_{w1} I + c_{w2}\1\1^T,
\end{eqnarray}
where
\begin{eqnarray}\label{eq:ult13ogpeq15a1}
c_{w1} = \frac{1}{1-q} \quad \mbox{ and } \quad  c_{w2} = -\frac{1}{(1-q)^2}\lp \frac{1}{q} +\frac{3}{1-q} \rp^{-1} .
\end{eqnarray}
From (\ref{eq:ult13ogpeq15}) we then also have
\begin{eqnarray}\label{eq:ult13ogpeq15a2}
 p_q(3) & = &   \frac{1}{\lp 2\pi\rp^{\frac{3}{2}} \det(Q)^{\frac{1}{2}} }\int_{|\g|\leq \kappa} e^{-\frac{1}{2}\g^T\lp    c_{w1} I + c_{w2}\1\1^T  \rp \g}d\g
 \nonumber \\
 & = &   \frac{1}{\lp 2\pi\rp^{\frac{3}{2}} \det(Q)^{\frac{1}{2}} }\int_{|\g|\leq \kappa} e^{-\frac{c_{w1}}{2}\g^T\g  -\frac{ c_{w2}}{2}\g^T\lp \1\1^T  \rp \g}d\g
 \nonumber \\
 & = &   \frac{1}{\lp 2\pi\rp^{\frac{3}{2}} \det(Q)^{\frac{1}{2}} }\int_{|\g|\leq \kappa} e^{-\frac{c_{w1}}{2}\g^T\g  }  \frac{1}{\sqrt{2\pi}}\int_z e^{\sqrt{-c_{w2}}\1^T \g z -\frac{z^2}{2}} dz d\g
 \nonumber \\
 & = &   \frac{1}{\det(Q)^{\frac{1}{2}} }
\frac{1}{\sqrt{2\pi}} \int_z
\lp
\frac{1}{\lp 2\pi\rp^{\frac{3}{2}} }
 \int_{|\g|\leq \kappa} e^{-\frac{c_{w1}}{2}\g^T\g  +\sqrt{-c_{w2}}\1^T \g z } d\g 
 \rp
 e^{-\frac{z^2}{2}} dz
 \nonumber \\
 & = &   \frac{1}{\det(Q)^{\frac{1}{2}} }
\frac{1}{\sqrt{2\pi}} \int_z
\lp
\frac{1}{\lp 2\pi\rp^{\frac{1}{2}} }
 \int_{|g_1|\leq \kappa} e^{-\frac{c_{w1}}{2}g_1^2 + \sqrt{-c_{w2}}g_1 z } dg_1 
 \rp^3
 e^{-\frac{z^2}{2}} dz.
    \end{eqnarray}    
After setting
\begin{eqnarray}
\label{eq:ult13ogpeq15a3}
D = \frac{\sqrt{-c_{w2}} }  {\sqrt{c_{w1}} } z \quad \mbox{ and } \quad  E = \sqrt{c_{w1}}\kappa,
\end{eqnarray}
one finds
\begin{eqnarray}\label{eq:ult13ogpeq15a4}
 I_0 =
\frac{1}{\lp 2\pi\rp^{\frac{1}{2}} }
 \int_{|g_1|\leq \kappa} e^{-\frac{c_{w1}}{2}g_1^2 + \sqrt{-c_{w2}}g_1 z } dg_1 
 =
 -\frac{{\mathrm{e}}^{\frac{D^2}{2}}\,\left(\mathrm{erf}\left(\frac{\sqrt{2}\,D}{2}
 -\frac{\sqrt{2}\,E}{2}\right)-\mathrm{erf}\left(\frac{\sqrt{2}\,D}{2}
 +\frac{\sqrt{2}\,E}{2}\right)\right)}{2},
    \end{eqnarray}    
and
\begin{eqnarray}\label{eq:ult13ogpeq15a5}
 p_q(3) & = &   
    \frac{1}{\sqrt{c_{w1}}^3\det(Q)^{\frac{1}{2}} }
\frac{1}{\sqrt{2\pi}} \int_z
I_0^3
 e^{-\frac{z^2}{2}} dz.
    \end{eqnarray}    
A combination of (\ref{eq:ult13ogpeq9}), (\ref{eq:ult13ogpeq10}), (\ref{eq:ult13ogpeq15a1}), (\ref{eq:ult13ogpeq15a3}), and  (\ref{eq:ult13ogpeq15a5})  is then sufficient to utilize (\ref{eq:ult13ogpeq7}) to obtain the following upper bound
\begin{eqnarray}\label{eq:ult13ogpeq16}
 \alpha_{3ogp}(\kappa) &  \leq  & \min_{q} \lp - \frac{h_q(3) \log(2) }{\log(p_q(3))} \rp \triangleq  \bar{\alpha}_{ult_1}(\kappa;[1,3]).
    \end{eqnarray}
Numerical evaluations give results shown in Table \ref{tab:tab3}. As earlier, this closely matches estimates from \cite{Bald20,GamKizPerXu22}.
\begin{table}[h]
\caption{Union upper bound on  $\alpha_{3ogp}(1)$  }\vspace{.1in}
\centering
\def\arraystretch{1.2}
 \begin{tabular}{||c||c|c||}\hline\hline
 \hspace{-0in}                                               $k$     &  $2$    &  $3$   \\ \hline\hline
    $q$  & $ 0.9689$    & $ 0.9780$ \\ \hline\hline 
  $\bar{\alpha}_{ult_1}(1;[1,k])$ &   \bl{$\mathbf{1.7001}$} &   \bl{$\mathbf{1.6664}$}    \\ \hline\hline
  \end{tabular}
\label{tab:tab3}
\end{table}

\subsubsection{$4$-OGP}
\label{sec:ult14ogp}

Following what was done in the previous subsection we immediately write the following 
analogue to (\ref{eq:ult13ogpeq7})
\begin{eqnarray}\label{eq:ult14ogpeq7}
& &  \hspace{-.4in}\mP(\exists \hspace{.03in}\x^{(i)}\in\cX(4;Q) \mbox{ such that } |G\x^{(i)}|\leq \kappa\1, i=1,2,3,4 ) 
 \leq 
  \lp 2^{h_q(4)} p_q(4)^{\alpha}\rp^n,
    \end{eqnarray}
where $h_q(4)$ and $p_q(4)$ are corresponding combinatorial and probabilistic factors. We then 
immediately have
\begin{eqnarray}\label{eq:ult14ogpeq15a5b0}
h_q(4) &  = & \frac{\log_2\lp  |\cX(4;Q)|  \rp}{n} ,
\end{eqnarray}
and the following analogue to (\ref{eq:ult13ogpeq15a5})
\begin{eqnarray}\label{eq:ult14ogpeq15a5}
 p_q(4) & = &   
    \frac{1}{\sqrt{c_{w1}}^4\det(Q)^{\frac{1}{2}} }
\frac{1}{\sqrt{2\pi}} \int_z
I_0^4
 e^{-\frac{z^2}{2}} dz,
    \end{eqnarray}    
with $I_0$ as defined in (\ref{eq:ult13ogpeq15a3}) and (\ref{eq:ult13ogpeq15a4})  for $c_{w1}$ and $c_{w2}$  
\begin{eqnarray}\label{eq:ult14ogpeq15a1}
c_{w1} = \frac{1}{1-q} \quad \mbox{ and } \quad  c_{w2} = -\frac{1}{(1-q)^2}\lp \frac{1}{q} +\frac{4}{1-q} \rp^{-1} .
\end{eqnarray}

The fourth OGP is sufficiently high so that we can put forth a generic procedure to determine the combinatorial factor. To that end we set

\begin{eqnarray}\label{eq:ult14ogpeq15a1b0}
A^{(4)}_{eq} =\begin{bmatrix}
          1 & 1 & 1 & 0 & 0 & 0 & 1  \\
          0 & 0 & 1 & 1 & 1 & 0 & 1  \\
          0 & 1 & 0 & 1 & 0 & 1 & 1  \\
          0 & 0 & 1 & 0 & 0 & 1 & 0 \\
          0 & 1 & 0 & 0 & 1 & 0 & 0 \\
          1 & 0 & 0 & 1 & 0 & 0 & 0 
\end{bmatrix},
\end{eqnarray}
and
\begin{eqnarray}\label{eq:ult14ogpeq15a1b1}
\b^{(4)}_{eq} =\begin{bmatrix}
           q_{sx1} & q_{sx1} & q_{sx1} & q_{s1}-q_{s2} & 1/2(1-q_{s1})  & 1/2(1-q_{s1}) 
\end{bmatrix}^T,
\end{eqnarray}
Let the numbers of rows and columns of $A_{eq}^{(k)}$ be $l_{r,k}$  and $l_{c,k}$ , respectively (for $k=4$ one then has that $l_{r,4}$  and $l_{c,4}$ are the numbers of rows and columns of $A_{eq}^{(4)}$. Then
 \begin{eqnarray}\label{eq:ult14ogpeq15a1b3}
h_q(4) = 1 + \max_{\a} & & h_v(\a) 
\nonumber \\
\mbox{subject to} & &  A^{(4)}_{eq}\a =\b^{(4)}_{eq}
\nonumber \\
  & &  \sum_{i=1}^{l_{c,4}+1}\a_i =1 .
    \end{eqnarray}    
We note the key convex property of the above program. To see how/why the above program evaluates the combinatorial factor we look at the following scheme.

\begin{equation}\label{eq:ult14ogpeq15a1b4}
\begin{bmatrix}
  ++++++++ & q_{s3}n &  ++++++++ & q_{s2}n & ++++++++  & q_{s1}n &  ++++++++ \\
  ++++++++ & q_{s3}n &  ++++++++ & q_{s2}n & ++++++++  & q_{s1}n &  -------- \\
  ++++++++ & q_{s3}n &  ++++++++ & q_{s2}n & --------  & q_{s1}n &  ++++---- \\
           &         &           &         &           &   & \bar{\a}_2 \quad \quad \quad  \bar{\a}_1    \\
  ++++++++ & q_{s3}n &  -------- & q_{s2}n & ++++----  & q_{s1}n &  ++--++-- \\
    \a_7   &         &           &         & \a_3\quad\quad\quad \a_6 &  & \a_2\quad  \a_5 \quad \a_1\quad \a_4  
\end{bmatrix}.
\end{equation}
Before we get into the details, we emphasize that in all rows, quantities $q_{s1}n$, $q_{s2}n$, and $q_{s3}n$ are not the elements of the row. They serve as an orientation regarding the number of signs. For example, $q_{s3}n$ means that to the left of that number there are exactly $q_{s3}n$ elements in the row. Similarly for $q_{s2}n$, it denotes the number of elements in the row (starting from the left) up to the point where $q_{s2}n$ is located. Also, numbers $\a_i$ and $\bar{\a}_i$ count the number of signs in the group right above them. In other words, the only elements in the scheme are '+' and '-' signs and what matters is where they are located.

To get a felling about the meaning of the scheme, we start with the first row. It basically corresponds to $\x^{(1)}$ having all positive signs (due to symmetry any combination of signs is fine; for easiness of presentation we choose all pluses (this practically means $\x^{(1)}=\frac{\1}{\sqrt{n}}$)). Then we proceed to the second row and observe that it emulates possible options for $\x^{(2)}$. The signs between the first and the second row match at exactly $q_{s1}n$ locations and mismatch and $(1-q_{s1})n$, precisely as it should be. Third row emulates possible options for $\x^{(3)}$. Assume that  out of $q_{s1}n$ locations where $\x^{(1)}$ and $\x^{(2)}$ match, $\x^{(3)}$ also matches them at $q_{s2}n$ locations. Then the fourth (most right) portion of the row shows the necessary partition  of the remaining  $(1-q_{s1})n$ locations where $\x^{(1)}$ and $\x^{(2)}$ mismatch and where $\x^{(3)}$ has to match each of them to make up for the overall  $q_{s1}n$ matching locations. One immediately notes that the split must be such that both portions $\bar{\a}_2$ and $\bar{\a}_1$ are exactly half of the interval, i.e, such that they are equal to each other. In other words, one has $\bar{\a}_2=\bar{\a}_1=\frac{1}{2}(1-q_{s1})n$. The fourth row shows possible options for $\x^{(4)}$. Say there are $q_{s3}n$ locations where $\x^{(4)}$  matches all three, $\x^{(1)}$, $\x^{(2)}$, and $\x^{(3)}$ where they themselves also match (which is the first $q_{s2}n$ locations). One then needs to partition the remaining intervals in $+$ and $-$ portions. The length of these portions are precisely the elements of $\a$ as denoted in (\ref{eq:ult14ogpeq15a1b4}). Moreover, these lengths should be such that all overlaps are as needed.

We now check if constraints generated by rows of $A_{eq}^{(4)}$ indeed match the overlap constraints. The equality constraint in (\ref{eq:ult14ogpeq15a1b3}) associated with the first row of $A_{eq}^{(4)}$ gives
\begin{equation}\label{eq:ult14ogpeq15a1b6}
\a_1+\a_2+\a_3 +\a_7 = q_{sx1}.
\end{equation}
Looking at (\ref{eq:ult14ogpeq15a1b3})  this precisely gives the sign overlap between the first and the fourth row  (i.e., between $\x^{(1)}$ and $\x^{(4)}$). Constraint associated with the second row gives
\begin{equation}\label{eq:ult14ogpeq15a1b7}
\a_3+\a_4+\a_5 +\a_7 = q_{sx1}.
\end{equation}
Looking at (\ref{eq:ult14ogpeq15a1b3})  this precisely gives the sign overlap between the second and the fourth row (i.e., between $\x^{(2)}$ and $\x^{(4)}$). Constraint associated with the third row gives
\begin{equation}\label{eq:ult14ogpeq15a1b8}
\a_2+\a_4+\a_6 +\a_7 = q_{sx1}.
\end{equation}
which precisely gives the sign overlap between the third and the fourth row (i.e., between $\x^{(3)}$ and $\x^{(4)}$). The remaining three equations ensure that sums of new partitions match the length of the interval that are being newly partitioned. For example, the fourth constraint gives
\begin{equation}\label{eq:ult14ogpeq15a1b9}
\a_3+\a_6  = (q_{s1}-q_{s2})n,
\end{equation}
which is precisely how it should be. The remaining two give
\begin{equation}\label{eq:ult14ogpeq15a1b10}
\a_2+\a_5  = \a_1 +\a_4 = \frac{1}{2}(1-q_{s2})n=\bar{\a}_2=\bar{\a}_1,
\end{equation}
which is precisely how it should be.

Since all the constraints match the needed overlap properties, the only thing that is left is to ensure that $h_q(4)$ is indeed the entropic exponent. However, that follows immediately after one sets $h_q^{(i)}(4)=h_q^{(i)}(3)$ for i=1,2, defines 
\begin{eqnarray}\label{eq:ult14ogpeq15a1b11}
  q_{s31} & = & \a_7 /q_{s2} 
\nonumber 
\\
 q_{s32} & = & \a_3 /(\a_3+\a_6) 
\nonumber 
\\
 q_{s33} & = & \a_2 /(\a_2+\a_5) 
\nonumber 
\\
 q_{s34} & = & \a_1 /(\a_1+\a_4) 
\nonumber 
\\
   h_q^{(3)}(4) & = &  q_{s3}h(q_{s41})  +(\a_3+\a_6) h(q_{s42}) 
+(\a_2+\a_5)h(q_{s43})  +  (\a_1+\a_4)h(q_{s44}) ,
    \end{eqnarray}    
 and observes that entropy identities give
\begin{eqnarray}\label{eq:ult14ogpeq15a1b12}
   h_q(4) =  1+ h_v(\a) = 1 +  h_q^{(1)}(4) +  h_q^{(2)}(4) + h_q^{(3)}(4),
\end{eqnarray}
for $\a$ being the solution of (\ref{eq:ult14ogpeq15a1b3}).

A combination of  (\ref{eq:ult13ogpeq15a3}), (\ref{eq:ult13ogpeq15a4}) , (\ref{eq:ult14ogpeq15a5}), (\ref{eq:ult14ogpeq15a1}), and  (\ref{eq:ult14ogpeq15a1b3}) (or (\ref{eq:ult14ogpeq15a1b12}))  is then sufficient to utilize (\ref{eq:ult14ogpeq7}) to obtain the following upper bound
\begin{eqnarray}\label{eq:ult14ogpeq16}
 \alpha_{4ogp}(\kappa)&  \leq  &  \min_{q} \lp  - \frac{h_q(4) \log(2) }{\log(p_q(4))} \rp \triangleq  \bar{\alpha}_{ult_1}(\kappa;[1,4]).
    \end{eqnarray}
Numerical evaluations give results shown in Table \ref{tab:tab4}. Looking carefully at Table \ref{tab:tab1}, we now observe that the value given in Table \ref{tab:tab4} magically approaches the value $\approx 1.6576$ obtained on the third level of partial lifting in \cite{Stojnicalgsbp26} (we believe that procedure from \cite{Bald20} would give similar value as well).
\begin{table}[h]
\caption{Union upper bound on  $\alpha_{4ogp}(1)$  }\vspace{.1in}
\centering
\def\arraystretch{1.2}
 \begin{tabular}{||c||c|c|c||}\hline\hline
 \hspace{-0in}                                               $k$     &  $2$    &  $3$      &  $4$   \\ \hline\hline
    $q$  & $ 0.9689$ & $ 0.9780$  & $ 0.9840$ \\ \hline\hline 
  $\bar{\alpha}_{ult_1}(1;[1,k])$ &   \bl{$\mathbf{1.7001}$} &   \bl{$\mathbf{1.6664}$}  &   \bl{$\mathbf{1.6578}$}    \\ \hline\hline
  \end{tabular}
\label{tab:tab4}
\end{table}

\subsubsection{$5$-OGP}
\label{sec:ult15ogp}

The formalism from the previous subsection is sufficient to proceed with higher OGPs. However, one has to be careful as $\a$ partition determined for $4$-OGP might not be the optimal for $5$-OGP. Before getting to the point to discuss these intricacies we focus on the results that can be directly generalized from  $4$-OGP.

Analogously to (\ref{eq:ult14ogpeq7}) we now have
 \begin{eqnarray}\label{eq:ult15ogpeq7}
& &  \hspace{-.4in}\mP(\exists \hspace{.03in}\x^{(i)}\in\cX(5;q) \mbox{ such that } |G\x^{(i)}|\leq \kappa\1, i=1,2,3,4,5 ) 
 \leq 
  \lp 2^{h_q(5)} p_q(5)^{\alpha}\rp^n,
    \end{eqnarray}
where $h_q(5)$ and $p_q(5)$ are corresponding combinatorial and probabilistic factors. One can also write   the following analogue to (\ref{eq:ult14ogpeq15a5})
\begin{eqnarray}\label{eq:ult15ogpeq15a5}
 p_q(4) & = &   
    \frac{1}{\sqrt{c_{w1}}^5\det(Q)^{\frac{1}{2}} }
\frac{1}{\sqrt{2\pi}} \int_z
I_0^5
 e^{-\frac{z^2}{2}} dz,
    \end{eqnarray}    
with $I_0$ defined in (\ref{eq:ult13ogpeq15a3}) and (\ref{eq:ult13ogpeq15a4})  for $c_{w1}$ and $c_{w2}$  
\begin{eqnarray}\label{eq:ult15ogpeq15a1}
c_{w1} = \frac{1}{1-q} \quad \mbox{ and } \quad  c_{w2} = -\frac{1}{(1-q)^2}\lp \frac{1}{q} +\frac{5}{1-q} \rp^{-1} .
\end{eqnarray}
We again find it useful to look at the following scheme  

{\small\begin{equation}\label{eq:ult15ogpeq15a1b4}
\begin{bmatrix}
  ++++& q_{s4}n & ++++ & q_{s3}n &  ++++++++ & q_{s2}n & ++++++++  & q_{s1}n &  ++++++++ \\
  ++++& q_{s4}n & ++++ & q_{s3}n &  ++++++++ & q_{s2}n & ++++++++  & q_{s1}n &  -------- \\
  ++++& q_{s4}n & ++++ & q_{s3}n &  ++++++++ & q_{s2}n & --------  & q_{s1}n &  ++++---- \\
        &  &  &         &           &         &           &   & \bar{\a}_2 \quad \quad \quad  \bar{\a}_1    \\
  ++++& q_{s4}n & ++++ & q_{s3}n &  -------- & q_{s2}n & ++++----  & q_{s1}n &  ++--++-- \\
      & \a_{22} &    &     &           &         & \a_{18}\quad\quad\quad \a_{21} &  & \a_{17}\quad  \a_{20} \quad \a_{16}\quad \a_{19}  
           \\
  ++++& q_{s4}n &  ---- & q_{s3}n &  ++++---- & q_{s2}n & ++--++--  & q_{s1}n &  +-+-+-+- \\
    \a_{15} &   & &         &        \a_{7}\quad \quad \quad \a_{14}   &         & \a_6\quad \a_{13}\quad \a_5\quad \a_{12}  &  & {\small{\a_4  \a_{11} \a_3 \a_{10} \a_2   \a_9  \a_1  \a_8}}  
\end{bmatrix}.
\end{equation}}

The partition of the fifth row is completely analogous to the procedure described when we discussed $4$-OGP. However, one now has to keep in mind that $\a_{16},\a_{17},\dots,\a_{22}$ (which correspond to $\a_1,\a_2,\dots,\a_7$ from $4$-OGP are now again subject to optimization as the ones determined for $4$-OGP are not necesarily optimal for $5$-OGP).

Now, an analogous $A_{eq}$ matrix to the one in (\ref{eq:ult14ogpeq15a1b0}) would be
\begin{eqnarray}\label{eq:ult15ogpeq15a1b0}
A^{(5,1)}_{eq} =\begin{bmatrix}
 1 & 1 & 1 & 1 & 1 & 1 & 1 & 0 & 0 & 0 & 0 & 0 & 0 & 0 & 1  \\
 0 & 0 & 0 & 0 & 1 & 1 & 1 & 1 & 1 & 1 & 1 & 0 & 0 & 0 & 1  \\
 0 & 0 & 1 & 1 & 0 & 0 & 1 & 1 & 1 & 0 & 0 & 1 & 1 & 0 & 1  \\
 0 & 1 & 0 & 1 & 0 & 1 & 0 & 1 & 0 & 1 & 0 & 1 & 0 & 1 & 1  \\
 0 & 0 & 0 & 0 & 0 & 0 & 1 & 0 & 0 & 0 & 0 & 0 & 0 & 1 & 0  \\
 0 & 0 & 0 & 0 & 0 & 1 & 0 & 0 & 0 & 0 & 0 & 0 & 1 & 0 & 0  \\
 0 & 0 & 0 & 0 & 1 & 0 & 0 & 0 & 0 & 0 & 0 & 1 & 0 & 0 & 0  \\
 0 & 0 & 0 & 1 & 0 & 0 & 0 & 0 & 0 & 0 & 1 & 0 & 0 & 0 & 0  \\ 
 0 & 0 & 1 & 0 & 0 & 0 & 0 & 0 & 0 & 1 & 0 & 0 & 0 & 0 & 0  \\ 
 0 & 1 & 0 & 0 & 0 & 0 & 0 & 0 & 1 & 0 & 0 & 0 & 0 & 0 & 0  \\ 
 1 & 0 & 0 & 0 & 0 & 0 & 0 & 1 & 0 & 0 & 0 & 0 & 0 & 0 & 0  
\end{bmatrix}.
\end{eqnarray}
However, we will also need the following matrix
\begin{eqnarray}\label{eq:ult15ogpeq15a1b1}
A^{(5,2)}_{eq} =\begin{bmatrix}
   0 & 0 &  0 &  0 & 0 & 0  &   0 \\   
   0 & 0 &  0 &  0 & 0 & 0  &   0 \\   
   0 & 0 &  0 &  0 & 0 & 0  &   0 \\   
   0 & 0 &  0 &  0 & 0 & 0  &   0 \\   
   0 & 0 &  0 &  0 & 0 & 0  &   1 \\   
   0 & 0 & -1   & 0 & 0 &  0   &  0  \\  
    0 & 0 & 0  &   0 & 0 & -1 &   0   \\
   0 & -1 & 0   & 0 & 0 & 0    & 0  \\ 
   0 & 0 & 0   &  0 & -1 & 0  &  0  \\ 
   -1 & 0 & 0   & 0 & 0 & 0   &  0  \\ 
    0 & 0 & 0  &   -1 & 0 & 0  &  0   
\end{bmatrix}.
\end{eqnarray}
Then we set
\begin{eqnarray}\label{eq:ult15ogpeq15a1b2}
A^{(5)}_{eq} =\begin{bmatrix}
   A^{(5,1)}_{eq}  &   A^{(5,2)}_{eq} \\   
   \0 & A^{(4)}_{eq}\end{bmatrix},
\end{eqnarray}
and
{\small\begin{eqnarray}\label{eq:ult15ogpeq15a1b1}
\b^{(5)}_{eq} =\begin{bmatrix}
 q_{sx1} & q_{sx1} & q_{sx1} &  q_{sx1} &  q_{s2} & 0 & 0 & 0 & 0 & 0 & 0 &
           q_{sx1} & q_{sx1} & q_{sx1} & q_{s1}-q_{s2} & \frac{1-q_{s1}}{2}  &  \frac{1-q_{s1}}{2}  
\end{bmatrix}^T,
\end{eqnarray}}
We then have for the combinatorial factor
\begin{eqnarray}\label{eq:ult15ogpeq15a5b0}
h_q(5) &  = & \frac{\log_2\lp  |\cX(5;Q)|  \rp}{n} ,
\end{eqnarray}
the following program
\begin{eqnarray}\label{eq:ult15ogpeq15a1b3}
h_q(5) = 1 + \max_{\a} & & h_v(\a) 
\nonumber \\
\mbox{subject to} & &  A^{(5)}_{eq}\a =\b^{(5)}_{eq}
\nonumber \\
  & &  \sum_{i=1}^{l_{c,5}+1}\a_i =1 .
    \end{eqnarray}    
As was the program in (\ref{eq:ult14ogpeq15a1b3}), the above program is also convex. Following the methodology of the previous section, we now check if constraints generated by the rows of $A_{eq}^{(5)}$ indeed match the overlap constraints. The equality constraint in (\ref{eq:ult15ogpeq15a1b3}) associated with the first row of $A_{eq}^{(5)}$ gives
\begin{equation}\label{eq:ult15ogpeq15a1b6}
\a_1+\a_2+\a_3 +\a_4+\a_5+\a_6 +\a_7 +\a_{15}  = q_{sx1},
\end{equation}
which is precisely the sign overlap between the first and the fifth row in (\ref{eq:ult15ogpeq15a1b4}) (i.e., between $\x^{(1)}$ and $\x^{(5)}$). The second row constraint  gives
\begin{equation}\label{eq:ult15ogpeq15a1b7}
\a_5+\a_6+\a_7 +\a_8+\a_9+\a_{10} +\a_{11} +\a_{15}  = q_{sx1},
\end{equation}
which gives the sign overlap between the second and the fifth row (i.e., between $\x^{(2)}$ and $\x^{(5)}$). The third row constraint gives
\begin{equation}\label{eq:ult15ogpeq15a1b8}
\a_3+\a_4+\a_7 +\a_8+\a_9+\a_{12} +\a_{13} +\a_{15}  = q_{sx1},
\end{equation}
which   gives the sign overlap between the third and the fifth row (i.e., between $\x^{(3)}$ and $\x^{(5)}$).  The fourth row constraint gives
\begin{equation}\label{eq:ult15ogpeq15a1b8b0}
\a_2+\a_4+\a_6 +\a_8+\a_{10}+\a_{12} +\a_{14} +\a_{15}  = q_{sx1},
\end{equation}
which  is the sign overlap between the fourth and the fifth row (i.e., between $\x^{(4)}$ and $\x^{(5)}$).  The following $7$ constraints ensure that sums of new partitions match the length of the intervals of the partition from the previous row (these are now also variables, $\a_{16},\a_{17},\dots,\a_{22}$). For example, the fifth constraint gives
\begin{equation}\label{eq:ult15ogpeq15a1b9}
\a_7+\a_{14} +\a_{22} = q_{s2} n,
\end{equation}
which is also precisely how it should be. The remaining six give
\begin{equation}\label{eq:ult15ogpeq15a1b10}
\a_6+\a_{13} -\a_{18} = \a_5 +\a_{12} -\a_{21} = \a_4 +\a_{11} -\a_{17} = \a_3 +\a_{10} -\a_{20} = \a_2 +\a_{9} -\a_{16} = \a_1 +\a_8 -\a_{19} = 0,
\end{equation}
which also precisely matches what is needed. The remaining seven constraint relate to $\a_{16},\a_{17},\dots,\a_{22}$ and, by the recursive construction of $A_{eq}^{(5)}$ and $b_{eq}^{(5)}$, are automatically identical to the ones given for $4$-OGP as they should be.

As earlier, we again have that $h_q(5)$ is the entropic exponent. A combination of  (\ref{eq:ult13ogpeq15a3}), (\ref{eq:ult13ogpeq15a4}) , (\ref{eq:ult15ogpeq15a5}), (\ref{eq:ult15ogpeq15a1}), and  (\ref{eq:ult15ogpeq15a1b3})  is then sufficient to utilize (\ref{eq:ult15ogpeq7}) and obtain the following upper bound
\begin{eqnarray}\label{eq:ult15ogpeq16}
 \alpha_{5ogp }(\kappa)&  \leq  &  \min_{q} \lp  - \frac{h_q(5) \log(2) }{\log(p_q(5))}   \rp  \triangleq  \bar{\alpha}_{ult_1}(\kappa;[1,5]).
    \end{eqnarray}
After numerical evaluations we obtain the results shown in Table \ref{tab:tab5}.  One now observes that the obtained bound is larger than the one for $4$-OGP which indicates that it is loose (this is also qualitatively in agreement with what was observed in \cite{Bald20} while relying on replica methods).
\begin{table}[h]
\caption{Union upper bound on  $\alpha_{5ogp}(1)$  }\vspace{.1in}
\centering
\def\arraystretch{1.2}
 \begin{tabular}{||c||c|c|c|c||}\hline\hline
 \hspace{-0in}                                               $k$     &  $2$    &  $3$      &  $4$    &  $5$   \\ \hline\hline
    $q$  & $ 0.9689$ & $ 0.9780$  & $ 0.9840$ & $ 0.9881$ \\ \hline\hline 
  $\bar{\alpha}_{ult_1}(1;[1,k])$ &   \bl{$\mathbf{1.7001}$} &   \bl{$\mathbf{1.6664}$}  &   \red{$\mathbf{1.6578}$}    &   \bl{$\mathbf{1.6593}$}    \\ \hline\hline
  \end{tabular}
\label{tab:tab5}
\end{table}

\subsubsection{$k$-OGP}
\label{sec:ult1kogp}

We now have all the ingredients to utilize the above procedure  in a recursive fashion for any integer $k>5$.

To that end we first write analogously to (\ref{eq:ult15ogpeq7}) 
 \begin{eqnarray}\label{eq:ult1kogpeq7}
& &  \hspace{-.4in}\mP(\exists \hspace{.03in}\x^{(i)}\in\cX(k;Q) \mbox{ such that } |G\x^{(i)}|\leq \kappa\1, i=1,2,\dots, k) 
 \leq 
  \lp 2^{h_q(k)} p_q(k)^{\alpha}\rp^n,
    \end{eqnarray}
where as usual $h_q(k)$ and $p_q(k)$ are the corresponding combinatorial and probabilistic factors. One can also write   the following analogue to (\ref{eq:ult15ogpeq15a5})
\begin{eqnarray}\label{eq:ult1kogpeq15a5}
 p_q(k) & = &   
    \frac{1}{\sqrt{c_{w1}}^k\det(Q)^{\frac{1}{2}} }
\frac{1}{\sqrt{2\pi}} \int_z
I_0^k
 e^{-\frac{z^2}{2}} dz,
    \end{eqnarray}    
with $I_0$ defined in (\ref{eq:ult13ogpeq15a3}) and (\ref{eq:ult13ogpeq15a4})  for $c_{w1}$ and $c_{w2}$  
\begin{eqnarray}\label{eq:ult1kogpeq15a1}
c_{w1} = \frac{1}{1-q} \quad \mbox{ and } \quad  c_{w2} = -\frac{1}{(1-q)^2}\lp \frac{1}{q} +\frac{k}{1-q} \rp^{-1} .
\end{eqnarray}

We construct matrix $A^{(k,1)}_{eq}$ of size  $(k-1+2^{k-2}-1)\times (2^{k-1}-1)$ in the following way. Let $A^{(k,3)}_{eq}$ be a $(k-1)\times (2^{k-1}-1)$ matrix  such that
\begin{eqnarray}\label{eq:ult1kogpeq15a1c0}
A^{(k,3)}_{eq} = \mbox{binrep} 
\lp \begin{bmatrix}
2^{k-2} & 2^{k-2}+1 & \dots & 2^{k-1}-2 & 2^{k-2}-1 &  2^{k-2}-2 & \dots &  1 
   \end{bmatrix} \rp,
\end{eqnarray}
where $\mbox{binrep}(\cdot)$ stands for column binary representation; it replaces each of the elements in the vector by a column vector that is its $(k-1)$-bit binary representation. For example, that means that for $k=5$, $2^{k-2}$ is replaced by $\begin{bmatrix}
                            1 & 0 & 0 & 0 
                          \end{bmatrix}^T$. Also, we set
\begin{eqnarray}\label{eq:ult1kogpeq15a1c1}
A^{(k,4)}_{eq} =  \begin{bmatrix}
fI_{(2^{k-2}-1)\times (2^{k-2}-1)} & fI_{(2^{k-2}-1)\times (2^{k-2}-1)}
    \end{bmatrix},
\end{eqnarray}
with $fI_{(2^{k-2}-1)\times (2^{k-2}-1)}$ being a flipped identity matrix of size  $(2^{k-2}-1)\times (2^{k-2}-1)$ (matrix is flipped vertically so that $1$'s are on the diagonal starting in the lower left corner and ending in the upper right corner). Then we have
\begin{eqnarray}\label{eq:ult1kogpeq15a1c1}
A^{(k,1)}_{eq} =  \begin{bmatrix}
A^{(k,3)}_{eq} & \1_{(k-1)\times 1} \\
A^{(k,4)}_{eq} & \0_{2^{k-2}-1\times 1} 
    \end{bmatrix},
\end{eqnarray}
 where $\1$ and $\0$ are vectors of all ones/zeros with dimensions specified in the subscripts.

We set
\begin{eqnarray}\label{eq:ult1kogpeq15a1c1d0}
A^{(k,2)}_{eq} =  \begin{bmatrix}
\0_{k-1\times l_{c,(k-1)}}   \\
\begin{bmatrix}
 \begin{bmatrix}
 \begin{bmatrix}
\0_{1\times (2^{k-2}-2)} & (-1)  
    \end{bmatrix} \\
 \begin{bmatrix}
CC^{(1)}  &
CC^{(2)} &   \0_{(2^{k-2}-2)\times 1)}
    \end{bmatrix}
    \end{bmatrix}  &
\begin{bmatrix}
\begin{bmatrix}
\0_{(2^(k-2)-2) \times ((\sum_{j=2}^{k-3}   (2^j-1)) -1)}    &
-1   \end{bmatrix} \\
\0_{(2^(k-2)-2) \times \sum_{j=2}^{k-3}   (2^j-1)}   \end{bmatrix}  
    \end{bmatrix}
    \end{bmatrix}.
\end{eqnarray}
where 
  \begin{eqnarray}\label{eq:ult1kogpeq15a1c1d1}
CC = fI_{(2^(k-3)-1)\times (2^(k-3)-1)}.
\end{eqnarray}
and
 \begin{equation}\label{eq:ult1kogpeq15a1c1d1}
CC_{2(i-1)+1,:}^{(1)}  =  CC_{i,:}, \quad \mbox{ and } \quad 
CC_{2(i-1)+2,:}^{(1)}  =  \0_{1\times (2^{k-3}-1)}, \quad i=1,2,\dots,2^{k-3}-1,
\end{equation}
 and  
 \begin{equation}\label{eq:ult1kogpeq15a1c1d1}
CC_{2(i-1)+2,:}^{(2)}  =  CC_{i,:}, \quad \mbox{ and } \quad 
CC_{2(i-1)+1,:}^{(2)}  = \0_{1\times (2^{k-3}-1)}, \quad i=1,2,\dots,2^{k-3}-1,
\end{equation}
We recall that $W_{i,:}$ stands for the $i$-th row of matrix $W$ and construct matrix  $A^{(k)}_{eq}$ as
\begin{eqnarray}\label{eq:ult15ogpeq1ka1b2}
A^{(k)}_{eq} =\begin{bmatrix}
   A^{(k,1)}_{eq}  &   A^{(k,2)}_{eq} \\   
   \0 & A^{(k-1)}_{eq}\end{bmatrix}.
\end{eqnarray}
After setting
 \begin{eqnarray}\label{eq:ult1kogpeq15a1b1}
\b^{(k)}_{eq} =\begin{bmatrix}
 q_{sx1} \1_{1\times (k-1) } & 0_{1\times (2^{k-2}-1)} &
\lp\b^{(k-1)}_{eq}\rp^T             
\end{bmatrix}^T,
\end{eqnarray}
we have for the combinatorial factor
\begin{eqnarray}\label{eq:ult15ogpeq15a5b0}
h_q(k) &  = & \frac{\log_2\lp  |\cX(k;Q)|  \rp}{n} ,
\end{eqnarray}
the following program
 \begin{eqnarray}\label{eq:ult1kogpeq15a1b3}
h_q(k) = 1 + \max_{\a} & & h_v(\a) 
\nonumber \\
\mbox{subject to} & &  A^{(k)}_{eq}\a =\b^{(k)}_{eq}
\nonumber \\
  & &  \sum_{i=1}^{l_{c,k}+1}\a_i =1 .
    \end{eqnarray}    
Several observations might additionally highlight the key points of the above construction. 
\begin{itemize}
\item At each recursion step $k$ ($k>5$), one repartitions into two new partitions the 
'+/-' sign interval partitions from the previous $k-1$ step.
  \item Such a $k-1\rightarrow k$ recursion move results in having the dimension of the optimizing $\a$ increased by precisely $2^{k-1}-1$. These components correspond to the lengths of the new partitions introduced at the $k$-th step and are  precisely the first   $2^{k-1}-1$ components of $\a$, $\a_{1:(2^{k-1}-1)}$.
  \item Upper left corner, $A^{(k,1)}_{eq}$, corresponds to $\a_{1:(2^{k-1}-1)}$ portion of $\a$. On the other hand, bottom right corner, $A^{(k-1)}_{eq}$, corresponds to the remaining components of $\a$ which account for all partitions from previous steps.
  \item The upper right corner, $A^{(k,2)}_{eq}$, ensures that connections between the two parts of $\a$ are properly accounted for.
      \item Construction of  $\b^{(k)}_{eq}$ follows the same principle. The first two components, 
      $ q_{sx1} \1_{1\times (k-1) }$ and  $0_{1\times (2^{k-2}-1)}$ relate to partitions introduced at the $k$-th step. The bottom portion,  $\b^{(k-1)}_{eq} $, relates to the partitions present in the previous step.
\end{itemize}

As were the programs in (\ref{eq:ult14ogpeq15a1b3}) and (\ref{eq:ult15ogpeq15a1b3}), the above program (\ref{eq:ult1kogpeq15a1b3}) is also convex. A combination of  (\ref{eq:ult13ogpeq15a3}), (\ref{eq:ult13ogpeq15a4}) , (\ref{eq:ult1kogpeq15a5}), (\ref{eq:ult1kogpeq15a1}), and  (\ref{eq:ult1kogpeq15a1b3})  suffices to utilize (\ref{eq:ult1kogpeq7}) and obtain the following upper bound
\begin{eqnarray}\label{eq:ult1kogpeq16}
 \alpha_{kogp }(\kappa) &  \leq  & \min_{q} \lp  - \frac{h_q(k) \log(2) }{\log(p_q(k))} \rp  \triangleq  \bar{\alpha}_{ult_1}(\kappa;[1,k]).
    \end{eqnarray}
To check if the reversed trend observed for $k=5$ (where the bound gets looser as $k$ increases) continues, we conducted numerical evaluations for $k=6$ and $k=7$. For concreteness, we again consider canonical  $\kappa=1$ scenario. The obtained results are shown in Table \ref{tab:tab6}. They are also complemented with $k\leq 5$ results from previous sections and visualized in Figure \ref{fig:fig1}.  As can be seen, the trend indeed continues and the bounds are getting looser  as $k$ further increases beyond $k=4$. Utilizing replica methods, \cite{Bald20} arrived at the same conclusion. However, \cite{Bald20} also showed that in the regimes where the bounds decrease as $k$ increases (which is how the true values, $\alpha_{kogp }(\kappa)$ ,  that we are trying to bound behave), they also tend to be tight. If such a tendency is indeed true then $\lim_{k\rightarrow\infty}\alpha_{kogp}$ is smaller than what we have obtained, but possibly not that much smaller. In other words, this suggests that the tightest $\kappa=1$ bound that the above union-bounding methodology produces, $\bar{\alpha}_{ult_1}(1;4)\approx 1.6578$, might not be that far away from the true $\alpha_{ult_1}(1)$.

\begin{table}[h]
\caption{Union upper bound on  $\alpha_{kogp}(1)$, $k\leq 7$.  }\vspace{.1in}
\centering
\def\arraystretch{1.2}
 \begin{tabular}{||c||c|c|c|c|c|c||}\hline\hline
 \hspace{-0in}                                               $k$     &  $2$    &  $3$      &  $4$    &  $5$  &  $6$    &  $7$   \\ \hline\hline
    $q$  & $ 0.9689$ & $ 0.9780$  & $ 0.9840$ & $ 0.9881$ & $ 0.9909$ & $ 0.9930$ \\ \hline\hline 
  $\bar{\alpha}_{ult_1}(1;[1,k])$ &   \bl{$\mathbf{1.7001}$} &   \bl{$\mathbf{1.6664}$}  &   \red{$\mathbf{1.6578}$}    &   \bl{$\mathbf{1.6593}$}    &   \bl{$\mathbf{1.6650}$}    &   \bl{$\mathbf{1.6724}$}    \\ \hline\hline
  \end{tabular}
\label{tab:tab6}
\end{table}

 \begin{figure}[h]
\centering
\centerline{\includegraphics[width=.7\linewidth]{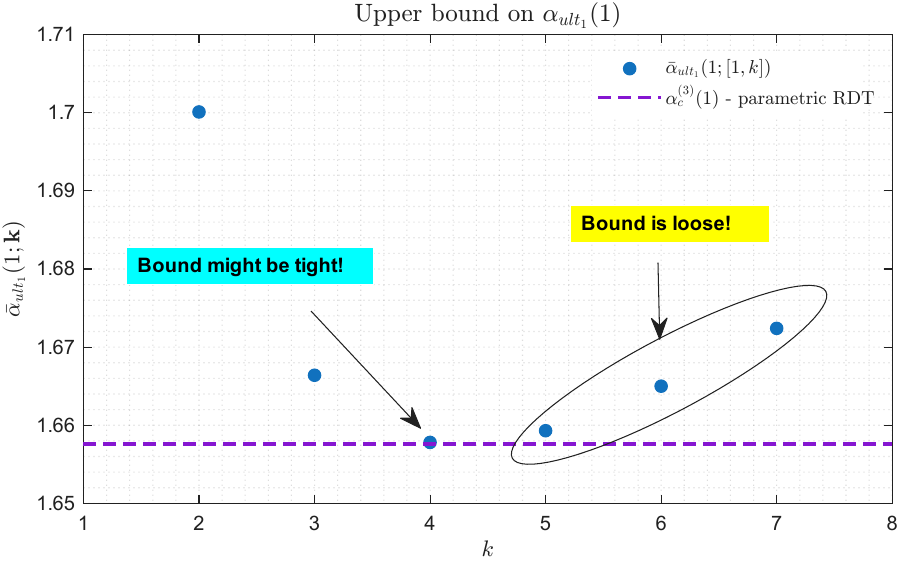}}
\caption{Upper bounds $\bar{\alpha}_{ult_1}(1;\k)$ on $\alpha_{kogp}(1)$ and $\alpha_{ult_1}(1) = \lim_{k\rightarrow \infty} \alpha_{kogp}(1)$.}
\label{fig:fig1}
\end{figure}

\subsection{Second level of ultrametricity -- $ult_2$-OGP}
\label{sec:ult2}

On the second level of ultametricity $s=2$. From (\ref{eq:ultogp6}) and (\ref{eq:ultogp7}), one then has $\q=[q_0,q_1,q_2,q_3]$ and $\k=[k_0,k_1,k_2]$ with
  \begin{equation}\label{eq:ult2ogp6}
1  =q_0 > q_1> q_2 > q_{3}=0, \quad \mbox{and } \quad 1 =k_0 <  k_1 < k_2 = k, \mbox{ where } \frac{k_{2}}{k_1},\frac{k_{1}}{k_0}\in\mZ.
\end{equation}
Recalling on (\ref{eq:ultogp5}), we observe that for a $Q\in\cQ_{ult_2}(\k)$ the key object of interest is
\begin{equation}\label{eq:ult2ogp5}
    \lim_{n\rightarrow \infty }\mP_G(\exists X\in\cX(\k;Q)\mbox{ such that } |GX_{:,i}|\leq \kappa \1, i=1,2,\dots,k).   
\end{equation}

\begin{eqnarray}\label{eq:ult2eq7}
& &  \hspace{-.4in}\mP_G(\exists X\in\cX(\k;Q)\mbox{ such that } |GX_{:,i}|\leq \kappa \1, i=1,2,\dots,k) 
\nonumber \\
& \leq &
 \sum_{X\in\cX(\k;Q)}  
 \mP\lp \forall i, |GX_{:,i}|\leq \kappa\1\mbox{ and } X^TX=Q  \rp \nonumber \\
& = &
|\cX(\k;Q)| 
 \mP\lp \forall i, |GX_{:,i}|\leq \kappa\1\mbox{ and } X^TX=Q  \rp  \nonumber \\
&  =  & \lp2^{h_q(\k)} p_q(\k)^{\alpha}\rp^n,
    \end{eqnarray}
    where, as usual,  $h_q(\k)$ and  $p_q(\k)$ are the combinatorial and probabilistic factors given as
\begin{eqnarray}\label{eq:ult2eq8}
h_{\q}^{(2)}(\k) &  = & \frac{\log_2\lp  |\cX(\k;Q)|  \rp}{n} \nonumber \\
p_{\q}^{(2)}(\k) & = &   \lp   \mP\lp \forall i, |GX_{:,i}|\leq \kappa\1\mbox{ and } X^TX=Q  \rp  \rp^{\frac{1}{m}} =      \mP\lp \forall i, |G_{1,:}X_{:,i}|\leq \kappa \mbox{ and } X^TX=Q  \rp ,
    \end{eqnarray}    
and the last equality follows after observing that the elements of $G$ are independent standard normals.
Following the methodology of previous sections, we then write
\begin{eqnarray}\label{eq:ult23ogpeq15}
 p_{\q}^{(2)}(\k) & = &   \frac{1}{\lp 2\pi\rp^{\frac{k}{2}} \det(Q)^{\frac{1}{2}} }\int_{|\g|\leq \kappa} e^{-\frac{1}{2}\g^TQ^{-1}\g}d\g,
    \end{eqnarray}    
where 
\begin{eqnarray}\label{eq:ult23ogpeq14}
\g  =  \begin{bmatrix}
                  g_1 \\ g_2 \\\vdots \\g_k        \end{bmatrix} 
\quad                  \mbox{ and } \quad
              Q  =  (1-q_1)I +(q_1-q_2)\cI^{(1)}   + q_2 \cI^{(2)}  ,
               \end{eqnarray}
               and
\begin{eqnarray}\label{eq:ult23ogpeq14a0}               
               \cI^{(1)} =  I_{\frac{k}{k_1}\times\frac{k}{k_1}} \1_{k_1}\1_{k_1}^T   \quad                  \mbox{ and } \quad              
              \cI^{(2)} =\1_k\1_k^T .
\end{eqnarray}
We first observe 
\begin{eqnarray}\label{eq:ult23ogpeq15a0}
\cZ =\lp (1-q_1)I +(q_1-q_2)\cI^{(1)}  \rp^{-1} 
= c_{w1} I + c_{w2}\cI^{(1)},
\end{eqnarray}
where
\begin{equation}\label{eq:ult23ogpeq15a1}
c_{w1} = \frac{1}{1-q_1} \quad \mbox{ and } \quad  c_{w2} = -\frac{1}{(1-q_1)^2} \lp \frac{1}{q_1-q_2} + \frac{k_1}{1-q_1}  \rp^{-1}  = -(c_{w1})^2\lp \frac{1}{q_1-q_2} + k_1 c_{w1} \rp^{-1} .
\end{equation}
One also has 
\begin{eqnarray}\label{eq:ult23ogpeq15a0b0}
Q^{-1}  &  = & \lp \cZ^{-1}  + q_2\cI^{(2)}  \rp^{-1}  \nonumber \\
& = & \cZ -\cZ\1_k \lp \frac{1}{q_2} + \1_k^T\cZ\1_k   \rp^{-1} \1_k^T\cZ^T
\nonumber \\
& = &  c_{w1}I + c_{w2}\cI^{(1)} - (c_{w1}+k_1c_{w2})^2\lp \frac{1}{q_2} +k(c_{w1}+k_1c_{w2})\rp^{-1} \cI^{(2)}
\nonumber \\
& = &  c_{w1}I + c_{w2}\cI^{(1)} +  c_{w3} \cI^{(2)}
,
\end{eqnarray}
with
\begin{eqnarray}\label{eq:ult23ogpeq15a1b0}
c_{w3} = -(c_{w1}+k_1 c_{w2})^2\lp \frac{1}{q_2} +k( c_{w1} + k_1 c_{w2} ) \rp^{-1}.
\end{eqnarray}
After partitioning $\g$ 
\begin{eqnarray}\label{eq:ult23ogpeq15a1b1}
\g  =  \begin{bmatrix}
                  \bar{\g}_1 \\ \bar{\g}_2 \\\vdots \\ \bar{\g}_{\frac{k}{k_1}}        \end{bmatrix} ,
                  \mbox{ where }
\bar{\g}_i  =  \begin{bmatrix}
                   g_{(i-1)k_1+1} \\ g_{(i-1)k_1+2} \\\vdots \\ g_{(i-1)k_1+k_1}        \end{bmatrix} ,  
\end{eqnarray}
we further transform (\ref{eq:ult23ogpeq15}) 
\begin{eqnarray}\label{eq:ult23ogpeq15a2}
 p^{(2)}_{\q}(\k) & = &   \frac{1}{\lp 2\pi\rp^{\frac{k}{2}} \det(Q)^{\frac{1}{2}} }\int_{|\g|\leq \kappa} e^{-\frac{1}{2}\g^T\lp    c_{w1} I + c_{w2}\cI^{(1)}  + c_{w3}\cI^{(2)}  \rp \g}d\g
 \nonumber \\
 & = &   \frac{1}{\lp 2\pi\rp^{\frac{k}{2}} \det(Q)^{\frac{1}{2}} }\int_{|\g|\leq \kappa} e^{-\frac{c_{w1}}{2}\g^T\g  -\frac{ c_{w2}}{2}\g^T \cI^{(1)} \g -\frac{ c_{w3}}{2}\g^T \cI^{(2)} \g  }d\g
 \nonumber \\
 & = &   \frac{1}{\lp 2\pi\rp^{\frac{k}{2}} \det(Q)^{\frac{1}{2}} }
 \nonumber \\
& & \times
 \int_{|\g|\leq \kappa} e^{-\frac{c_{w1}}{2}\g^T\g  }  
 \prod_{i=1}^{\frac{k}{k_1}}
 \frac{1}{\sqrt{2\pi}}\int_{\bar{z}_i} e^{\sqrt{-c_{w2}}\1^T \bar{\g}_i \bar{z}_i -\frac{\bar{z}_i^2}{2}} d\bar{z}_i 
 \frac{1}{\sqrt{2\pi}}\int_z e^{\sqrt{-c_{w3}}\1^T \g z -\frac{z^2}{2}} dz d\g
 \nonumber \\
 & = &   \frac{1}{\det(Q)^{\frac{1}{2}} } \frac{1}{\sqrt{2\pi}^{1+\frac{k}{k_1}}} 
\nonumber \\
& & \times
\int_z
 \prod_{i=1}^{\frac{k}{k_1}}
\int_{\bar{z}_i}
\lp
\frac{1}{\lp 2\pi\rp^{\frac{k_1}{2}} }
 \int_{|\bar{\g}_i|\leq \kappa} e^{-\frac{c_{w1}}{2}\bar{\g}_i^T\bar{\g}_i  +\sqrt{-c_{w2}}\1^T \bar{\g}_i \bar{z}_i   +\sqrt{-c_{w3}}\1^T \bar{\g}_i z  } d\bar{\g}_i 
 \rp
 e^{-\frac{\bar{z}_i^2}{2}} d\bar{z}_i
 e^{-\frac{z^2}{2}} dz
\nonumber \\
 & = &   \frac{1}{\det(Q)^{\frac{1}{2}} } \frac{1}{\sqrt{2\pi}^{1+\frac{k}{k_1}}} 
\nonumber \\
& & \times
 \int_z
 \prod_{i=1}^{\frac{k}{k_1}}
\int_{\bar{z}_i}
\lp
\frac{1}{\lp 2\pi\rp^{\frac{1}{2}} }
 \int_{|g_1|\leq \kappa} e^{-\frac{c_{w1}}{2}g_1^2  +\sqrt{-c_{w2}}  g_1 \bar{z}_i   +\sqrt{-c_{w3}} g_1 z  } dg_1 
 \rp^{k_1}
 e^{-\frac{\bar{z}_i^2}{2}} d\bar{z}_i
 e^{-\frac{z^2}{2}} dz
 \nonumber \\
 & = &   \frac{1}{\det(Q)^{\frac{1}{2}} } \frac{1}{\sqrt{2\pi}^{1+\frac{k}{k_1}}} 
   \int_z
 \prod_{i=1}^{\frac{k}{k_1}}
\int_{\bar{z}_i}
\lp
I_0^{(2)}(\bar{z}_i,z) \rp^{k_1}
 e^{-\frac{\bar{z}_i^2}{2}} d\bar{z}_i
 e^{-\frac{z^2}{2}} dz
 \nonumber \\
 & = &   \frac{1}{\det(Q)^{\frac{1}{2}} } \frac{1}{\sqrt{2\pi}} 
   \int_z
 \lp
 \int_{\bar{z}_1}
 \frac{1}{\sqrt{2\pi} }
\lp
I_0^{(2)}(\bar{z}_1,z) \rp^{k_1}
 e^{-\frac{\bar{z}_1^2}{2}} d\bar{z}_1
 \rp^{\frac{k}{k_1}}
 e^{-\frac{z^2}{2}} dz
 \nonumber \\
 & = &   \frac{1}{\det(Q)^{\frac{1}{2}} } \frac{1}{\sqrt{2\pi}} 
   \int_z
\lp I_1^{(2)}(z) \rp^{\frac{k}{k_1}} e^{-\frac{z^2}{2}} dz,
    \end{eqnarray}    
    where
\begin{eqnarray}\label{eq:ult23ogpeq15a2b0}
  I_0^{(2)}(\bar{z}_1,z) 
=
\frac{1}{\lp 2\pi\rp^{\frac{1}{2}} }
 \int_{|g_1|\leq \kappa} e^{-\frac{c_{w1}}{2}g_1^2  +\sqrt{-c_{w2}}  g_1 \bar{z}_1   +\sqrt{-c_{w3}} g_1 z  } dg_1 
     \end{eqnarray}    
and
\begin{eqnarray}\label{eq:ult23ogpeq15a2b1}
  I_1^{(2)}(z) 
=
 \frac{1}{\sqrt{2\pi} }
 \int_{\bar{z}_1}
\lp
I_0^{(2)}(\bar{z}_1,z) \rp^{k_1}
 e^{-\frac{\bar{z}_1^2}{2}} d\bar{z}_1.
      \end{eqnarray}    
After setting
\begin{eqnarray}
\label{eq:ult23ogpeq15a3}
D^{(2)} = \frac{\sqrt{-c_{w2}} }  {\sqrt{c_{w1}} } \bar{z}_1
+
 \frac{\sqrt{-c_{w3}} }  {\sqrt{c_{w1}} } z 
  \quad \mbox{ and } \quad  E^{(2)}  = \sqrt{c_{w1}}\kappa,
\end{eqnarray}
one finds
\begin{eqnarray}\label{eq:ult23ogpeq15a4}
 \tilde{I}_0^{(2)}(\bar{z}_1,z) =
 -\frac{{\mathrm{e}}^{\frac{\lp D^{(2)}\rp^2}{2}}\,\left(\mathrm{erf}\left(\frac{\sqrt{2}\,D^{(2)}}{2}
 -\frac{\sqrt{2}\,E^{(2)}}{2}\right)-\mathrm{erf}\left(\frac{\sqrt{2}\,D^{(2)}}{2}
 +\frac{\sqrt{2}\,E^{(2)}}{2}\right)\right)}{2},
    \end{eqnarray}    
\begin{eqnarray}\label{eq:ult23ogpeq15a4b0}
  \tilde{I}_1^{(2)}(z) 
=
 \frac{1}{\sqrt{2\pi} }
 \int_{\bar{z}_1}
\lp
\tilde{I}_0^{(2)}(\bar{z}_1,z) \rp^{k_1}
 e^{-\frac{\bar{z}_1^2}{2}} d\bar{z}_1,
      \end{eqnarray}    
and
\begin{eqnarray}\label{eq:ult23ogpeq15a5}
 p^{(2)}_{\q}(\k) & = &   
  \frac{1}{\sqrt{c_{w1}}^k\det(Q)^{\frac{1}{2}} } \frac{1}{\sqrt{2\pi}} 
   \int_z
\lp \tilde{I}_1^{(2)}(z) \rp^{\frac{k}{k_1}} e^{-\frac{z^2}{2}} dz.
    \end{eqnarray} 
    
For $h^{(2)}_{\q}(\k)$ we can still utilize the program from (\ref{eq:ult1kogpeq15a1b3})
 \begin{eqnarray}\label{eq:ult2kogpeq15a1b3}
h^{(2)}_{\q}(\k) = 1 + \max_{\a} & & h_v(\a) 
\nonumber \\
\mbox{subject to} & &  A^{(\k)}_{eq}\a =\b^{(\k)}_{eq}
\nonumber \\
  & &  \sum_{i=1}^{l_{c,k}+1}\a_i =1 .
    \end{eqnarray}
A combination of  (\ref{eq:ult23ogpeq15a1}),  (\ref{eq:ult23ogpeq15a1b0}), and (\ref{eq:ult23ogpeq15a3})--(\ref{eq:ult2kogpeq15a1b3})  provides all ingredients needed to utilize (\ref{eq:ult2eq7}) and obtain the following upper bound
\begin{eqnarray}\label{eq:ult23ogpeq16}
 \alpha_{ult_2}(\kappa) &  \leq  & \min_{\q} \lp  - \frac{h^{(2)}_{\q}(\k) \log(2) }{\log\lp p^{(2)}_{\q}(\k) \rp } \rp  \triangleq  \bar{\alpha}_{ult_2}(\kappa;\k).
    \end{eqnarray}
        
To be able to practically work with the above machinery, one again needs to determine $A^{(\k)}_{eq}$ and $\b^{(\k)}_{eq}$ in program (\ref{eq:ult2kogpeq15a1b3}).  However, the most complicated part, matrix $A^{(\k)}_{eq}$, actually remains identical as for the first level of uiltrametricity, i.e., one again has $A^{(\k)}_{eq}=A^{(k)}_{eq}$. In other words, this matrix will depend only on $k$. The only thing that needs to be adjusted is $\b^{(\k)}_{eq}$. However, this is a vector and much easier to handle than $A^{(\k)}_{eq}$. One just needs to be a bit careful as the adjustments depend on $\k$. We below proceed by considering particular choices for $\k$ and along the way discuss needed $\b^{(\k)}_{eq}$ adjustments.

\subsubsection{$ult_2$-OGP -- $\k=[1,2,k]$}
\label{sec:ult2k21}

Since on the second level of ultrametricity  we have both $q_1$ and $q_2$, we need to slightly adjust (\ref{eq:ult13ogpeq9}). To that end we write
\begin{eqnarray}\label{eq:ult23ogpeq9}
q_{sx1} & = & (1+q_1)/2 \nonumber \\
q_{sx2} & = & (1+q_2)/2 \nonumber \\
q_{s1} & = &  q_{sx1}
 \nonumber \\ 
    q_{s2} &  = &  q_{sx2} - (1-q_{s1})/2 .
\end{eqnarray}

We start considerations of various $\k$ sequences with $k_1=2$ (which is the first nontrivial scenario) and run $k$ over the first few minimal allowed values.

\subsubsubsection{$ult_2$-OGP -- $\k=[1,2,4]$}
\label{sec:ult2421}
 
Keeping the adjustment from (\ref{eq:ult23ogpeq9}) in mind, we immediately write analogously to (\ref{eq:ult14ogpeq15a1b1})
\begin{eqnarray}\label{eq:ult24ogpeq15a1b1}
\b^{([1,2,4])}_{eq} =\begin{bmatrix}
           q_{sx2} & q_{sx2} & q_{sx1} & q_{s1}-q_{s2} & 1/2(1-q_{s1})  & 1/2(1-q_{s1}) 
\end{bmatrix}^T.
\end{eqnarray}
To see where the above $\b^{([1,2,4])}_{eq}$ adjustment comes from, one observes that the first level of ultrametricity scenario with $k=4$ is $4$-OGP. For $4$-OGP one has that the fourth vector has identical $q_{sx1}n$  sign overlap with the remaining three vectors. Here, this needs to be adjusted to reflect that the overlap with the first two is $q_{sx2}n$ and with the third is $q_{sx1}n$. The choice given in (\ref{eq:ult24ogpeq15a1b1}) follows after one observes that the first three elements of $\b^{([1,2,4])}_{eq}$ are precisely these overlaps scaled by $n$. Everything else remains unchanged as it reflects partitioning intervals summing-up structure which is unaffected by the change of overlaps values. One then has that the program (\ref{eq:ult2kogpeq15a1b3}) is basically (\ref{eq:ult14ogpeq15a1b3}) with $\b^{(4)}_{eq}=\b^{([1,2,4])}_{eq}$. Results obtained through  numerical evaluations are given in Table \ref{tab:tab7}. One observes that this value is lower that any of the values from Table \ref{tab:tab6}, which might suggest that the union-bounding strategy might be applicable beyond the first level of ultrametricity. However, one would need much more to comfortably predict overall success.
\begin{table}[h]
\caption{Union upper bound $\bar{\alpha}_{ult_2}(1;[1,2,4])$ on $\alpha_{ult_2}(1)$  }\vspace{.1in}
\centering
\def\arraystretch{1.2}
 \begin{tabular}{||c||c||}\hline\hline
 \hspace{-0in}                                               $\k$     &  $[1,2,4]$     \\ \hline\hline
    $[q_1,q_2]$  & $[0.9932,0.9640 ]$    \\ \hline\hline 
   $\bar{\alpha}_{ult_2}(1;\k)$ &   \bl{$\mathbf{1.6444}$}     \\ \hline\hline
  \end{tabular}
\label{tab:tab7}
\end{table}

\subsubsubsection{$ult_2$-OGP -- $\k=[1,2,6]$}
\label{sec:ult2621}
 
In this scenario the adjustment from  (\ref{eq:ult23ogpeq9})  remains in place. Moreover, $A_{eq}^{([1,2,6])}= A_{eq}^{(6)}$. We also need to adequately adjust $\b^{([1,2,6])}_{eq}$ as it will not be exactly equal to $\b^{(6)}_{eq}$. First, from (\ref{eq:ult1kogpeq15a1b1}),  we recall that $\b^{(6)}_{eq}$ is built recursively from $\b^{(5)}_{eq}$ in the following way
 \begin{eqnarray}\label{eq:ult2kogpeq15a1b1}
\b^{(6)}_{eq} =\begin{bmatrix}
 q_{sx1} \1_{1\times (6-1) } & 0_{1\times (2^{6-2}-1)} &
\lp\b^{(5)}_{eq}\rp^T             
\end{bmatrix}^T,
\end{eqnarray}
with $\b^{(5)}_{eq}$ as given in  (\ref{eq:ult15ogpeq15a1b1})
{\small\begin{eqnarray}\label{eq:ult25ogpeq15a1b2}
\b^{(5)}_{eq} =\begin{bmatrix}
 q_{sx1} & q_{sx1} & q_{sx1} &  q_{sx1} &  q_{s2} & 0 & 0 & 0 & 0 & 0 & 0 &
           q_{sx1} & q_{sx1} & q_{sx1} & q_{s1}-q_{s2} & \frac{1-q_{s1}}{2}  &  \frac{1-q_{s1}}{2}  
\end{bmatrix}^T.
\end{eqnarray}}
In the $\k=[1,2,6]$ scenario that we consider now, we first need to adapt $\b^{(5)}_{eq} $ so that it becomes

{\small\begin{equation}\label{eq:ult25ogpeq15a1b3}
\b^{([1,2,5])}_{eq} =\begin{bmatrix}
 q_{sx2} & q_{sx2} & q_{sx2} &  q_{sx2} &  q_{s2} & 0 & 0 & 0 & 0 & 0 & 0 &
           q_{sx2} & q_{sx2} & q_{sx1} & q_{s1}-q_{s2} & \frac{1-q_{s1}}{2}  &  \frac{1-q_{s1}}{2}  
\end{bmatrix}^T.
\end{equation}}

\noindent The reason is the following. The first four components reflect the sign overlaps of the fifth vector with the first four. On the other hand, components 12-14  reflect the sign overlaps of the fourth vector with the first three. In the  $\k=[1,2,6]$ scenario of interest here, these overlaps are precisely as stated in (\ref{eq:ult25ogpeq15a1b3}). One can then use (\ref{eq:ult25ogpeq15a1b3}) to adjust (\ref{eq:ult2kogpeq15a1b1})
\begin{eqnarray}\label{eq:ult25ogpeq15a1b4}
\b^{([1,2,6])}_{eq} =\begin{bmatrix}
 q_{sx2} &  q_{sx2} &  q_{sx2} &  q_{sx2} &  &  q_{sx1} & 0_{1\times (2^{6-2}-1)} &
\lp\b^{([1,2,5])}_{eq}\rp^T             
\end{bmatrix}^T,
\end{eqnarray}
The first five components of $\b^{([1,2,6])}_{eq}$ are the sign overlaps of the sixth vector with the first five. In $\k=[1,2,6]$ scenario they are as stated in (\ref{eq:ult25ogpeq15a1b4}). Everything else remains unchanged. One then utilizes the program (\ref{eq:ult2kogpeq15a1b3}) and after numerical evaluations obtains results shown in Table \ref{tab:tab8}. One observes that the bound is smaller than the one in $\k=[1,2,4]$ scenario which is again a favorable trend.
\begin{table}[h]
\caption{Union upper bound $\bar{\alpha}_{ult_2}(1;[1,2,6])$ on $\alpha_{ult_2}(1)$  }\vspace{.1in}
\centering
\def\arraystretch{1.2}
 \begin{tabular}{||c||c|c||}\hline\hline
 \hspace{-0in}                                               $\k$     &  $[1,2,4]$  &  $[1,2,6]$     \\ \hline\hline
    $[q_1,q_2]$  & $[0.9940,0.9550 ]$     & $[0.9976,0.9760 ]$    \\ \hline\hline 
   $\bar{\alpha}_{ult_2}(1;\k)$ &   \bl{$\mathbf{1.6444}$}   &   \bl{$\mathbf{1.6327}$}     \\ \hline\hline
  \end{tabular}
\label{tab:tab8}
\end{table}

\subsubsubsection{$ult_2$-OGP -- $\k=[1,2,8]$}
\label{sec:ult2821}
 
One again has  $A_{eq}^{([1,2,8])}= A_{eq}^{(8)}$ and an adequate adjustment is needed for $\b^{([1,2,8])}_{eq}$ as it will not be exactly equal to $\b^{(8)}_{eq}$. From (\ref{eq:ult1kogpeq15a1b1}),  we first recall that $\b^{(8)}_{eq}$ is built as
 \begin{eqnarray}\label{eq:ult2kogpeq15a1b1c0}
\b^{(8)}_{eq} & = & \begin{bmatrix}
 q_{sx1} \1_{1\times (8-1) } & 0_{1\times (2^{8-2}-1)} &
\lp\b^{(8-1)}_{eq}\rp^T             
\end{bmatrix}^T
\nonumber \\
\b^{(7)}_{eq} & = & \begin{bmatrix}
 q_{sx1} \1_{1\times (7-1) } & 0_{1\times (2^{7-2}-1)} &
\lp\b^{(7-1)}_{eq}\rp^T             
\end{bmatrix}^T.
\end{eqnarray}
Utilizing the reasoning shown above, we here have the following analogue
 \begin{eqnarray}\label{eq:ult2kogpeq15a1b1c1}
\b^{([1,2,8])}_{eq} & = & \begin{bmatrix}
 q_{sx2} \1_{1\times (7-1) } & q_{sx1} & 0_{1\times (2^{8-2}-1)} &
\lp\b^{(8-1)}_{eq}\rp^T             
\end{bmatrix}^T
\nonumber \\
\b^{([1,2,7])}_{eq} & = & \begin{bmatrix}
 q_{sx2} \1_{1\times (7-1) } & 0_{1\times (2^{7-2}-1)} &
\lp\b^{([1,2,6])}_{eq}\rp^T             
\end{bmatrix}^T,
\end{eqnarray}
where $\b^{([1,2,6])}_{eq}$ is determined in (\ref{eq:ult24ogpeq15a1b1}). The first six components of $\b^{([1,2,7])}_{eq}$ are the sign overlaps of the seventh vector with the first six. In $[1,2,8]$ scenario they all are $q_{sx2}$. The first seven components of $\b^{([1,2,8])}_{eq}$ are the sign overlaps of the eighth vector with the first seven. The first six of them are $q_{sx2}$ and the seventh is $q_{sx1}$. 

After utilizing the program (\ref{eq:ult2kogpeq15a1b3}) one obtains through  numerical evaluations  results given in Table \ref{tab:tab9}. This time, however, the bounding trend is reversed and the bound is larger than the one obtained for $\k=[1,2,6]$. This means that the bound is loose. Memory requirements are substantial and prevent to smoothly continue further with larger $k$. However, the indication from the first level of ultrametricity is that once the bound gets loose it remains looser and looser as one increases $k$. If such a tendency is also in place on the second level of ultrametricity, then increasing $k$ further is not even needed.

\begin{table}[h]
\caption{Union upper bound $\bar{\alpha}_{ult_2}(1;[1,2,8])$ on $\alpha_{ult_2}(1)$  }\vspace{.1in}
\centering
\def\arraystretch{1.2}
 \begin{tabular}{||c||c|c|c||}\hline\hline
 \hspace{-0in}                                               $\k$     &  $[1,2,4]$  &  $[1,2,6]$    &  $[1,2,8]$     \\ \hline\hline
    $[q_1,q_2]$  & $[0.9940,0.9550 ]$     & $[0.9976,0.9760 ]$   & $[0.9989,0.9830 ]$    \\ \hline\hline 
   $\bar{\alpha}_{ult_2}(1;\k)$ &   \bl{$\mathbf{1.6444}$}   &   \red{$\mathbf{1.6327}$}  &   \bl{$\mathbf{1.6345}$}     \\ \hline\hline
  \end{tabular}
\label{tab:tab9}
\end{table}

\subsubsection{$ult_2$-OGP -- $\k=[1,3,k]$}
\label{sec:ult2k31}

In the above second level of ultrametricity considerations, we started with the simplest choice $k_1=2$.
We remain on the second lvel of ultrametricity and now consider the first next choice, $k_1=3$.
Adjustment (\ref{eq:ult23ogpeq9}) is again in place as there are two $q$'s, $q_1$ and $q_2$.  We follow the trend from previous sections and separately consider a few smallest allowable values for $k$ when 
$k_1=3$.

\subsubsubsection{$ult_2$-OGP -- $\k=[1,3,6]$}
\label{sec:ult2631}
 
Since $A_{eq}^{([1,3,6])}= A_{eq}^{(6)}$ remains in place, we need to adjust $\b^{([1,2,6])}_{eq}$  from (\ref{eq:ult25ogpeq15a1b4}). Following the strategy discussed right after (\ref{eq:ult25ogpeq15a1b3}) and (\ref{eq:ult25ogpeq15a1b4}) and making sure that sign overlaps properly reflect $\k=[1,3,6]$ ultametric structure, we find

{\small\begin{equation}\label{eq:ult25ogpeq15a1b1c0}
\b^{([1,3,5])}_{eq} =\begin{bmatrix}
 q_{sx2} & q_{sx2} & q_{sx2} &  q_{sx1} &  q_{s2} & 0 & 0 & 0 & 0 & 0 & 0 &
           q_{sx2} & q_{sx2} & q_{sx2} & q_{s1}-q_{s2} & \frac{1-q_{s1}}{2}  &  \frac{1-q_{s1}}{2}  
\end{bmatrix}^T.
\end{equation}}
and
\begin{eqnarray}\label{eq:ult24ogpeq15a1b2c1}
\b^{([1,3,6])}_{eq} =\begin{bmatrix}
 q_{sx2} &  q_{sx2} &  q_{sx2} &  q_{sx1} &  &  q_{sx1} & 0_{1\times (2^{6-2}-1)} &
\lp\b^{([1,3,5])}_{eq}\rp^T             
\end{bmatrix}^T,
\end{eqnarray}
Utilizing program (\ref{eq:ult2kogpeq15a1b3}), numerical evaluations give results shown in Table \ref{tab:tab10}. One observes that the bound is smaller than any of the $\k=[1,2,4]$ and $\k=[1,2,6]$ bounds which is again a very favorable trend.
\begin{table}[h]
\caption{Union upper bound $\bar{\alpha}_{ult_2}(1;[1,3,6])$ on $\alpha_{ult_2}(1)$  }\vspace{.1in}
\centering
\def\arraystretch{1.2}
 \begin{tabular}{||c||c|c||}\hline\hline
 \hspace{-0in}                                               $\k$     &  $[1,3,6]$     \\ \hline\hline
    $[q_1,q_2]$  & $[0.9954,0.9598 ]$      \\ \hline\hline 
   $\bar{\alpha}_{ult_2}(1;\k)$ &   \bl{$\mathbf{1.6300}$}      \\ \hline\hline
  \end{tabular}
\label{tab:tab10}
\end{table}

\subsubsubsection{$ult_2$-OGP -- $\k=[1,3,9]$}
\label{sec:ult2931}
 
First we note $A_{eq}^{([1,3,9])}= A_{eq}^{(9)}$, where $A_{eq}^{(9)}$ is obtained via recursion in (\ref{eq:ult15ogpeq1ka1b2}). One then needs to adjust $\b^{([1,3,9])}_{eq}$.
Following the strategy that we have repeated many times above, we account for sign overlaps given the $\k=[1,3,9]$ ultrametric structure and find the following analogue to (\ref{eq:ult2kogpeq15a1b1c1})

 \begin{eqnarray}\label{eq:ult2kogpeq15a1b1c1d0}
\b^{([1,3,9])}_{eq} & = & \begin{bmatrix}
 q_{sx2} \1_{1\times (7-1) } & q_{sx1} & q_{sx1}  & 0_{1\times (2^{9-2}-1)} &
\lp\b^{([1,3,8])}_{eq}\rp^T             
\end{bmatrix}^T
\nonumber \\
\b^{([1,3,8])}_{eq} & = & \begin{bmatrix}
 q_{sx2} \1_{1\times (7-1) } & q_{sx1} & 0_{1\times (2^{8-2}-1)} &
\lp\b^{([1,3,7])}_{eq}\rp^T             
\end{bmatrix}^T\nonumber \\
\b^{([1,3,7])}_{eq} & = & \begin{bmatrix}
 q_{sx2} \1_{1\times (7-1) } & 0_{1\times (2^{7-2}-1)} &
\lp\b^{([1,3,6])}_{eq}\rp^T             
\end{bmatrix}^T,
\end{eqnarray}
where $\b^{([1,3,6])}_{eq}$ is determined in (\ref{eq:ult24ogpeq15a1b2c1}).
 Numerical evaluations via program (\ref{eq:ult2kogpeq15a1b3}) give results shown in Table \ref{tab:tab11}. One observes that the bound is smaller than  $\k=[1,3,6]$ bound.
\begin{table}[h]
\caption{Union upper bound $\bar{\alpha}_{ult_2}(1;[1,3,9])$ on $\alpha_{ult_2}(1)$  }\vspace{.1in}
\centering
\def\arraystretch{1.2}
 \begin{tabular}{||c||c|c||}\hline\hline
 \hspace{-0in}                                               $\k$     &  $[1,3,6]$  &  $[1,3,9]$     \\ \hline\hline
    $[q_1,q_2]$  & $[0.9954,0.9598 ]$  & $[0.9984,0.9746 ]$      \\ \hline\hline 
   $\bar{\alpha}_{ult_2}(1;\k)$ &   \bl{$\mathbf{1.6300}$} &   \bl{$\mathbf{1.6236}$}      \\ \hline\hline
  \end{tabular}
\label{tab:tab11}
\end{table}

\subsubsubsection{$ult_2$-OGP -- $\k=[1,3,12]$}
\label{sec:ult21231}
 
We have $A_{eq}^{([1,3,12])}= A_{eq}^{(12)}$ with $A_{eq}^{(12)}$ as in (\ref{eq:ult15ogpeq1ka1b2}). The needed  adjustment  for $\b^{([1,3,12])}_{eq}$ follows by writing analogously to (\ref{eq:ult2kogpeq15a1b1c1d0})

 \begin{eqnarray}\label{eq:ult2kogpeq15a1b1c1d0e0}
\b^{([1,3,12])}_{eq} & = & \begin{bmatrix}
 q_{sx2} \1_{1\times (10-1) } & q_{sx1} & q_{sx1}  & 0_{1\times (2^{12-2}-1)} &
\lp\b^{([1,3,12])}_{eq}\rp^T             
\end{bmatrix}^T
\nonumber \\
\b^{([1,3,11])}_{eq} & = & \begin{bmatrix}
 q_{sx2} \1_{1\times (9-1) } & q_{sx1} & 0_{1\times (2^{11-2}-1)} &
\lp\b^{([1,3,10])}_{eq}\rp^T             
\end{bmatrix}^T\nonumber \\
\b^{([1,3,10])}_{eq} & = & \begin{bmatrix}
 q_{sx2} \1_{1\times (8-1) } & 0_{1\times (2^{10-2}-1)} &
\lp\b^{([1,3,9])}_{eq}\rp^T             
\end{bmatrix}^T,
\end{eqnarray}
where $\b^{([1,3,9])}_{eq}$ is determined in (\ref{eq:ult2kogpeq15a1b1c1d0}).
 Numerical evaluations via program (\ref{eq:ult2kogpeq15a1b3}) give results shown in Table \ref{tab:tab12}. We now note that the bounding trend reverses compared to  $\k=[1,3,9]$ which means that the bound is loose. Memory requirements make further increase in $k$ challenging. However, given that the bounding trend reversed that might not be needed.
\begin{table}[h]
\caption{Union upper bound $\bar{\alpha}_{ult_2}(1;[1,3,12])$ on $\alpha_{ult_2}(1)$  }\vspace{.1in}
\centering
\def\arraystretch{1.2}
 \begin{tabular}{||c||c|c|c||}\hline\hline
 \hspace{-0in}                                               $\k$     &  $[1,3,6]$  &  $[1,3,9]$    &  $[1,3,12]$     \\ \hline\hline
    $[q_1,q_2]$  & $[0.9954,0.9598 ]$  & $[0.9984,0.9746 ]$  & $[0.9989,0.9755 ]$      \\ \hline\hline 
   $\bar{\alpha}_{ult_2}(1;\k)$ &   \bl{$\mathbf{1.6300}$} &   \red{$\mathbf{1.6236}$}     &   \bl{$\mathbf{1.6308}$}      \\ \hline\hline
  \end{tabular}
\label{tab:tab12}
\end{table}

\subsubsection{$ult_2$-OGP -- $\k=[1,4,k]$}
\label{sec:ult2k41}

After considering $k_1=2$ and $k_1=3$ choices in previous sections, we now focus on the first next $k_1=4$. Adjustment (\ref{eq:ult23ogpeq9})  continuous to be in place. We keep paralleling what we presented so far and separately consider a couple of smallest allowable values of $k$ when 
$k_1=4$.

\subsubsubsection{$ult_2$-OGP -- $\k=[1,4,8]$}
\label{sec:ult2841}

Analogously to (\ref{eq:ult2kogpeq15a1b1c1d0e0}), we now have

 \begin{eqnarray}\label{eq:ult2kogpeq15a1b1c1d0e0f0}
\b^{([1,4,8])}_{eq} & = & \begin{bmatrix}
 q_{sx2} \1_{1\times 4 } & q_{sx1} & q_{sx1} & q_{sx1}  & 0_{1\times (2^{8-2}-1)} &
\lp\b^{([1,4,7])}_{eq}\rp^T             
\end{bmatrix}^T
\nonumber \\
\b^{([1,4,7])}_{eq} & = & \begin{bmatrix}
 q_{sx2} \1_{1\times 4} & q_{sx1}  & q_{sx1}  & 0_{1\times (2^{7-2}-1)} &
\lp\b^{([1,4,6])}_{eq}\rp^T             
\end{bmatrix}^T\nonumber \\
\b^{([1,4,6])}_{eq} & = & \begin{bmatrix}
 q_{sx2} \1_{1\times 4 } & q_{sx1} & 0_{1\times (2^{6-2}-1)} &
\lp\b^{([1,4,5])}_{eq}\rp^T             
\end{bmatrix}^T,
\end{eqnarray}
where  
{\small\begin{equation}\label{eq:ult25eq50}
\b^{([1,4,5])}_{eq} =\begin{bmatrix}
 q_{sx2} & q_{sx2} & q_{sx2} &  q_{sx2} &  q_{s2} & 0 & 0 & 0 & 0 & 0 & 0 &
           q_{sx1} & q_{sx1} & q_{sx1} & q_{s1}-q_{s2} & \frac{1-q_{s1}}{2}  &  \frac{1-q_{s1}}{2}  
\end{bmatrix}^T.
\end{equation}}

Results obtained through numerical evaluations via (\ref{eq:ult2kogpeq15a1b3}) are shown in Table \ref{tab:tab13}. The bound is smaller than all except one of the earlier bounds, suggesting that further improvement might be possible. 

\begin{table}[h]
\caption{Union upper bound $\bar{\alpha}_{ult_2}(1;[1,4,8])$ on $\alpha_{ult_2}(1)$  }\vspace{.1in}
\centering
\def\arraystretch{1.2}
 \begin{tabular}{||c||c|c||}\hline\hline
 \hspace{-0in}                                               $\k$     &  $[1,4,8]$      \\ \hline\hline
    $[q_1,q_2]$  & $[0.9967,0.9583 ]$        \\ \hline\hline 
   $\bar{\alpha}_{ult_2}(1;\k)$ &   \bl{$\mathbf{1.6268}$}      \\ \hline\hline
  \end{tabular}
\label{tab:tab13}
\end{table}

\subsubsubsection{$ult_2$-OGP -- $\k=[1,4,12]$}
\label{sec:ult21241}

Analogously to (\ref{eq:ult2kogpeq15a1b1c1d0e0f0}), we now have

\begin{eqnarray}\label{eq:ult2kogpeq15a1b1c1d0e0f0g0}
\b^{([1,4,12])}_{eq} & = & \begin{bmatrix}
 q_{sx2} \1_{1\times 8 } & q_{sx1} & q_{sx1} & q_{sx1}  & 0_{1\times (2^{12-2}-1)} &
\lp\b^{([1,4,11])}_{eq}\rp^T             
\end{bmatrix}^T
\nonumber \\
\b^{([1,4,11])}_{eq} & = & \begin{bmatrix}
 q_{sx2} \1_{1\times 8} & q_{sx1}  & q_{sx1}  & 0_{1\times (2^{11-2}-1)} &
\lp\b^{([1,4,10])}_{eq}\rp^T             
\end{bmatrix}^T
\nonumber \\
\b^{([1,4,10])}_{eq} & = & \begin{bmatrix}
 q_{sx2} \1_{1\times 8 } & q_{sx1} & 0_{1\times (2^{10-2}-1)} &
\lp\b^{([1,4,9])}_{eq}\rp^T             
\end{bmatrix}^T\nonumber \\
\b^{([1,4,9])}_{eq} & = & \begin{bmatrix}
 q_{sx2} \1_{1\times 8 }  & 0_{1\times (2^{9-2}-1)} &
\lp\b^{([1,4,8])}_{eq}\rp^T             
\end{bmatrix}^T,
\end{eqnarray}
where  $\b^{([1,4,8])}_{eq}$ is determined in (\ref{eq:ult2kogpeq15a1b1c1d0e0f0}). Numerical evaluations via (\ref{eq:ult2kogpeq15a1b3}) give results in Table \ref{tab:tab14}. The bound is smaller than all of the earlier bounds. All of the bounds obtained on the second level of ultrametricity are visualized in Figure \ref{fig:fig2}.

Memory requirements render further increase in $k$ impractical. Given that for $k_1=2$ and $k_1=3$ we observed a bounding reversal trend after the first two smallest allowed $k$, it might be reasonable to expect that similar thing would appear here as well, effectively making further increase in $k$ of no use. 

\begin{table}[h]
\caption{Union upper bound $\bar{\alpha}_{ult_2}(1;[1,4,12])$ on $\alpha_{ult_2}(1)$  }\vspace{.1in}
\centering
\def\arraystretch{1.2}
 \begin{tabular}{||c||c|c||}\hline\hline
 \hspace{-0in}                                               $\k$     &  $[1,4,8]$      &  $[1,4,12]$      \\ \hline\hline
    $[q_1,q_2]$  & $[0.9967,0.9583 ]$     & $[0.9989,0.9745 ]$        \\ \hline\hline 
   $\bar{\alpha}_{ult_2}(1;\k)$ &   \bl{$\mathbf{1.6268}$}    &   \red{$\mathbf{1.6219}$}      \\ \hline\hline
  \end{tabular}
\label{tab:tab14}
\end{table}

 \begin{figure}[h]
\centering
\centerline{\includegraphics[width=.7\linewidth]{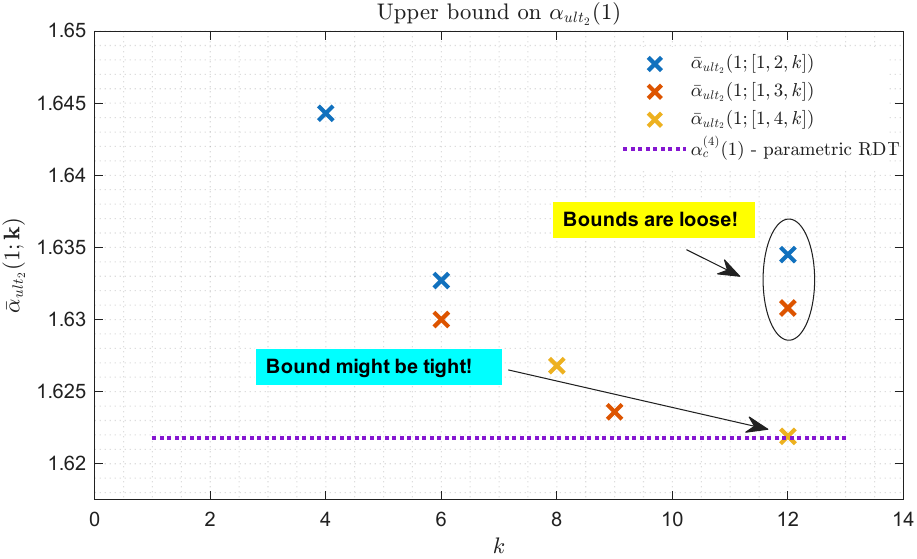}}
\caption{Upper bounds $\bar{\alpha}_{ult_2}(1;\k)$ on ultrametric OGP critical constraint  density $\alpha_{ult_2}(1)$.}
\label{fig:fig2}
\end{figure}

In any event, we note the following remarkable property of the obtained bounds. Namely, our best bound, $\approx 1.6219$, on the second level of ultrametricity is very close to the $\approx 1.6218$ prediction obtained on the third partial level of lifting via parametric RDT in \cite{Stojnicalgsbp26}. Moreover, this parallels the trend that we observed on the first level of ultrametricity, where our best bound $\approx 1.6578$ is also remarkably close to second partial lifting level prediction $\approx 1.6576$ obtained via parametric RDT in \cite{Stojnicalgsbp26}.

These developments could be a coincidence. However, they can also point to a potential connection between the ultrametric OGP and parametric RDT and statistical computational gaps. Checking what happens on higher levels of ultrametricity might shed some light on whether there is indeed such a connection. Proceeding in that direction at present is not a smooth process as overcoming memory requirements appears rather challenging. Nonetheless, we conducted some limited evaluations on the third ultrametric level and we discuss the obtained results next.

\subsection{Third level of ultrametricity -- $ult_3$-OGP}
\label{sec:ult3}

We start by noting that  $s=3$ on the third level of ultametricity. Specializing (\ref{eq:ultogp6}) and (\ref{eq:ultogp7}) to $s=3$ we further have  $\q=[q_0,q_1,q_2,q_3,q_4]$ and $\k=[k_0,k_1,k_2,k_3]$ with
  \begin{equation}\label{eq:ult3ogp6}
1  =q_0 > q_1> q_2 > q_3>q_4=0, \quad \mbox{and } \quad 1 =k_0 <  k_1 < k_2 < k_3 = k, \mbox{ where } \frac{k_{3}}{k_2},\frac{k_{2}}{k_1},\frac{k_{1}}{k_0}\in\mZ.
\end{equation}
  Analogously to (\ref{eq:ult2eq7}), we write for a $Q\in\cQ_{ult_3}(\k)$
 \begin{eqnarray}\label{eq:ult3eq7}
& &  \hspace{-.4in}\mP_G(\exists X\in\cX(\k;Q)\mbox{ such that } |GX_{:,i}|\leq \kappa \1, i=1,2,\dots,k) 
\leq 
   \lp2^{h_q(\k)} p_q(\k)^{\alpha}\rp^n,
    \end{eqnarray}
    where, as in (\ref{eq:ult2eq8}) 
\begin{eqnarray}\label{eq:ult3eq8}
h^{(3)}_{\q}(\k) &  = & \frac{\log_2\lp  |\cX(\k;Q)|  \rp}{n} \nonumber \\
p^{(3)}_{\q}(\k) & = &         \mP\lp \forall i, |G_{1,:}X_{:,i}|\leq \kappa \mbox{ and } X^TX=Q  \rp .
    \end{eqnarray}    
Following the methodology of previous sections, we then write
\begin{eqnarray}\label{eq:ult33ogpeq15}
 p^{(3)}_{\q}(\k) & = &   \frac{1}{\lp 2\pi\rp^{\frac{k}{2}} \det(Q)^{\frac{1}{2}} }\int_{|\g|\leq \kappa} e^{-\frac{1}{2}\g^TQ^{-1}\g}d\g,
    \end{eqnarray}    
where 
\begin{eqnarray}\label{eq:ult33ogpeq14}
\g  =  \begin{bmatrix}
                  g_1 \\ g_2 \\\vdots \\g_k        \end{bmatrix} 
\quad                  \mbox{ and } \quad
              Q  =  (1-q_1)I +(q_1-q_2)\cI^{(1)}   + (q_2-q_3) \cI^{(2)}   + q_3 \cI^{(3)}  ,
               \end{eqnarray}
               and
\begin{eqnarray}\label{eq:ult33ogpeq14a0}               
               \cI^{(1)} =  I_{\frac{k}{k_1}\times\frac{k}{k_1}} \1_{k_1}\1_{k_1}^T   \quad                  \mbox{ and } \quad              
               \cI^{(2)} =  I_{\frac{k}{k_2}\times\frac{k}{k_2}} \1_{k_2}\1_{k_2}^T   \quad                  \mbox{ and } \quad              
              \cI^{(3)} =\1_k\1_k^T .
\end{eqnarray}
After setting
 \begin{eqnarray}\label{eq:ult33ogpeq15a1}
c_{w1} & =  &\frac{1}{1-q_1} \nonumber \\
 c_{w2} &  = & -(c_{w1})^2\lp \frac{1}{q_1-q_2} + k_1 c_{w1} \rp^{-1} 
\nonumber \\
c_{w3} & = & -(c_{w1}+k_1 c_{w2})^2\lp \frac{1}{q_2-q_3} +k_2( c_{w1} + k_1 c_{w2} ) \rp^{-1}
\nonumber \\
c_{w4} & = & -(c_{w1}+k_1 c_{w2}+k_2 c_{w3})^2\lp \frac{1}{q_3} +k( c_{w1} + k_1 c_{w2}+ k_2 c_{w3} ) \rp^{-1},
\end{eqnarray}
one finds
\begin{eqnarray}\label{eq:ult33ogpeq15a0b0}
Q^{-1}  &  = &   c_{w1}I + c_{w2}\cI^{(1)} +  c_{w3} \cI^{(2)}+  c_{w4} \cI^{(3)}.
\end{eqnarray}
We then partition $\g$ 
\begin{eqnarray}\label{eq:ult33ogpeq15a1b1}
 \g  =  \begin{bmatrix}
                  \bar{\bar{\g}}_{1} \\ \bar{\bar{\g}}_{2} \\\vdots \\\bar{\bar{\g}}_{\frac{k}{k_2}}        \end{bmatrix} ,
                  \mbox{ where }
 \bar{\bar{\g}}_j  =  \begin{bmatrix}
                  \bar{\g}_{(j-1)\frac{k_2}{k_1} +1} \\ \bar{\g}_{(j-1)\frac{k_2}{k_1} +2} \\\vdots \\ \bar{\g}_{(j-1)\frac{k_2}{k_1} + \frac{k_2}{k_1}}        \end{bmatrix} ,
                  \mbox{ and }
\bar{\g}_{(j-1)\frac{k_2}{k_1} +i}  =  \begin{bmatrix}
                   g_{(j-1)k_2+(i-1)k_1+1} \\ g_{(j-1)k_2+(i-1)k_1+2} \\\vdots \\ g_{(j-1)k_2+(i-1)k_1+k_1}        \end{bmatrix} ,  
\end{eqnarray}
and further transform (\ref{eq:ult33ogpeq15}) 
\begin{eqnarray}\label{eq:ult33ogpeq15a2}
 p^{(3)}_{\q}(\k) & = &   \frac{1}{\lp 2\pi\rp^{\frac{k}{2}} \det(Q)^{\frac{1}{2}} }\int_{|\g|\leq \kappa} e^{-\frac{1}{2}\g^T\lp    c_{w1} I + c_{w2}\cI^{(1)}  + c_{w3}\cI^{(2)}  + c_{w4}\cI^{(3)}  \rp \g}d\g
 \nonumber \\
 & = &   \frac{1}{\lp 2\pi\rp^{\frac{k}{2}} \det(Q)^{\frac{1}{2}} }\int_{|\g|\leq \kappa} e^{-\frac{c_{w1}}{2}\g^T\g  -\frac{ c_{w2}}{2}\g^T \cI^{(1)} \g -\frac{ c_{w3}}{2}\g^T \cI^{(2)} \g  -\frac{ c_{w4}}{2}\g^T \cI^{(3)} \g  }d\g
 \nonumber \\
 & = &   \frac{1}{\lp 2\pi\rp^{\frac{k}{2}} \det(Q)^{\frac{1}{2}} }
 \nonumber \\
& & \times
 \int_{|\g|\leq \kappa} e^{-\frac{c_{w1}}{2}\g^T\g  }  
\prod_{i=1}^{\frac{k}{k_2}}
 \frac{1}{\sqrt{2\pi}}\int_{\bar{\bar{z}}_i} e^{\sqrt{-c_{w3}}\1^T \bar{\bar{\g}}_i \bar{\bar{z}}_i -\frac{\bar{\bar{z}}_i^2}{2}} d\bar{\bar{z}}_i  
  \nonumber \\
& &
 \times
 \prod_{i=1}^{\frac{k_2}{k_1}}
 \frac{1}{\sqrt{2\pi}}\int_{\bar{z}_i} e^{\sqrt{-c_{w2}}\1^T \bar{\g}_i \bar{z}_i -\frac{\bar{z}_i^2}{2}} d\bar{z}_i 
 \frac{1}{\sqrt{2\pi}}\int_z e^{\sqrt{-c_{w4}}\1^T \g z -\frac{z^2}{2}} dz d\g
 \nonumber \\
 & = &   \frac{1}{\det(Q)^{\frac{1}{2}} } \frac{1}{\sqrt{2\pi}^{1+\frac{k_2}{k_1}+\frac{k}{k_2}}} 
  \int_z
 \prod_{i=1}^{\frac{k}{k_2}}
\int_{\bar{\bar{z}}_i}
\nonumber \\
& & \times
 \prod_{i=1}^{\frac{k_2}{k_1}}
\int_{\bar{z}_i}
\lp
\frac{1}{\lp 2\pi\rp^{\frac{k_1}{2}} }
 \int_{|\bar{\g}_i|\leq \kappa} e^{-\frac{c_{w1}}{2}\bar{\g}_i^T\bar{\g}_i  +\sqrt{-c_{w2}}\1^T \bar{\g}_i \bar{z}_i  
  +\sqrt{-c_{w3}}\1^T \bar{\g}_i \bar{\bar{z}}_i  
  +\sqrt{-c_{w4}}\1^T \bar{\g}_i z  } d\bar{\g}_i 
 \rp
 e^{-\frac{\bar{z}_i^2}{2}} d\bar{z}_i
 \nonumber \\
 & & \times
 e^{-\frac{\bar{\bar{z}}_i^2}{2}} d\bar{\bar{z}}_i
 e^{-\frac{z^2}{2}} dz
\nonumber \\
 & = &   \frac{1}{\det(Q)^{\frac{1}{2}} } \frac{1}{\sqrt{2\pi}^{1+\frac{k_2}{k_1}+\frac{k}{k_2}}} 
  \int_z
 \prod_{i=1}^{\frac{k}{k_2}}
\int_{\bar{\bar{z}}_i}
\nonumber \\
& & \times
 \prod_{i=1}^{\frac{k_2}{k_1}}
\int_{\bar{z}_i}
\lp
\frac{1}{\lp 2\pi\rp^{\frac{1}{2}} }
 \int_{|g_1|\leq \kappa} e^{-\frac{c_{w1}}{2}g_1^2  +\sqrt{-c_{w2}}  g_1 \bar{z}_i  
   +\sqrt{-c_{w3}}  g_1 \bar{\bar}{z}_i  +\sqrt{-c_{w4}} g_1 z  } dg_1 
 \rp^{k_1}
 e^{-\frac{\bar{z}_i^2}{2}} d\bar{z}_i
\nonumber \\
& & \times
 e^{-\frac{\bar{\bar{z}}_i^2}{2}} d\bar{\bar{z}}_i
 e^{-\frac{z^2}{2}} dz
 \nonumber \\
 & = &   \frac{1}{\det(Q)^{\frac{1}{2}} } \frac{1}{\sqrt{2\pi}^{1+\frac{k_2}{k_1}+\frac{k}{k_2}}} 
  \int_z
 \prod_{i=1}^{\frac{k}{k_2}}
\int_{\bar{\bar{z}}_i}
 \prod_{i=1}^{\frac{k_2}{k_1}}
\int_{\bar{z}_i}
\lp
I_0^{(3)}(\bar{z}_i,\bar{\bar{z}}_i,z) \rp^{k_1}
 e^{-\frac{\bar{z}_i^2}{2}} d\bar{z}_i
 e^{-\frac{\bar{\bar{z}}_i^2}{2}} d\bar{\bar{z}}_i
 e^{-\frac{z^2}{2}} dz
 \nonumber \\
 & = &   \frac{1}{\det(Q)^{\frac{1}{2}} } \frac{1}{\sqrt{2\pi}^{1+\frac{k}{k_2}}} 
   \int_z
 \prod_{i=1}^{\frac{k}{k_2}}
\int_{\bar{\bar{z}}_i}
 \lp
 \int_{\bar{z}_1}
 \frac{1}{\sqrt{2\pi} }
\lp
I_0^{(3)}(\bar{z}_1,\bar{\bar{z}}_1,z) \rp^{k_1}
 e^{-\frac{\bar{z}_1^2}{2}} d\bar{z}_1
 \rp^{\frac{k_2}{k_1}}
 e^{-\frac{\bar{\bar{z}}_i^2}{2}} d\bar{\bar{z}}_i
 e^{-\frac{z^2}{2}} dz
 \nonumber \\
 & = &   \frac{1}{\det(Q)^{\frac{1}{2}} } \frac{1}{\sqrt{2\pi}} 
   \int_z
   \lp
\lp I_1^{(3)}(\bar{\bar{z}}_i,z) \rp^{\frac{k_2}{k_1}} 
 e^{-\frac{\bar{\bar{z}}_i^2}{2}} d\bar{\bar{z}}_i
 \rp^{\frac{k}{k_2}}
e^{-\frac{z^2}{2}} dz \nonumber \\
 & = &   \frac{1}{\det(Q)^{\frac{1}{2}} } \frac{1}{\sqrt{2\pi}} 
   \int_z
   \lp
I_2^{(3)}(z) \rp^{\frac{k}{k_2}}
e^{-\frac{z^2}{2}} dz,
    \end{eqnarray}    
    where
\begin{eqnarray}\label{eq:ult33ogpeq15a2b0}
  I_0^{(3)}(\bar{z}_1,\bar{\bar{z}}_1,z) 
& = &
\frac{1}{\lp 2\pi\rp^{\frac{1}{2}} }
 \int_{|g_1|\leq \kappa} e^{-\frac{c_{w1}}{2}g_1^2  +\sqrt{-c_{w2}}  g_1 \bar{z}_1   
 +\sqrt{-c_{w3}}  g_1 \bar{\bar{z}}_1  
 +\sqrt{-c_{w4}} g_1 z  } dg_1 
 \nonumber \\
   I_1^{(3)}(\bar{\bar{z}}_i,z) 
 & = &
 \frac{1}{\sqrt{2\pi} }
 \int_{\bar{z}_1}
\lp
  I_0^{(3)}(\bar{z}_1,\bar{\bar{z}}_1,z) 
 \rp^{k_1}
 e^{-\frac{\bar{z}_1^2}{2}} d\bar{z}_1 
 \nonumber \\
   I_2^{(3)}(z) 
 & = &
 \frac{1}{\sqrt{2\pi} }
 \int_{z}
\lp
  I_1^{(3)}(\bar{\bar{z}}_1,z) 
 \rp^{\frac{k_2}{k_1}}
 e^{-\frac{\bar{\bar{z}}_1^2}{2}} d\bar{\bar{z}}_1.
      \end{eqnarray}    
After setting
\begin{eqnarray}
\label{eq:ult33ogpeq15a3}
D^{(3)} = \frac{\sqrt{-c_{w2}} }  {\sqrt{c_{w1}} } \bar{z}_1
+ \frac{\sqrt{-c_{w3}} }  {\sqrt{c_{w1}} } \bar{\bar{z}}_1
+
 \frac{\sqrt{-c_{w4}} }  {\sqrt{c_{w1}} } z 
  \quad \mbox{ and } \quad  E^{(3)}  = \sqrt{c_{w1}}\kappa,
\end{eqnarray}
and
\begin{eqnarray}\label{eq:ult33ogpeq15a4}
 \tilde{I}_0^{(3)}(\bar{z}_1,\bar{\bar{z}}_1,z) 
 & = &
 -\frac{{\mathrm{e}}^{\frac{\lp D^{(3)}\rp^2}{2}}\,\left(\mathrm{erf}\left(\frac{\sqrt{2}\,D^{(3)}}{2}
 -\frac{\sqrt{2}\,E^{(3)}}{2}\right)-\mathrm{erf}\left(\frac{\sqrt{2}\,D^{(3)}}{2}
 +\frac{\sqrt{2}\,E^{(3)}}{2}\right)\right)}{2},
 \nonumber \\
   \tilde{I}_1^{(3)}(z) 
 & = &
 \frac{1}{\sqrt{2\pi} }
 \int_{\bar{z}_1}
\lp
\tilde{I}_0^{(3)}(\bar{z}_1,\bar{\bar{z}}_1,z) \rp^{k_1}
 e^{-\frac{\bar{z}_1^2}{2}} d\bar{z}_1 \nonumber \\
   \tilde{I}_2^{(3)}(z) 
 & = &
 \frac{1}{\sqrt{2\pi} }
 \int_{\bar{\bar{z}}_1}
\lp
\tilde{I}_1^{(3)}(\bar{z}_1,\bar{\bar{z}}_1,z) \rp^{\frac{k_2}{k_1}}
 e^{-\frac{\bar{\bar{z}}_1^2}{2}} d\bar{\bar{z}}_1,
      \end{eqnarray}    
and finds
\begin{eqnarray}\label{eq:ult33ogpeq15a5}
 p^{(3)}_{\q}(\k) & = &   
  \frac{1}{\sqrt{c_{w1}}^k\det(Q)^{\frac{1}{2}} } \frac{1}{\sqrt{2\pi}} 
   \int_z
\lp \tilde{I}_2^{(3)}(z) \rp^{\frac{k}{k_2}} e^{-\frac{z^2}{2}} dz.
    \end{eqnarray} 
    
For $h^{(3)}_{\q}(\k)$ we again utilize  
 \begin{eqnarray}\label{eq:ult3kogpeq15a1b3}
h^{(3)}_{\q}(\k) = 1 + \max_{\a} & & h_v(\a) 
\nonumber \\
\mbox{subject to} & &  A^{(\k)}_{eq}\a =\b^{(\k)}_{eq}
\nonumber \\
  & &  \sum_{i=1}^{l_{c,k}+1}\a_i =1 .
    \end{eqnarray}
A combination of  (\ref{eq:ult33ogpeq15a1})  and (\ref{eq:ult33ogpeq15a3})--(\ref{eq:ult3kogpeq15a1b3})  is sufficient to utilize (\ref{eq:ult3eq7}) and obtain the following upper bound
\begin{eqnarray}\label{eq:ult33ogpeq16}
 \alpha_{ult_3}(\kappa) &  \leq  & \min_{\q} \lp  - \frac{h^{(3)}_{\q}(\k) \log(2) }{\log(p^{(3)}_{\q}(\k))} \rp  \triangleq  \bar{\alpha}_{ult_3}(\kappa;\k).
    \end{eqnarray}
        
As was the case on the second level of ultrametricity, to be able to put the above machinery into practical use, one needs to determine $A^{(\k)}_{eq}$ and $\b^{(\k)}_{eq}$ in program (\ref{eq:ult3kogpeq15a1b3}).  One again has $A^{(\k)}_{eq}=A^{(k)}_{eq}$ and the only thing that needs to be adjusted is $\b^{(\k)}_{eq}$.  Below we consider a few particular $\k$ choices together with associated needed $\b^{(\k)}_{eq}$ adjustments.

Before getting into the concrete $\k$ choices, we note that on the third level of ultrametricity  we have $q_1$, $q_2$, and $q_3$ wchih implies that (\ref{eq:ult23ogpeq9}) needs a slight adjustment as well. To that end we write
\begin{eqnarray}\label{eq:ult33ogpeq9}
q_{sx1} & = & (1+q_1)/2 \nonumber \\
q_{sx2} & = & (1+q_2)/2 \nonumber \\
q_{sx3} & = & (1+q_3)/2 \nonumber \\
q_{s1} & = &  q_{sx1}
 \nonumber \\ 
    q_{s2} &  = &  q_{sx2} - (1-q_{s1})/2 .
\end{eqnarray}

\subsubsection{$ult_3$-OGP -- $\k=[1,2,4,8]$}
\label{sec:ult38421}

We start considerations with simplest possible $\k$ sequences. On the third level of ultrametricity that is precisely $\k=[1,2,4,8]$. As mentioned above, the only thing that needs an additional attention is $\b^{(\k)}_{eq} = \b^{([1,2,4,8])}_{eq}$. We continue to apply the strategy developed above. Analogously to (\ref{eq:ult2kogpeq15a1b1c1d0e0f0}) we write

 \begin{eqnarray}\label{eq:ult3exmp1}
\b^{([1,2,4,8])}_{eq} & = & \begin{bmatrix}
 q_{sx3}  & q_{sx3}  & q_{sx3} & q_{sx3} &  q_{sx2}  & q_{sx2} & q_{sx1} & 0_{1\times (2^{8-2}-1)} &
\lp\b^{([1,2,4,7])}_{eq}\rp^T             
\end{bmatrix}^T
\nonumber \\
\b^{([1,2,4,7])}_{eq} & = & \begin{bmatrix}
 q_{sx3}  & q_{sx3}  & q_{sx3} & q_{sx3} &  q_{sx2}  & q_{sx2} & 0_{1\times (2^{7-2}-1)} &
\lp\b^{([1,2,4,6])}_{eq}\rp^T             
\end{bmatrix}^T\nonumber \\
\b^{([1,2,4,6])}_{eq} & = & \begin{bmatrix}
  q_{sx3}  & q_{sx3}  & q_{sx3} & q_{sx3} &  q_{sx1}  &  & 0_{1\times (2^{6-2}-1)} &
\lp\b^{([1,2,4,5])}_{eq}\rp^T             
\end{bmatrix}^T,
\end{eqnarray}
where  
{\small\begin{equation}\label{eq:ult3exmp2}
\b^{([1,2,4,5])}_{eq} =\begin{bmatrix}
 q_{sx3} & q_{sx3} & q_{sx3} &  q_{sx3} &  q_{s2} & 0 & 0 & 0 & 0 & 0 & 0 &
           q_{sx2} & q_{sx2} & q_{sx1} & q_{s1}-q_{s2} & \frac{1-q_{s1}}{2}  &  \frac{1-q_{s1}}{2}  
\end{bmatrix}^T.
\end{equation}}

\noindent We have done on many occasions adjustments similar to the one given above. Since this is the first $\k$ sequence on the third level of ultrametricity, we find it useful to briefly recall on the key reasoning points that allow to  construct (\ref{eq:ult3exmp1}). 

First, we recall that the ultrametric structure governed by sequence  $\k=[1,2,4,8]$ has four $\frac{8}{4}=2$ clusters of size four (the first four vectors, $(X_{:,1},X_{:,2},X_{:,3},X_{:,4})$, belong to the first big cluster whereas the remaining four $(X_{:,5},X_{:,6},X_{:,7},X_{:,8})$ belong to the second). Each of these clusters has $\frac{4}{2}=2$ smaller clusters of size two (they are $(X_{:,1},X_{:,2})$ and  $(X_{:,3},X_{:,4})$ for the first big cluster and $(X_{:,5},X_{:,6})$ and  $(X_{:,7},X_{:,8})$  for the second one).

Now we look at vectors $\b_{eq}$. First seven components of $\b^{([1,2,4,8])}_{eq} $ relate to sign overlaps between the eighth vector $X_{:,8}$ and the first seven $X_{:,i},i=1,2,\dots,7$. The first four denote ($n$ scaled) sign overlaps with the vectors in the most distant cluster and are equal to $q_{sx3}$. Fifth and sixth denote the ($n$ scaled) sign overlaps  with the elements in the  closer  cluster. They are equal to $q_{sx2}$. Finally, the seventh element, $q_{sx1}$, is the sign overlap with the element of the same cluster. 

The reasoning for $\b^{([1,2,4,7])}_{eq} $ and $\b^{([1,2,4,6])}_{eq} $ is analogous. The first six elements of $\b^{([1,2,4,7])}_{eq} $ denote the $n$ scaled sign overlaps between the  seventh vector $X_{:,7}$  and the first six $X_{:,i},i=1,2,\dots,6$. Since this vector belongs to the same cluster as the eighth one, the first six elements match the first six elements of $\b^{([1,2,4,8])}_{eq} $. The first five elements of $\b^{([1,2,4,6])}_{eq} $ denote the $n$ scaled sign overlaps between the sixth vector $X_{:,6}$ and the first five $X_{:,i},i=1,2,\dots,5$. Since the sixth vector belongs to the same big cluster as the seventh and eighth, its first four elements are equal to the first four elements of $\b^{([1,2,4,7])}_{eq}$ and $\b^{([1,2,4,8])}_{eq}$. However, one should note that the fifth element is different as the sixth vector belongs to a different smaller cluster.

All other parts of vectors $\b^{([1,2,4,6])}_{eq} $, $\b^{([1,2,4,7])}_{eq} $, and $\b^{([1,2,4,8])}_{eq} $ have the form that automatically follows from the recursion established for $k>5$ in earlier section. One also needs $\b^{([1,2,4,5])}_{eq} $. The first four elements of $\b^{([1,2,4,5])}_{eq} $ are the $n$ scaled sign overlaps between the fifth vector $X_{:,5}$ and the first four $X_{:,i},i=1,2,3,4$ and they are $q_{sx3}$. Elements 12-14 correspond to the same type of overlaps between the fourth vector and the first three. All other elements  of $\b^{([1,2,4,5])}_{eq} $ relate to the overlaps between the first three vectors and these have fixed structure discussed on a multitude of occasions in earlier sections.

The above is then sufficient to conduct numerical evaluations via (\ref{eq:ult3kogpeq15a1b3}). The obtained results are shown in Table \ref{tab:tab15}. The bound is smaller than all of the earlier bounds
obtained on the first and second ultrametric level. This suggest that the bounding mechanism might have the right trend.

\begin{table}[h]
\caption{Union upper bound $\bar{\alpha}_{ult_3}(1;[1,2,4,8])$ on $\alpha_{ult_3}(1)$  }\vspace{.1in}
\centering
\def\arraystretch{1.2}
 \begin{tabular}{||c||c||}\hline\hline
 \hspace{-0in}                                               $\k$     &  $[1,2,4,8]$      \\ \hline\hline
    $[q_1,q_2,q_3]$  & $[0.9986,0.9928,0.9590]$          \\ \hline\hline 
   $\bar{\alpha}_{ult_3}(1;\k)$ &   \bl{$\mathbf{1.6193}$}       \\ \hline\hline
  \end{tabular}
\label{tab:tab15}
\end{table}

In the above scenario in any of the clusters the number of largest subcluster is exactly two which is minimal possible. Below we consider three scenarios $\k=[1,2,6,12]$, $\k=[1,3,6,12]$, and $\k=[1,2,2,12]$  where some of the clusters have three largest subclusters.

\subsubsection{$ult_3$-OGP -- $\k=[1,2,6,12]$}
\label{sec:ult312621}

As usual, we need to adjust $\b^{(\k)}_{eq} = \b^{([1,2,6,12])}_{eq}$.  Analogously to (\ref{eq:ult2kogpeq15a1b1c1d0e0f0g0}) we write
\begin{eqnarray}\label{eq:ult3exmp3}
\b^{([1,2,6,12])}_{eq} & = & \begin{bmatrix}
 q_{sx3} \1_{1\times 6 } & q_{sx2} & q_{sx2} & q_{sx2}  & q_{sx2} & q_{sx1}  & 0_{1\times (2^{12-2}-1)} &
\lp\b^{([1,2,6,11])}_{eq}\rp^T             
\end{bmatrix}^T
\nonumber \\
\b^{([1,2,6,11])}_{eq} & = & \begin{bmatrix}
 q_{sx3} \1_{1\times 6 } & q_{sx2} & q_{sx2} & q_{sx2}  & q_{sx2}  & 0_{1\times (2^{11-2}-1)} &
\lp\b^{([1,2,6,10])}_{eq}\rp^T             
\end{bmatrix}^T
\nonumber \\
\b^{([1,2,6,10])}_{eq} & = & \begin{bmatrix}
 q_{sx3} \1_{1\times 6 } & q_{sx2} & q_{sx2} & q_{sx1}  & 0_{1\times (2^{10-2}-1)} &
\lp\b^{([1,2,6,9])}_{eq}\rp^T             
\end{bmatrix}^T
\nonumber \\
\b^{([1,2,6,9])}_{eq} & = & \begin{bmatrix}
 q_{sx3} \1_{1\times 6 } & q_{sx2} & q_{sx2} & 0_{1\times (2^{9-2}-1)} &
\lp\b^{([1,2,6,8])}_{eq}\rp^T             
\end{bmatrix}^T
\nonumber \\
\b^{([1,2,6,8])}_{eq} & = & \begin{bmatrix}
 q_{sx3} \1_{1\times 6 } & q_{sx2}  & 0_{1\times (2^{8-2}-1)} &
\lp\b^{([1,2,6,7])}_{eq}\rp^T             
\end{bmatrix}^T
\nonumber \\
\b^{([1,2,6,7])}_{eq} & = & \begin{bmatrix}
 q_{sx3} \1_{1\times 6 }  & 0_{1\times (2^{7-2}-1)} &
\lp\b^{([1,2,6,6])}_{eq}\rp^T             
\end{bmatrix}^T
\nonumber \\
\b^{([1,2,6,6])}_{eq} & = & \begin{bmatrix}
 q_{sx2}  & q_{sx2}  & q_{sx2}  & q_{sx2}  & q_{sx1}  & 0_{1\times (2^{6-2}-1)} &
\lp\b^{([1,2,6,5])}_{eq}\rp^T             
\end{bmatrix}^T,
\end{eqnarray}
where  
{\small\begin{equation}\label{eq:ult3exmp4}
\b^{([1,2,6,5])}_{eq} =\begin{bmatrix}
 q_{sx2} & q_{sx2} & q_{sx2} &  q_{sx2} &  q_{s2} & 0 & 0 & 0 & 0 & 0 & 0 &
           q_{sx2} & q_{sx2} & q_{sx1} & q_{s1}-q_{s2} & \frac{1-q_{s1}}{2}  &  \frac{1-q_{s1}}{2}  
\end{bmatrix}^T.
\end{equation}}

Conducting numerical evaluations via (\ref{eq:ult3kogpeq15a1b3}), we obtained results shown in Table \ref{tab:tab16}. We observe that the bound is smaller than for $\k=[1,2,4,8]$.

\begin{table}[h]
\caption{Union upper bound $\bar{\alpha}_{ult_3}(1;[1,2,6,12])$ on $\alpha_{ult_3}(1)$  }\vspace{.1in}
\centering
\def\arraystretch{1.2}
 \begin{tabular}{||c||c|c||}\hline\hline
 \hspace{-0in}                                               $\k$     &  $[1,2,4,8]$   &  $[1,2,6,12]$      \\ \hline\hline
    $[q_1,q_2,q_3]$  & $[0.9986,0.9928,0.9590]$ & $[0.9995,0.9960,0.9560]$         \\ \hline\hline 
   $\bar{\alpha}_{ult_3}(1;\k)$ &   \bl{$\mathbf{1.6193}$}  &   \bl{$\mathbf{1.6148}$}       \\ \hline\hline
  \end{tabular}
\label{tab:tab16}
\end{table}

\subsubsection{$ult_3$-OGP -- $\k=[1,3,6,12]$}
\label{sec:ult312631}

We now need adjustment for  $\b^{(\k)}_{eq} = \b^{([1,3,6,12])}_{eq}$. Following (\ref{eq:ult3exmp3}) we write
\begin{eqnarray}\label{eq:ult3exmp5}
\b^{([1,3,6,12])}_{eq} & = & \begin{bmatrix}
 q_{sx3} \1_{1\times 6 } & q_{sx2} & q_{sx2} & q_{sx2}  & q_{sx1} & q_{sx1}  & 0_{1\times (2^{12-2}-1)} &
\lp\b^{([1,3,6,11])}_{eq}\rp^T             
\end{bmatrix}^T
\nonumber \\
\b^{([1,3,6,11])}_{eq} & = & \begin{bmatrix}
 q_{sx3} \1_{1\times 6 } & q_{sx2} & q_{sx2} & q_{sx2}  & q_{sx1}  & 0_{1\times (2^{11-2}-1)} &
\lp\b^{([1,3,6,10])}_{eq}\rp^T             
\end{bmatrix}^T
\nonumber \\
\b^{([1,3,6,10])}_{eq} & = & \begin{bmatrix}
 q_{sx3} \1_{1\times 6 } & q_{sx2} & q_{sx2} & q_{sx2}  & 0_{1\times (2^{10-2}-1)} &
\lp\b^{([1,3,6,9])}_{eq}\rp^T             
\end{bmatrix}^T
\nonumber \\
\b^{([1,3,6,9])}_{eq} & = & \begin{bmatrix}
 q_{sx3} \1_{1\times 6 } & q_{sx1} & q_{sx1} & 0_{1\times (2^{9-2}-1)} &
\lp\b^{([1,3,6,8])}_{eq}\rp^T             
\end{bmatrix}^T
\nonumber \\
\b^{([1,3,6,8])}_{eq} & = & \begin{bmatrix}
 q_{sx3} \1_{1\times 6 } & q_{sx1}  & 0_{1\times (2^{8-2}-1)} &
\lp\b^{([1,3,6,7])}_{eq}\rp^T             
\end{bmatrix}^T
\nonumber \\
\b^{([1,3,6,7])}_{eq} & = & \begin{bmatrix}
 q_{sx3} \1_{1\times 6 }  & 0_{1\times (2^{7-2}-1)} &
\lp\b^{([1,3,6,6])}_{eq}\rp^T             
\end{bmatrix}^T
\nonumber \\
\b^{([1,3,6,6])}_{eq} & = & \begin{bmatrix}
 q_{sx2}  & q_{sx2}  & q_{sx2}  & q_{sx1}  & q_{sx1}  & 0_{1\times (2^{6-2}-1)} &
\lp\b^{([1,3,6,5])}_{eq}\rp^T             
\end{bmatrix}^T,
\end{eqnarray}
where  
{\small\begin{equation}\label{eq:ult3exmp6}
\b^{([1,2,6,5])}_{eq} =\begin{bmatrix}
 q_{sx2} & q_{sx2} & q_{sx2} &  q_{sx1} &  q_{s2} & 0 & 0 & 0 & 0 & 0 & 0 &
           q_{sx2} & q_{sx2} & q_{sx2} & q_{s1}-q_{s2} & \frac{1-q_{s1}}{2}  &  \frac{1-q_{s1}}{2}  
\end{bmatrix}^T.
\end{equation}}

Numerical evaluations via (\ref{eq:ult3kogpeq15a1b3}) give the results shown in Table \ref{tab:tab17}. We observe that the bound is smaller than bounds for both $\k=[1,2,4,8]$ and  $\k=[1,2,6,12]$.

\begin{table}[h]
\caption{Union upper bound $\bar{\alpha}_{ult_3}(1;[1,3,6,12])$ on $\alpha_{ult_3}(1)$  }\vspace{.1in}
\centering
\def\arraystretch{1.2}
 \begin{tabular}{||c||c|c|c||}\hline\hline
 \hspace{-0in}                                               $\k$     &  $[1,2,4,8]$   &  $[1,2,6,12]$    &  $[1,3,6,12]$      \\ \hline\hline
    $[q_1,q_2,q_3]$  & $[0.9986,0.9928,0.9590]$ & $[0.9995,0.9960,0.9560]$     & $[0.9992,0.9920,0.9580]$         \\ \hline\hline 
   $\bar{\alpha}_{ult_3}(1;\k)$ &   \bl{$\mathbf{1.6193}$}  &   \bl{$\mathbf{1.6148}$}   &   \bl{$\mathbf{1.6131}$}       \\ \hline\hline
  \end{tabular}
\label{tab:tab17}
\end{table}

\subsubsection{$ult_3$-OGP -- $\k=[1,2,4,12]$}
\label{sec:ult312421}

We now need adjustment for  $\b^{(\k)}_{eq} = \b^{([1,2,4,12])}_{eq}$. Following (\ref{eq:ult3exmp3})  and (\ref{eq:ult3exmp5}) we write
\begin{eqnarray}\label{eq:ult3exmp7}
\b^{([1,2,4,12])}_{eq} & = & \begin{bmatrix}
 q_{sx3} \1_{1\times 8 } & q_{sx2} & q_{sx2}  & q_{sx1}  & 0_{1\times (2^{12-2}-1)} &
\lp\b^{([1,2,4,11])}_{eq}\rp^T             
\end{bmatrix}^T
\nonumber \\
\b^{([1,2,4,11])}_{eq} & = & \begin{bmatrix}
 q_{sx3} \1_{1\times 8 } & q_{sx2} & q_{sx2}  & 0_{1\times (2^{11-2}-1)} &
\lp\b^{([1,2,4,10])}_{eq}\rp^T             
\end{bmatrix}^T
\nonumber \\
\b^{([1,2,4,10])}_{eq} & = & \begin{bmatrix}
 q_{sx3} \1_{1\times 8 } & q_{sx1}   & 0_{1\times (2^{10-2}-1)} &
\lp\b^{([1,2,4,9])}_{eq}\rp^T             
\end{bmatrix}^T
\nonumber \\
\b^{([1,2,4,9])}_{eq} & = & \begin{bmatrix}
 q_{sx3} \1_{1\times 8 } & 0_{1\times (2^{9-2}-1)} &
\lp\b^{([1,2,4,8])}_{eq}\rp^T             
\end{bmatrix}^T
\nonumber \\
\b^{([1,2,4,8])}_{eq} & = & \begin{bmatrix}
 q_{sx3} \1_{1\times 4 } & q_{sx2} & q_{sx2} & q_{sx1}  & 0_{1\times (2^{8-2}-1)} &
\lp\b^{([1,2,4,7])}_{eq}\rp^T             
\end{bmatrix}^T
\nonumber \\
\b^{([1,2,4,7])}_{eq} & = & \begin{bmatrix}
 q_{sx3} \1_{1\times 4 }  & q_{sx2}  & q_{sx2}  & 0_{1\times (2^{7-2}-1)} &
\lp\b^{([1,2,4,6])}_{eq}\rp^T             
\end{bmatrix}^T
\nonumber \\
\b^{([1,2,4,6])}_{eq} & = & \begin{bmatrix}
q_{sx3} \1_{1\times 4 }   & q_{sx1}  & 0_{1\times (2^{6-2}-1)} &
\lp\b^{([1,2,4,5])}_{eq}\rp^T             
\end{bmatrix}^T,
\end{eqnarray}
where  
{\small\begin{equation}\label{eq:ult3exmp8}
\b^{([1,2,4,5])}_{eq} =\begin{bmatrix}
 q_{sx3} & q_{sx3} & q_{sx3} &  q_{sx3} &  q_{s2} & 0 & 0 & 0 & 0 & 0 & 0 &
           q_{sx2} & q_{sx2} & q_{sx1} & q_{s1}-q_{s2} & \frac{1-q_{s1}}{2}  &  \frac{1-q_{s1}}{2}  
\end{bmatrix}^T.
\end{equation}}

Conducting once again numerical evaluations via (\ref{eq:ult3kogpeq15a1b3}) we obtain results shown in Table \ref{tab:tab18}. This time, however,  the bound is smaller than some but not smaller than all bounds obtained on the third ultrametric level. We also note that our smallest bound $\approx 1.6131$ is fairly close to  $\approx 1.6093$ that was obtained via parametric RDT on the fourth partial lifting   level. All bounds obtained on the third level of ultrametriciy are visualized in Figure \ref{fig:fig3}. To ensure a systematic view and easiness of comparison, in Figure \ref{fig:fig3} we also complement the bounds from the third ultrametric level with the bounds from the first and second level.

\begin{table}[h]
{\small
\caption{Union upper bound $\bar{\alpha}_{ult_3}(1;[1,2,4,12])$ on $\alpha_{ult_3}(1)$  }\vspace{.1in}
\centering
\def\arraystretch{1.2}
 \begin{tabular}{||c||c|c|c|c||}\hline\hline
 \hspace{-0in}                                               $\k$     &  $[1,2,4,8]$   &  $[1,2,6,12]$    &  $[1,3,6,12]$    &  $[1,2,4,12]$      \\ \hline\hline
    $[q_1,q_2,q_3]$  & $[0.9986,0.9928,0.9590]$ & $[0.9995,0.9960,0.9560]$     & $[0.9992,0.9920,0.9580]$   & $[0.9994,0.9970,0.980]$         \\ \hline\hline 
   $\bar{\alpha}_{ult_3}(1;\k)$ &   \bl{$\mathbf{1.6193}$}  &   \bl{$\mathbf{1.6148}$}   &   \red{$\mathbf{1.6131}$}  &  \bl{$\mathbf{1.6178}$}       \\ \hline\hline
  \end{tabular}
  \label{tab:tab18}}
\end{table}

 \begin{figure}[h]
\centering
\centerline{\includegraphics[width=.8\linewidth]{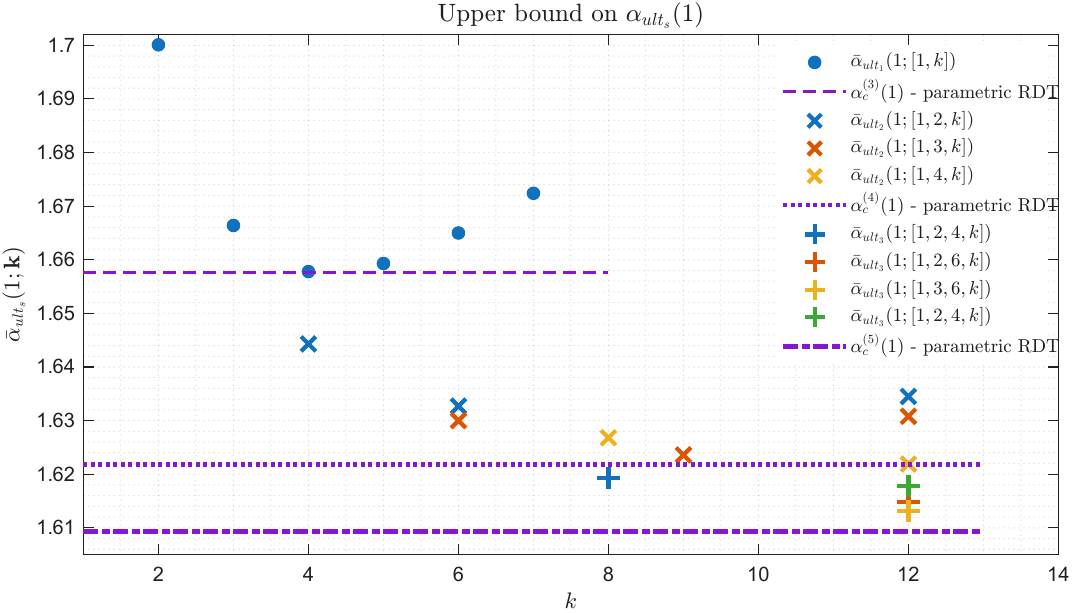}}
\caption{Upper bounds $\bar{\alpha}_{ult_s}(1;\k)$ on ultrametric OGP critical constraint  density $\alpha_{ult_s}(1)$.}
\label{fig:fig3}
\end{figure}

We should also add that as the ultrametricity level, $s$, increases the evaluations are more cumbersome which often renders achieving a full numerical precision as rather difficult. Consequently, the parametric results given in this and previous section (i.e., Tables \ref{tab:tab7}-\ref{tab:tab18} may not be fully accurate. However,
after conducting extensive numerical work, our experience is that $\bar{\alpha}_{ult}(1;\k)$  estimates may not be overly sensitive to parameters changes. This further suggests that even if some parameters  may need tiny adjustments, we would not expect such adjustments to significantly impact given $\bar{\alpha}_{ult}(1;\k)$ bounds estimates.

\subsection{$s$-th level of ultrametricity -- $ult_s$-OGP}
\label{sec:ults}

The above results for the second and third level of ultrametricity easily generalize to the $s$-th level as well. For the completeness, we formalize the above procedure so that it is readily applicable for any $s$. 

We first note that for general $s$  $\q=[q_0,q_1,q_2,\dots,q_{s+1}]$ and $\k=[k_0,k_1,k_2,\dots,k_s]$ with
  \begin{equation}\label{eq:ultseq1}
1  =q_0 > q_1> q_2>\dots > q_s>q_{s+1}=0 \quad \mbox{and } \quad 1 =k_0 <  k_1 < k_2 <\dots < k_s = k, \end{equation}
where 
 \begin{equation}\label{eq:ultseq2}
\frac{k_{s}}{k_{s-1}},\frac{k_{s-1}}{k_{s-2}},\dots,\frac{k_{2}}{k_1},\frac{k_{1}}{k_0}\in\mZ.
\end{equation}
Set 
 \begin{eqnarray}\label{eq:ultseq3}
c_{w1} & =  &\frac{1}{1-q_1} \nonumber \\
  c_{w(i+1)} &  = &
  - \lp \sum_{j=1}^{i}k_{j-1}c_{wj}  \rp^2 \lp \frac{1}{q_i-q_{i+1}} + k_i \sum_{j=1}^{i}k_{j-1}c_{wj} \rp^{-1} , i=1,2,\dots,s, 
  \nonumber \\
  \z & = & [z_1,z_2,\dots,z_s] 
  \nonumber \\  
D^{(s)} & = & \sum_{i=1}^{s}\frac{\sqrt{-c_{w(i+1)}} }  {\sqrt{c_{w1}} } z_i
  \nonumber \\  
  E^{(s)} &  =  & \sqrt{c_{w1}}\kappa,
\end{eqnarray}
and
\begin{eqnarray}\label{eq:ultseq4}
 \tilde{I}_0^{(s)}(\z) 
 & = &
 -\frac{{\mathrm{e}}^{\frac{\lp D^{(s)}\rp^2}{2}}\,\left(\mathrm{erf}\left(\frac{\sqrt{2}\,D^{(s)}}{2}
 -\frac{\sqrt{2}\,E^{(s)}}{2}\right)-\mathrm{erf}\left(\frac{\sqrt{2}\,D^{(s)}}{2}
 +\frac{\sqrt{2}\,E^{(s)}}{2}\right)\right)}{2},
 \nonumber \\
    \tilde{I}_i^{(s)}(\z) 
 & = &
 \frac{1}{\sqrt{2\pi} }
 \int_{\bar{\bar{z}}_1}
\lp
\tilde{I}_{i-1}^{(s)}(\z) \rp^{\frac{k_i}{k_{i-1}}}
 e^{-\frac{z_i^2}{2}} d z_i,i=1,2,\dots, (s-1).
      \end{eqnarray}    
For 
\begin{eqnarray}\label{eq:ultseq4a0}
 Q = \sum_{i=0}^{s} (q_{i}-q_{i+1}) I_{\frac{k}{k_{i}}\times\frac{k}{k_{i}}} \1_{k_{i}}\1_{k_{i}}^T,
               \end{eqnarray}
 and  $ p^{(s)}_{\q}(\k)$ and  $h^{(s)}_{\q}(\k)$ given by
\begin{eqnarray}\label{eq:ultseq5}
 p^{(s)}_{\q}(\k) & = &   
  \frac{1}{\sqrt{c_{w1}}^{k_s}\det(Q)^{\frac{1}{2}} } \frac{1}{\sqrt{2\pi}} 
   \int_z
\lp \tilde{I}_{s-1}^{(s)}(\z) \rp^{\frac{k_s}{k_{s-1}}} e^{-\frac{z_s^2}{2}} dz_s.
    \end{eqnarray} 
and 
 \begin{eqnarray}\label{eq:ultseq6}
h^{(s)}_{\q}(\k) = 1 + \max_{\a} & & h_v(\a) 
\nonumber \\
\mbox{subject to} & &  A^{(\k)}_{eq}\a =\b^{(\k)}_{eq}
\nonumber \\
  & &  \sum_{i=1}^{l_{c,k}+1}\a_i =1 .
    \end{eqnarray}
the following theorem summarizes union-bound strategy for $\alpha_{ult_s}(\kappa)$.

\begin{theorem}  
\label{thm:thms} 
Consider a statistical SBP $\mathbf{\mathcal S} \lp G,\kappa,\alpha \rp$ from (\ref{eq:ex1a0}) and let the associated ultrametric OGP critical constraint density, $\alpha_{ult_s}(\kappa)$, be as introduced in (\ref{eq:ultogp10})  and (\ref{eq:ultogp10a0}). Also, let $p^{(s)}_{\q}(\k)$ and $h^{(s)}_{\q}(\k) $ be as in (\ref{eq:ultseq5}) and (\ref{eq:ultseq6}), respectively. Then 
 \begin{eqnarray}\label{eq:thmseq1}
 \alpha_{ult_s}(\kappa) &  \leq  & \min_{\q} \lp  - \frac{h^{(s)}_{\q}(\k) \log(2) }{\log(p^{(s)}_{\q}(\k))} \rp  \triangleq  \bar{\alpha}_{ult_s}(\kappa;\k).
    \end{eqnarray}
\end{theorem}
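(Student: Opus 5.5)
The plan is to run the first moment (union bound) argument of (\ref{eq:ult1eq3}) on the set $\cX_{ult_s}(\k;Q)$, exactly as was done level by level in the previous sections, and then read off the threshold. Fix $\k$ and an admissible $\q$ with the matching $Q$ from (\ref{eq:ultseq4a0}). By the union bound and the rotational invariance of the rows of $G$ (all $X\in\cX_{ult_s}(\k;Q)$ share the Gram matrix $Q$, so $GX$ has the same law for each of them), we get
\begin{equation*}
\mP_G\lp\exists X\in\cX_{ult_s}(\k;Q)\mbox{ such that } \forall i,\ |GX_{:,i}|\leq\kappa\1\rp\leq |\cX_{ult_s}(\k;Q)|\,\lp p^{(s)}_{\q}(\k)\rp^{m}.
\end{equation*}
Here $p^{(s)}_{\q}(\k)=\mP(|\g|\leq\kappa)$ for a single row, with $\g\sim\mathcal N(0,Q)$; the rows are independent, which gives the $m$-th power. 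Writing $|\cX_{ult_s}(\k;Q)|=2^{n h^{(s)}_{\q}(\k)+o(n)}$, the right-hand side equals $\lp 2^{h^{(s)}_{\q}(\k)}(p^{(s)}_{\q}(\k))^{\alpha}\rp^{n+o(n)}$. This tends to zero whenever $\alpha>-h^{(s)}_{\q}(\k)\log 2/\log p^{(s)}_{\q}(\k)$. Then (\ref{eq:ultogp10}) holds and $ult_s$-OGP is present, so (\ref{eq:ultogp10a0}) gives the bound. Minimizing over $\q$ (and, implicitly, over $\k$) yields (\ref{eq:thmseq1}).

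Two ingredients remain to be justified.

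\emph{(i) The probabilistic factor.} I must show that $\mP(|\g|\leq\kappa)$ equals the nested integral (\ref{eq:ultseq5}). I will prove by induction on $s$ that $Q^{-1}=\sum_{i=1}^{s+1}c_{wi}\cI^{(i-1)}$, with $\cI^{(0)}=I$ and $\cI^{(i)}$ the block indicator at level $k_i$. The induction step applies Sherman--Morrison--Woodbury blockwise, exactly as in (\ref{eq:ult23ogpeq15a0b0}). Each block of size $k_i$ of the partial inverse has constant row sums $\sum_{j\le i}k_{j-1}c_{wj}$, which is what produces the recursion in (\ref{eq:ultseq3}). Next I show that $c_{w(i+1)}<0$ for $i\ge1$, which follows from $q_i>q_{i+1}$ and from the positivity of the partial row sums, itself inductive. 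This allows each $e^{-\frac{c_{w(i+1)}}{2}(\1^T\bar\g)^2}$ to be linearized with a Gaussian $z_i$ (Hubbard--Stratonovich). After Fubini the coordinates decouple, and the one-dimensional integral is the erf expression $\tilde I_0^{(s)}$. Integrating out the $z_i$ level by level, with the powers $k_i/k_{i-1}$ coming from the identical subblocks, gives (\ref{eq:ultseq5}) with the prefactor $\sqrt{c_{w1}}^{-k}\det(Q)^{-1/2}$.

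\emph{(ii) The combinatorial factor.} This is the main obstacle: I must show that $\frac1n\log_2|\cX_{ult_s}(\k;Q)|\to h^{(s)}_{\q}(\k)$, or at least is bounded above by it, which suffices for the theorem. Fix $X_{:,1}=\1/\sqrt n$, which contributes the leading $1$. Then encode $X$ by its \emph{type}: the fraction $\a_\sigma$ of coordinates carrying each sign pattern $\sigma\in\{\pm\}^{k}$, modulo the fixed first row. The recursive partition of (\ref{eq:ult15ogpeq1ka1b2}) enumerates exactly these refinements. Gram constraints are linear in the type; this is precisely $A^{(k)}_{eq}\a=\b^{(\k)}_{eq}$, with $\b$ adapted to the ultrametric overlaps as in the preceding subsections. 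The number of $X$ of a given type is a multinomial coefficient equal to $2^{n h_v(\a)+O(\log n)}$. The number of types is polynomial in $n$ (for fixed $k$). Summing over feasible types then gives at most $\mathrm{poly}(n)\,2^{n(1+\max h_v(\a))}$.

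The delicate points are two. First, I must check carefully that the recursive $A^{(k)}_{eq}$ encodes all pairwise overlaps, and not only consecutive ones. I would verify this by induction on $k$, using the block description in (\ref{eq:ult15ogpeq1ka1b2}). Second, the optimization is over the continuous relaxation, which only helps an upper bound. Concavity of $h_v$ makes the max well-defined, so the polynomial factor is absorbed and the bound $h^{(s)}_{\q}(\k)$ follows.
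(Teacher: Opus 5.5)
Your proposal is correct and takes the same route as the paper, whose one-line proof simply defers to the level-2 and level-3 derivations. Those derivations consist of a first-moment union bound over $\cX_{ult_s}(\k;Q)$, the probabilistic factor from the level-by-level Sherman--Morrison--Woodbury form of $Q^{-1}$ followed by Hubbard--Stratonovich linearization into nested erf integrals, and the combinatorial factor as $1$ plus the maximal entropy of the sign-pattern partition under the linear overlap constraints. You go further than the paper by making explicit three points it leaves implicit: the sign $c_{w(i+1)}<0$ that justifies the Gaussian linearization, the polynomial number of types, and the fact that an upper bound on the count from the continuous relaxation is enough.
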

\begin{proof}
  Follows by trivial adjustments of the results obtained on the second and third level of ultrametricity.
\end{proof}

\subsection{Feasibility of numerical evaluations}
\label{sec:pracnumeval}

We would like to emphasize several aspects of the numerical evaluations that may otherwise go unnoticed.

Our framework provides a generic strategy for evaluating upper bounds on the maximal allowed constraint densities, $\alpha_{ult_s}(\kappa)$, beyond which ultrametric OGP exists. We demonstrated this through systematic union-bounding, starting with the second and third levels of ultrametricity before generalizing to any level $s$. This strategy involves managing two primary components in (\ref{eq:thmseq1}): 1) The combinatorial factor $h^{(s)}_{\q}(\k)$ and 2) The probabilistic factor $p^{(s)}_{\q}(\k)$. While we have proven convenient analytical forms that allow these components to be handled in principle, their practical application remains a separate consideration.

Determining concrete values for $h^{(s)}{\q}(\k)$ and $p^{(s)}{\q}(\k)$ requires numerical evaluations that present two primary computational challenges: 1) $p^{(s)}_{\q}(\k)$ Evaluations: These are obtained as $s$-fold nested integrals. As $s$ increases, memory requirements can make the calculations computationally intractable; and 2) $h^{(s)}_{\q}(\k)$ Evaluations: These are framed as convex optimization programs with linear constraints. Despite convexity, they are difficult to solve for two reasons: (i) Scale: The number of unknowns (partition intervals) grows exponentially with $k$ and $s$, leading to significant memory constraints that currently limit applicability to lower values.
(ii) Sensitivity: Optimal $\q$ sequences often fall near the boundaries (close to 1), which can hinder formal optimization procedures. In our experience, manual ad-hoc procedures have proven more effective than formal optimization. However, it remains unclear if these strategies will be as successful when translated to larger $k$ and $s$.

At present, numerical evaluations on $s > 3$ levels appear infeasible. Nonetheless, there are several favorable features to keep in mind as potentially helpful in  further explorations. As we will see below, we actually believe that for relatively small values of $s$ (e.g., $s < 10$), the current machinery should reach the $\alpha$-range of approximately $1.58 - 1.59$. This would closely approach the local entropy \cite{Bald20} and parametric RDT \cite{Stojnicalgsbp26} predictions. While our results are currently only bounds—meaning the true $\lim_{s \rightarrow \infty} \alpha_{ult_s}(\kappa)$ values could be lower—we also believe these estimates will eventually collapse to the algorithmic threshold, $\alpha_a(\kappa)$.

The obtained bounds appear most useful for small ratios of $\frac{k_i}{k_{i-1}}$. However, as $s$ grows, the product of these ratios, $k_s$, still reaches a range that is currently computationally prohibitive. Even though the number of unknowns and constraints is large in such scenarios, most constraints are extremely sparse. Determining whether this sparsity can be leveraged in numerical solving remains a promising avenue for future research.

\subsection{$ult$-OGP -- parametric RDT connection}
\label{sec:uogpparrdt}

We now point to some striking parallels between the results that we obtained in previous sections and those related to parametric RDT given in \cite{Stojnicalgsbp26}. In particular we observe connections between the $s$-th level of ultrametricity and the$r=(s+2)$-th lifted level of parametric RDT, i.e., we observe connections between $ult_s$-OGP and $r$-fl-RDT.

We recall that \cite{Stojnicalgsbp26} put forth the concept of \emph{parametric RDT} where the co-called $\c$ sequence  -- a key RDT parameter --  is allowed to have arbitrary ordering, thereby deviating from the physically natural decreasing one. Parametric RDT is then connected to algorithmic thresholds and the following conjecture is formulated.

\begin{conjecture}\cite{Stojnicalgsbp26} [SBP algorithmic threshold] 
\label{thm:conj1} 
Consider a statistical SBP $\mathbf{\mathcal S} \lp G,\kappa,\alpha \rp$ from (\ref{eq:ex1a0}). Define its algorithmic threshold as
\begin{eqnarray}\label{eq:alphaa}
  \alpha_a (\kappa) \triangleq   \max   \left \{\alpha |\hspace{.05in}  \lim_{n\rightarrow \infty}\mP\left (   \mbox{$\mathbf{\mathcal S} \lp G,\kappa,\alpha \rp$  is solvable in polynomial time}   \right ) =1 \right \}.
 \end{eqnarray}
 Let $\alpha_c^{(r)} (\kappa)$ be $r$-th level parametric RDT capacity estimate. One then has for the SBP's statistical computational gap (SCG)
\begin{eqnarray}\label{eq:scg}
 SCG = \alpha_c (\kappa) - \alpha_a (\kappa) = \alpha_c^{(2)} (\kappa) -  \lim_{r\rightarrow \infty}\alpha_c^{(r)} (\kappa) .
 \end{eqnarray}
\end{conjecture}

The second equality follows automatically since $\alpha_c(\kappa)=\alpha_c^{(2)}(\kappa)$. This means that the first interesting developments are expected starting from the third lifting level. We therefore start out our $ult$-OGP -- parametric RDT connections discussion precisely from the third RDT lifting level.

\subsubsection{Connecting $ult_1$-OGP and $3$-fl-RDT}
\label{sec:uogpparrdt1}

Utilizing parametric RDT up to the third level of lifting (i.e., utilizing 3-fl-RDT), \cite{Stojnicalgsbp26} arrived at the following collection of estimates given in Table \ref{tab:tab19}. $\alpha^{(r)}_c(1)$ values given in Table \ref{tab:tab19} are the first three values from Table \ref{tab:tab1}. However, now they are also supplemented with $\c$ and $\p$ sequences which are the key RDT parameters of our interest here. In parallel with RDT estimates, we in Table  \ref{tab:tab19} show the very best bound on $\alpha_{ult_1}(1)$ that we obtained in previous sections. That is $\alpha_{4ogp} = \bar{\alpha}_{ult_1}(1;[1,4])\approx 1.6578$ obtained for $4$-OGP and corresponds to the first level of ultrametricity characterized by $\k$ sequence $\k=[1,4]$.
 
\begin{table}[h]
\caption{Parallel view of key $ult_1$-OGP and $3$-fl-RDT parameters estimates}\vspace{.1in}
\centering
\def\arraystretch{1.2}
 \begin{tabular}{c|c||c|c}
 \hline\hline
\multicolumn{2}{c||}{$ult_1$-OGP}    & \multicolumn{2}{c}{3-fl-RDT} \\
 \hline\hline
  \hspace{-0in}$s$                                              &  $1$       
& \hspace{-0in}$r$                                              &  $3$       
 \\ \hline\hline
   $\q=[q_0,q_1]$    &  $[1,0.984]$    
&      $\hat{\p}=[\hat{\p}_1,\hat{\p}_2]$    &  $[1,0.9852]$   
     \\ \hline\hline
   $\k=[k_0,k_1]$  &    $[1,4]$      
 &
   $\hat{\c}=[\hat{\c}_2^{(s)},\hat{\c}^{(s)}_3]$  &    $[1,4.2629]$      
 \\ \hline\hline
 $\bar{\alpha}_{ult_s}(1;[1,4])$    & \red{$\mathbf{1.6578}$} 
 &  $\alpha_c^{(r)}(1)$   & \red{$\mathbf{1.6576}$} 
  \\ \hline\hline
  \end{tabular}
\label{tab:tab19}
\end{table}

Looking at the last row of Table \ref{tab:tab19} one observes a remarkable similarity between the best upper bound 
$\bar{\alpha}_{ult_1}(1;[1,4])\approx 1.6578$ on the minimal constraint density for which  $\mathbf{\mathcal S} \lp G,\kappa,\alpha \rp$ exhibits $ult_1$-OGP on the one side, and constraint density $\alpha_c^{(3)}(1)\approx 1.6576$ associated with the third level of parametric RDT lifting on the other side. This is precisely what was noted earlier when we discussed the numerical evaluation of $4$-OGP and in particular results  presented in Table \ref{tab:tab4}. However, Table \ref{tab:tab19} allows to draw an even stronger parallel: Not only are the $\alpha$ estimates almost identical, but the same seems to apply to $\q$ and $\hat{\p}$ and $\k$ and $\hat{\c}$ sequences as well. While this could be a coincidence, we actually believe that it points to the existence of an intrinsic connection between these parameters. If such a connection indeed exists then it also likely points towards an intriunsic connection between $ult$-OGP and parametric RDT.

\subsubsection{Connecting $ult_2$-OGP and $4$-fl-RDT}
\label{sec:uogpparrdt2}

We complement the above observations (that relate to the first level of ultrametricity) with a set of similar ones from the second level. To that end, in Table \ref{tab:tab20}, we summarize 4-fl-RDT results obtained in \cite{Stojnicalgsbp26}.  In parallel with the RDT estimates, we also show the very best bound  that we obtained on $\alpha_{ult_2}(1)$. That is $\bar{\alpha}_{ult_2}(1;[1,4,12])\approx 1.6219$ obtained for ultrametric structure characterized by $\k$ sequence $\k=[1,4,12]$.

Looking at the last row of Table \ref{tab:tab20}, one again observes a remarkable similarity between the best upper bound  $\bar{\alpha}_{ult_2}(1;[1,4,12])\approx 1.6219$ on  $\alpha_{ult_2}(1)$ on the one side, and constraint density $\alpha_c^{(4)}(1)\approx 1.6218$ associated with the fourth level of parametric RDT lifting on the other side. Similarity between corresponding sequences $\q$ and $\hat{\p}$ and $\k$ and $\hat{\c}$ might be even more striking.

\begin{table}[h]
\caption{Parallel view of key $ult_s$-OGP and $r$-fl-RDT parameters estimates; $s\in\{1,2\}$, $r\in\{3,4\}$. }\vspace{.1in}
\centering
\def\arraystretch{1.2}
 \begin{tabular}{c|c|c||c|c|c}
 \hline\hline
\multicolumn{3}{c||}{$ult_s$-OGP}    & \multicolumn{3}{c}{r-fl-RDT} \\
 \hline\hline
  \hspace{-0in}$s$                                              &  $1$   &  $2$       
& \hspace{-0in}$r$                                              &  $3$   &  $4$      
 \\ \hline\hline
   $\q=[q_0,\dots,q_s]$    &  $[1,0.984]$  &  $[1,0.9989,0.9745]$    
&      $\hat{\p}=[\hat{\p}_1,\dots,\hat{\p}_{r-1}]$    &  $[1,0.9852]$     &  $[1,0.9988,0.9729]$   
     \\ \hline\hline
   $\k=[k_0,\dots,k_s]$  &    $[1,4]$   &    $[1,4,12]$      
 &
   $\hat{\c}=[\hat{\c}_2^{(s)},\dots,\hat{\c}^{(s)}_{r}]$  &    $[1,4.2629]$   &    $[1,4.1522,12.0786]$      
 \\ \hline\hline
 $\alpha_{ult_s}(1;\k)$    & \red{$\mathbf{1.6578}$}   & \red{$\mathbf{1.6219}$} 
 &  $\alpha_c^{(r)}(1)$   & \red{$\mathbf{1.6576}$}   & \red{$\mathbf{1.6218}$} 
  \\ \hline\hline
  \end{tabular}
\label{tab:tab20}
\end{table}

\subsubsection{Connecting $ult_s$-OGP and $(s+2)$-fl-RDT}
\label{sec:uogpparrdts}

Given the above similarities on the first two levels of ultrametricity, one wonders if they extend to higher levels. As we have emphasized on multiple occasions, continuing numerical evaluations beyond $s=3$ seems very challenging. Nonetheless, we have done some limited  computations for a few $\k$ that can be handled. Among those where $l\leq 12$ the best bound $\bar{\alpha}_{ult_3}(1;[1,3,6,12]) \approx 1.6131$ is obtained for $k=[1,3,6,12]$. The associated parametes estimates are shown in Table \ref{tab:tab21} as well. Also, we show in parallel $5$-fl-RDT prediction $\alpha^{(5)}_c(1)$ from \cite{Stojnicalgsbp26} together with its own associated parameters. The $\alpha$ estimates are fairly close but there seems to be a space for further improvement as well.

\begin{table}[h]
\caption{Parallel view of key $ult_s$-OGP and $r$-fl-RDT parameters estimates; $s=3$, $r=5$.}\vspace{.1in}
\centering
\def\arraystretch{1.2}
 \begin{tabular}{c|c||c|c}
 \hline\hline
\multicolumn{2}{c||}{$ult_s$-OGP}    & \multicolumn{2}{c}{r-fl-RDT} \\
 \hline\hline
  \hspace{-0in}$s$                                              &  $3$       
& \hspace{-0in}$r$                                              &  $5$   
 \\ \hline\hline
   $\q$    &     $[1,0.9992, 0.9920, 0.9580]$    
&      $\hat{\p} $    &   $[1,0.9985, 0.9965, 0.9627]$   
     \\ \hline\hline
   $\k $      &    $[1,3,6,12]$      
 &
   $\hat{\c} $       &    $[ 1,4.3528,12.7310, 29.6479 ]$      
 \\ \hline\hline
 $\alpha_{ult_s}(1;\k)$   & \red{$\mathbf{1.6131}$} 
 &  $\alpha_c^{(r)}(1)$     & \red{$\mathbf{1.6093}$} 
  \\ \hline\hline
  \end{tabular}
\label{tab:tab21}
\end{table}

While at present memory requirements restrict continuing evaluations with larger $k$, we believe that there are $\k$ configurations that indeed allow for further improvements to practically materialize. In fact, choices similar to $\k=[1,4,12,24]$ might already be good candidates (while this is still fairly small example (compared to all available options for $\k$, including those of $\infty$ size), a gigantic amount of memory might be needed to handle it).

The above reasoning and results from Tables \ref{tab:tab19}-\ref{tab:tab21} allow us to formulate the following conjecture.

\begin{conjecture} [$ult$-OGP -- parametric fl-RDT connection (\underline{weak sense})] 
\label{thm:conj2} 
Consider a statistical SBP $\mathbf{\mathcal S} \lp G,\kappa,\alpha \rp$ from (\ref{eq:ex1a0}). Let $\bar{\alpha}_{uls_s}(\kappa)$ be as in (\ref{eq:thmseq1}) (and in (\ref{eq:ult1kogpeq16}), (\ref{eq:ult23ogpeq16}), and (\ref{eq:ult33ogpeq16})) and let $\alpha^{(r)}_c(\kappa)$ be the $r$-th level parametric fl-RDT estimate of $\alpha_c(\kappa)$ as introduced in \cite{Stojnicalgsbp26}. One then has
\begin{eqnarray}\label{eq:conj2eq1}
\lim_{s\rightarrow \infty} \alpha_{uls_s}(\kappa) 
= \lim_{s\rightarrow \infty} \min_{\k}  \bar{\alpha}_{uls_s} (\kappa;\k) 
= \lim_{r\rightarrow \infty} \alpha^{(r)}_c(\kappa).
\end{eqnarray}
Together with Conjecture \ref{thm:conj1} one also has for the SBP's statistical computational gap (SCG)
\begin{equation}\label{eq:conj2scg}
 SCG = \alpha_c (\kappa) - \alpha_a (\kappa) 
 = \alpha_c^{(2)} (\kappa) -  \lim_{r\rightarrow \infty}\alpha_c^{(r)} (\kappa) 
 = \alpha_c^{(2)} (\kappa) -  \lim_{s\rightarrow \infty} \min_{\k} \bar{\alpha}_{ult_s} (\kappa;\k) 
 = \alpha_c^{(2)} (\kappa) -  \lim_{s\rightarrow \infty}\alpha_{ult_s}(\kappa) .
 \end{equation}
\end{conjecture}

The above conjecture effectively predicts that $ult_s$-OGP and parametric $r$-fl-RDT in the limit of large $s$ and $r$ have the same associated critical constraint density. Moreover, such a critical constraint density is postulated as precisely the algorithmic threshold. Somewhat remarkably, the conjecture also predicts that it is achieved by the critical constraint density associated with union-bounding analytical methodology.

The above can be viewed as a \emph{weak sense} analogy/equivalence between $ult$-OGP and parametric fl-RDT. The strong sense counterpart goes even a step further.

\begin{conjecture} [$ult$-OGP -- parametric fl-RDT connection (\underline{strong sense})] 
\label{thm:conj3} 
With the setup of Conjecture \ref{thm:conj2} one also has for any $s\geq 1$
\begin{eqnarray}\label{eq:conj3eq1}
  \alpha_{uls_s}(\kappa)  \leq  \alpha^{(s+2)}_c(\kappa).
\end{eqnarray}
(The strongest sense would actually allow even an equality in (\ref{eq:conj3eq1}) for some $s$ (including possibly all $s$).)
 \end{conjecture}

In the light of the above conjectures, we can now revisit our guess that $\k=[1,4,12,24]$ might already be a good candidate to approach $\alpha^{(5)}_c(1)\approx 1.6093$ on the third level of ultrametricity. The guess is basically motivated by the  recognition that the parametric RDT gives on the fifth lifting level $\approx 1.6093$ with $\c$ parameters roughly $\hat{\c}^{(s)} = [1, 4.3528, 12.7310, 29.6479]$. If our conjectures are correct in the $s\rightarrow\infty$ limit, then they might not be that far away for finite $s$ scenarios. One can then establish a possible 
\begin{equation} \label{eq:connect1a0}
\bl{\textbf{\emph{ult-OGP -- parametric fl-RDT isomorphism:}}}  \quad   \quad   \quad \frac{\hat{\c}^{(s)}_{i+1}}{\hat{\c}^{(s)}_{i}}\approx\frac{k_i}{k_{i-1}} \quad \mbox{ and } \quad 
\hat{\p}= \q.
\end{equation}
Keeping in mind the above mentioned $\hat{\c}^{(s)}$ obtained on the fifth level of RDT lifting, (\ref{eq:connect1a0}) would then give estimates for presumably close to optimal $\k$ sequence of the following type
\begin{equation} \label{eq:connect1a1}
\frac{k_1}{k_0}\approx \frac{\hat{\c}^{(s)}_2}{\hat{\c}^{(s)}_1}\approx 4.3528 \quad \mbox{ and } \quad 
\frac{k_2}{k_1}\approx \frac{\hat{\c}^{(s)}_3}{\hat{\c}^{(s)}_2}\approx 2.9248 \quad \mbox{ and } \quad 
\frac{k_3}{k_2}\approx \frac{\hat{\c}^{(s)}_4}{\hat{\c}^{(s)}_3}\approx 2.3288.
\end{equation}
Since elements of $\k$ are restricted to integers rounding further gives
\begin{equation} \label{eq:connect1a2}
\frac{k_1}{k_0}\approx \left [\frac{\hat{\c}^{(s)}_2}{\hat{\c}^{(s)}_1}\right ] \approx [4.3528] =4 \quad \mbox{ and } \quad 
\frac{k_2}{k_1}\approx \left [\frac{\hat{\c}^{(s)}_3}{\hat{\c}^{(s)}_2} \right ] \approx [2.9248 ] = 3 \quad \mbox{ and } \quad 
\frac{k_3}{k_2}\approx \left [\frac{\hat{\c}^{(s)}_4}{\hat{\c}^{(s)}_3}\right ]\approx [2.3288 ] =2,
\end{equation}
which precisely matches our suggestion $\k=[1,4,12,24]$. This is on the third ultrametric level. One can then proceed in a similar fashion on higher levels as well utilizing parametric RDT results. We believe that the obtained $\k$ choices will be fairly close to the optimal ones.

In the $s\rightarrow \infty$ limit, one might expect a 
\begin{equation} \label{eq:connect1a3}
\bl{\textbf{\emph{\red{complete} ult-OGP -- parametric fl-RDT isomorphism:}}}  \quad   \quad   \quad \frac{\hat{\c}^{(s)}_{i+1}}{\hat{\c}^{(s)}_{i}} = \frac{k_i}{k_{i-1}} \quad \mbox{ and } \quad 
\hat{\p}= \q.
\end{equation}
This, on the other hand, is highly unlikely to be achievable for integer sequences $\k$ that are rather small. One may be tempted to believe that scaled sequences
\begin{equation} \label{eq:connect1a4}
\k=[k_0,k_1,\dots,k_s]k_{sc} \mbox{ where } k_{sc}\gg 1,
\end{equation}
could be the fully optimal ones. Then 
\begin{equation} \label{eq:connect1a5}
\frac{\hat{\c}^{(s)}_{i+1}}{\hat{\c}^{(s)}_{i}} \rightarrow \lim_{k_{sc}\rightarrow \infty}\frac{k_ik_{sc}}{k_{i-1}k_{sc}}, 
\end{equation}
can be satisfied, by simply choosing
\begin{equation} \label{eq:connect1a6}
\forall i, \quad  k_{i}\rightarrow \hat{\c}^{(s)}_{i+1} \quad \mbox{ and }\quad  k_{i}k_{sc}\in\mZ. 
\end{equation}
This would practically mean that the single vectors that we took as the basic cluster unit in all our OGP consideration, should be replaced by a cluster of infinite length. This may then almost fully resemble the parametric fl-RDT picture where $\hat{\c}^{(s)}$ is also obtained after scaling the original $\hat{\c}$ by $\infty$. The only difference will be that fl-RDT predict partial form of lifting which implies orthogonal smallest clusters, but the effect of that might be negligible in the $s\rightarrow\infty$ and $r\rightarrow\infty$ limits.

Finally, we should point out another property that seemingly must follow from the above conjectures. An arbitrary ordering of the $\c$ sequence is allowed within the parametric RDT. On the other hand, within the OGP realm one has (by the definition of the ultrametric construction) that $\k$ sequence is increasing. Since the above $ult$-OGP -- parametric RDT isomorphism predicts that $\k$ sequence might match RDT's $\c$-sequence, one then has that the $\c$-sequence should be increasing. This is remarkable as it is directly opposite from the natural (physical) decreasing ordering. In other words, allowing for arbitrary ordering in parametric RDT might be even more than what is needed. Instead just flipping from decreasing to increasing might suffice. We complement this observation by noting that in all the problems where we have applied the parametric RDT to estimate algorithmic thresholds (including the negative Hopfield model \cite{Stojniccluphop25}, ABP \cite{Stojnicalgbp25}, and the SBP itself \cite{Stojnicalgsbp26}) we indeed observed increasing $\c$ sequences.

\subsubsection{The powerful role of advanced OGPs here and in other problems}
\label{sec:ogpother}

The early OGP like considerations appeared in replica methods and analyses of satisfiability problems \cite{AchlioptasCR11,HMMZ08,MMZ05} where the importance of understanding the geometry of the solution space was emphasized as a potentially key component in explaining efficient algorithmic solvability or lack thereof. 

Starting with \cite{GamarSud14,GamarSud17,GamarSud17a}, the interest in studying these concepts on mathematically rigorous grounds rejuvenated. Various forms of OGP have been considered throughout the literature over the last decade. An excellent introduction to basic OGP principles together with rationale for their potential algorithmic importtance/relevance can be found in an early survey \cite{Gamar21}. The early considerations \cite{Gamar21,GamarSud14,GamarSud17,GamJag21,GamarnikJW20,CGPR19} typically relied on 
classical $2$-OGPs. However, it soon became clear that they might not be general enough to always capture all algorithmically related intricacies of a potentially complex solution space. 

More advanced OGP variants were then pursued, including the so-called $m$-OGP, ensemble-OGP, or star-OGP \cite{RahVir17,GamarSud17a,GamKizPerXu22,GamKizPerXu23,Kiz23} and ladder-OGP \cite{Wein22,BreHuang21}.
To introduce and analyze a generic form of Lipshitz algorithmic hardness, \cite{HuangS22,HuangS22a} considered algorithms that satisfy overlap concentration property. Along the way they significantly extended previous OGP notions to encompass a wider branching-OGP (BOGP) class. These concepts are then utilized in a variety of ways to study statistical computational  gaps and different types of associated algorithmic hardness in various spherical \cite{HuangSell24,HuangSell23,HuangS22,HuangS22a} and Ising  \cite{HuangS22,HuangS22a} glasses. While it remains notoriously hard to prove generic algorithmic hardness, \cite{HuangSell24,HuangSell23,HuangS22,HuangS22a} consider certain types of  Lipshitz algorithms  and prove that for many spin glass problems they exhibit barriers and computational gaps (for analogous low degree polynomial hardness, albeit with a different OGP approach or through a reduction from the Lipshitz hardness, see e.g.,  \cite{Sellke25,HuangSell25}).

The ultrametric OGP that we consider here is closely related to BOGP. While BOGP is typically associated with continuous forms of overlap functionals and PDE formulations, we here utilize a discrete $ult_s$-OGP form where $s$ is an integer that governs the level of discreteness. Such a discretization is critically important component of our approach as it further allows to utilize union-bounding strategy and uncover that it potentially can be remarkably successful. Moreover, in an approximate fashion, it seems to work very well already for fairly small $s$ values and proportionally small sizes of the set of ultrametric vectors $k_s$ (smaller even than a few tens). In return, the observed success of union-bounding provides a key link that enables to connect the $ult$-OGP and parametric RDT. Since the parametric RDT has already been postulated to accurately predict the algorithmic threshold $\alpha_a(\kappa)$, the newly uncovered union-bounding $ult$-OGP -- parametric RDT potential connection allows to incorporate OGP into the predicated statistical computational gaps picture designed in \cite{Stojnicalgsbp26}. Combining this with the fact that  \cite{Stojnicalgsbp26} also connected parametric RDT to local entropy (LE), we actually believe that all three concepts, $ult$-OGP, parametric RDT, and LE ultimately (in their own limiting renderings) may very well converge to the same analytical structure at the very same critical constraint density corresponding to the algorithmic threshold $\alpha_a(\kappa)$.  This also fairly well concurs with  what was observed in $\alpha\rightarrow 0$ regime in \cite{Stojnicalgsbp26}. Namely,  on the 3rd lifting level RDT finds a convenient relation  $\kappa\approx 1.2385\sqrt{\frac{\alpha_a}{-\log\lp \alpha_a \rp}}$ which scaling-wise matches \cite{GamKizPerXu22}'s OGP based predictions and identically (including the precise values of associated constants) matches local entropy based predictions of \cite{BarbAKZ23}. In other words, our results here extend the OGP-LE connection beyond $\alpha\rightarrow 0$ regime and position parametric RDT as a mathematical glue that connects the two.

\section{Conclusion}
\label{sec:conc}

We studied SBP's ultrametric OGPs  and their potential connection to statistical computational gaps (SCGs). For $s$-level ultrametric OGPs  ($ult_s$-OGP) we rigorously upper-bound associated constraint densities $\alpha_{ult_s}(\kappa)$. The analytical methodology consists of utilizing union-bounding and adequately formulating its combinatorial and  probabilistic components as convex and nested integration programs that can be solved numerically. Conducting numerical evaluations for concrete canonical margin $\kappa=1$, we obtained bounds on the first two ultrametric levels  $\bar{\alpha}_{ult_1}(1)\approx 1.6578$ and $\bar{\alpha}_{ult_2}(1)\approx 1.6219$ and observed that they are remarkably close to the  constraint density estimates $\alpha_c^{(3)}(1)\approx 1.6576$ and $\alpha_c^{(4)}(1)\approx 1.6218$ obtained on the 3rd and 4th lifting level of parametric RDT in \cite{Stojnicalgsbp26}. Moreover, an excellent agrement of overlap values and the ultrametric clusters' relative sizes is recognized as well. Keeping these observations in mind, we propose a couple of conjectures to link $ult$-OGP, parametric RDT, and algorithmic thresholds/computational gaps: (i) $\alpha_a(\kappa)=\lim_{s\rightarrow\infty} \alpha_{ult_s}(\kappa) =\lim_{s\rightarrow\infty} \bar{\alpha}_{ult_s}(\kappa) = \lim_{r\rightarrow\infty} \alpha_{c}^{(r)} (\kappa) $; and (ii) $ \alpha_{ult_s}(\kappa) \leq \alpha_{c}^{(s+2)}(\kappa)$, for any $s\geq 1$ with possible equality for some (maybe even all) $s$.  In other words, in the limit of large ultrametricity and lifting levels, the constraint densities associated with the $ult$-OGP and parametric RDT meet at the value of the algorithmic threshold. In a stronger sense they might even meet for some finite levels as well.

We also believe that the proposed ultrametric OGP -- parametric RDT -- computational gaps connection might be a generic property that extends beyond SBPs. Along the same lines, identifying other optimization problems where the presented methodologies may be applicable seems worth of further exploration.

%
%
%
%
%
%
%

\begin{singlespace}
\bibliographystyle{plain}
\bibliography{nflgscompyxRefs}
\end{singlespace}

\end{document}